\documentclass{article}
\PassOptionsToPackage{sort&compress}{natbib}
\usepackage[main,preprint]{neurips_2026}
\makeatletter\renewcommand{\@noticestring}{}\makeatother

\usepackage{algorithm}
\usepackage[noend]{algpseudocode}
\usepackage[utf8]{inputenc}
\usepackage{amssymb,amsmath,amsthm}
\usepackage{mathtools}
\usepackage{graphicx}
\usepackage{float}
\usepackage{xcolor}
\usepackage{hyperref}
\setcitestyle{numbers,square}
\usepackage[shortlabels]{enumitem}
\usepackage{booktabs}
\usepackage{threeparttable}
\usepackage{mdframed}
\usepackage{caption}
\newif\ifpreprintmode
\preprintmodetrue  
\ifpreprintmode
  \newcommand{\repomain}{https://github.com/shaulTol/ABM}
\else
  \newcommand{\repomain}{https://anonymous.4open.science/r/ABM-1423/README.md}
\fi

\newcommand{\R}{\mathbb{R}}
\newcommand{\E}{\mathbb{E}}
\newcommand{\Var}{\mathrm{Var}}
\newcommand{\Cov}{\mathrm{Cov}}
\newcommand{\tr}{\mathrm{tr}}
\newcommand{\be}{\begin{equation}}
\newcommand{\ee}{\end{equation}}
\newcommand{\bea}{\begin{align}}
\newcommand{\eea}{\end{align}}
\newtheorem{theorem}{Theorem}
\newtheorem{proposition}{Proposition}
\newtheorem{corollary}{Corollary}
\newtheorem{lemma}{Lemma}
\newtheorem{definition}{Definition}

\newtheorem{remark}{Remark}

\title{Measuring and Decomposing Mode Separation\\via the Canonical Diffusion}

\author{
  Shaul Tolkovsky \quad Ori Meidler \quad Or Zuk\\
  Department of Statistics and Data Science\\
  Hebrew University of Jerusalem\\
  \texttt{\{shaul.tolkovsky,ori.meidler,or.zuk\}@mail.huji.ac.il}
}

\begin{document}
\maketitle

\begin{abstract}
Mode separation, namely how sharply a distribution fragments into barrier-separated clusters, is a fundamental geometric property of densities, difficult to quantify in high dimensions. It is structurally distinct from dispersion, yet existing tools fall short: differential entropy rises with spread regardless of fragmentation, PCA orders directions by variance regardless of barriers, and mutual information requires a mixture decomposition one usually does not have. We measure mode separation through a single stochastic process intrinsic to the density: a unique reversible diffusion with $f$ as its stationary distribution and constant scalar diffusion coefficient. We extract two readouts from its autocovariance matrix: SSA (Sum of Squared Autocorrelations), a scalar barrier-sensitive measure; and DA (Dominant Autocorrelation directions), linear projections ordered by metastability rather than variance. Under an isotropic-Gaussian null, we derive a closed-form spectrum for the empirical autocovariance that generalizes Marchenko--Pastur, with an analytic upper edge that selects the lag at which DA is read off. Both readouts use only samples and a score function, scaling to high dimensions through pretrained score-based generative models via Tweedie's identity. We apply our framework to three settings: (i) synthetic Gaussian mixtures, where SSA tracks mutual information; (ii) SDXL text-to-image generations, where SSA and DA capture structure that entropy and PCA miss; and (iii) molecular dynamics of alanine dipeptide, where DA recovers the known slow backbone dihedrals from static samples alone.
\end{abstract}

\section{Introduction}
\label{sec:intro}

Mode separation is a geometric property of a density: how sharply its mass fragments into barrier-separated regions. Dispersion, how widely mass is spread, is a different property. Differential entropy rises with spread regardless of whether the mass is unimodal or fragmented; PCA orders directions by variance regardless of whether that variance comes from a wide mode or two narrow ones; mutual information requires a mixture decomposition one usually does not have. What is missing is an intrinsic, continuous quantity sensitive to barriers, together with a method for identifying the linear directions along which barriers are most pronounced.

We propose reading two quantities off a single \emph{canonical
diffusion} attached to $f$ (Theorem~\ref{thm:uniqueness}), depending
on $f$ only through its score and first two moments. Its generator's spectrum
is the same object diffusion maps~\cite{coifman2006,nadler2005diffusion}
target via graph Laplacians; we read off two new quantities using the
time-lagged sample covariance, sidestepping kernel-Laplacian
dimensionality issues and connecting to score-based generative models
via Tweedie's identity. \textbf{SSA}, the trace-integral of the
autocovariance, is a scalar multimodality measure with no analog in
prior diffusion-process spectral methods (Propositions~\ref{prop:ssa_monotonicity}
and~\ref{prop:ssa_k_modes}: monotone in generator eigenvalues,
saturating in mode count).
\textbf{DA}, the eigendecomposition of the same autocovariance, gives
linear directions ordered by metastability rather than variance. We
characterize the null spectrum of the empirical autocovariance exactly:
Theorem~\ref{thm:limit-spectrum} shows it is the free additive
convolution of two scaled Marchenko--Pastur laws at every $\gamma = d/N \in (0, \infty)$,
with an analytic upper edge $\lambda_+(\tau)$.

We validate on synthetic GMMs, where SSA tracks mutual information
on rank ordering and diverges from entropy. On SDXL text-to-image
generations across eleven prompts, SSA peaks at intermediate guidance
for every prompt, near the practitioner default $w = 7.5$, while
entropy declines monotonically with guidance. On alanine dipeptide, DA recovers the ordering of the known slow dihedral coordinates $\phi$ and $\psi$ from static samples alone.

\section{Related Work}
\label{sec:related}

\paragraph{Prior scalar measures of multimodality.}
Existing scalar quantities either operate in restricted settings or measure
a different property. The dip test~\cite{hartigan1985} is strictly
one-dimensional and returns a binary verdict on unimodality. Mutual
information $I(X;Z)$ between observations and a latent component label
requires access to the mixture decomposition, which in many applied scenarios is unavailable.
Cluster counts depend on a choice of $k$ and an algorithm. Differential
entropy and its estimators measure spread rather than fragmentation, ranking
a wide unimodal distribution above a sharply bimodal one of equal variance
(Section~\ref{sec:exp_entropy}). SSA fills this gap: a continuous scalar depending on $f$ only through its score and first two moments, responding to barriers rather than dispersion.

\paragraph{Prior directional decompositions.}
Methods for extracting slow or mode-separating directions fall into two
families. Kernel-based approaches work from static samples: diffusion maps~\cite{coifman2006} embed samples via a data-driven heat kernel, and Nadler et al.~\cite{nadler2005diffusion} showed that the normalized graph-Laplacian spectrum converges to the eigen-system of a Fokker--Planck operator essentially identical to $\mathcal{L}_f$. DA thus targets the same spectral object diffusion maps estimate. Spectral Map~\cite{rydzewski2023,rydzewski2024} trains a
neural network to embed samples by maximizing a learned
spectral gap. Trajectory-based approaches require temporally ordered
simulation data: TICA~\cite{molgedey1994,perez2013}, the standard linear
method in molecular simulation, extracts slow directions from
time-lagged covariances of a trajectory. DA shares the generator spectrum of diffusion maps and Spectral Map but reads off as a ranked linear basis in $\R^d$ from the time-lagged covariance $\hat C(\tau)$ rather than a nonlinear sample embedding. It admits the closed-form noise theory of Theorem~\ref{thm:limit-spectrum}, and uses pretrained denoisers as score oracles where kernel methods suffer the curse of dimensionality (e.g.\ SDXL latent space, $d = 65{,}536$). The scalar SSA, derived from the same generator, has no analog in any of these frameworks.

\section{Problem setup and the canonical diffusion}
\label{sec:canonical}

Let $f : \R^d \to (0, \infty)$ be a probability density. To measure how fragmented $f$ is into barrier-separated modes, we attach to it a canonical stochastic process whose mixing reflects that barrier structure. A reversible It\^o diffusion with stationary density
$f$ has a self-adjoint generator $\mathcal{L}_f$ on $L^2(f\,dx)$ with
discrete spectrum $0 = \mu_1 < \mu_2 \leq \mu_3 \leq \cdots$ and
orthonormal eigenfunctions $\{\psi_k\}$; small eigenvalues correspond to
slow relaxation modes, each reflecting a barrier the process crosses
rarely, so the number and magnitude of near-zero eigenvalues encode the
metastable structure of $f$~\cite{Pavliotis2014}.
We single out a specific diffusion by reversibility with respect to $f$ and constant scalar diffusion $A_0 = \sigma_f^2 I_d$, $\sigma_f^2 = \tr(\Cov_f(X))/d$, giving the \emph{canonical diffusion} of $f$:
\begin{equation}
dX_t \;=\; \tfrac12\,\sigma_f^2\,\nabla \log f(X_t)\,dt \;+\; \sigma_f\,dW_t,
\label{eq:canonical_sde}
\end{equation}
with $W_t$ a standard $d$-dimensional Brownian motion. The construction
depends on $f$ only through its score $\nabla\log f$ and its first two moments. Motivations for each design choice are in Appendix~\ref{app:design_rationale}.

\begin{theorem}[Uniqueness of the canonical diffusion]
\label{thm:uniqueness}
Fix any $A_0\succ 0$ and let $f$ be sufficiently regular. Among reversible It\^o diffusions with stationary density $f$ and constant diffusion matrix $A_0$, the drift is uniquely $b(x) = \tfrac12 A_0\nabla\log f(x)$. (Detailed assumptions and proof: Appendix~\ref{app:uniqueness_proof}.)
\end{theorem}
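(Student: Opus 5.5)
The plan is to characterize reversibility through the generator and read off the drift by testing against suitable functions. For the diffusion $dX_t = b(X_t)\,dt + A_0^{1/2}\,dW_t$, the generator acting on smooth compactly supported $\phi$ is
\[
\mathcal{L}\phi = b\cdot\nabla\phi + \tfrac12 \tr\bigl(A_0 \nabla^2\phi\bigr).
\]
Reversibility with respect to $f$ (detailed balance) is equivalent to symmetry of $\mathcal{L}$ in $L^2(f\,dx)$:
\[
\int \phi\,\mathcal{L}\chi\, f\,dx = \int \chi\,\mathcal{L}\phi\, f\,dx \quad \text{for all } \phi,\chi\in C_c^\infty .
\]
The regularity assumptions are that $f\in C^2$ is strictly positive and that $b$ is locally Lipschitz with non-explosive dynamics. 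These ensure that $C_c^\infty$ is a core and that this symmetry condition is the right characterization.

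First, the existence direction. With $b_0 = \tfrac12 A_0\nabla\log f$, integration by parts gives
\[
\int \phi\,\mathcal{L}\chi\, f\,dx = -\tfrac12\int \nabla\phi^\top A_0\nabla\chi\, f\,dx .
\]
This works because $\nabla\cdot(fA_0\nabla\chi)/f = \tr(A_0\nabla^2\chi) + (A_0\nabla\log f)\cdot\nabla\chi$. The resulting Dirichlet form is manifestly symmetric, so $b_0$ gives a reversible diffusion. Stationarity is the special case $\chi\equiv 1$ locally.

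Second, uniqueness. Take any admissible $b$ and write $b = b_0 + c$. The symmetric parts of the two generators coincide, so symmetry of $\mathcal{L}$ forces
\[
\int \bigl(\phi\, c\cdot\nabla\chi - \chi\, c\cdot\nabla\phi\bigr) f\,dx = 0
\quad \text{for all } \phi,\chi\in C_c^\infty .
\]
Integrating the second term by parts gives $\int \chi\,\nabla\cdot(\phi f c)\,dx$, and expanding this shows the condition is equivalent to
\[
\int \phi\,\chi\,\nabla\cdot(fc)\,dx + 2\int \chi f\, c\cdot\nabla\phi\,dx = 0 .
\]
Now choose $\phi\equiv 1$ on the support of $\chi$; this is allowed because cutoffs can be taken equal to $1$ on a neighborhood. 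The condition then gives $\int\chi\,\nabla\cdot(fc)\,dx = 0$ for all $\chi$, i.e.\ $\nabla\cdot(fc)=0$, which is just stationarity. Substituting back leaves
\[
\int \chi f\, c\cdot\nabla\phi\,dx = 0
\]
for all $\chi,\phi$. Since $f>0$, this gives $c\cdot\nabla\phi = 0$ pointwise for every test $\phi$. Choosing $\phi$ locally equal to $x_i$ yields $c_i = 0$, so $b=b_0$.

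The main obstacle is justifying the equivalence between pathwise reversibility (time-reversal invariance of the stationary process) and $L^2(f)$-symmetry of the generator on $C_c^\infty$. The cleanest route is to cite the standard result (e.g.\ \cite{Pavliotis2014}) that a stationary Markov diffusion is reversible iff its semigroup is self-adjoint in $L^2(f)$. Self-adjointness of the semigroup then implies symmetry of the generator on the core, which is the direction uniqueness actually needs.

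An alternative that avoids generator domain issues is the Kolmogorov reversibility (zero probability flux) criterion. This criterion requires the stationary flux
\[
J = bf - \tfrac12 A_0\nabla f
\]
to vanish identically. Reversibility forces $J=0$ rather than merely $\nabla\cdot J=0$, since the time-reversed drift equals $-b + A_0\nabla\log f$ and must coincide with $b$. This gives $b = \tfrac12 A_0\nabla\log f$ directly, and I would present it as a remark confirming the computation.
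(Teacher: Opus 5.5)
Your argument is correct. Your closing remark is in fact the paper's proof: the appendix writes the Fokker--Planck current $J_f = bf - \tfrac12 A_0\nabla f$, cites the characterization ``reversible $\iff J_f\equiv 0$'' \cite[Ch.~4]{Pavliotis2014}, and solves for $b$ in one line.

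Your main route is genuinely different. You take self-adjointness of the generator in $L^2(f\,dx)$ as the definition of reversibility and work weakly against $C_c^\infty$ test functions. The two routes are tied together by the identity
\[
\int \bigl(\phi\,\mathcal{L}\chi - \chi\,\mathcal{L}\phi\bigr)\, f\,dx \;=\; \int \bigl(\phi\,\nabla\chi - \chi\,\nabla\phi\bigr)\cdot J_f\,dx .
\]
So your perturbation $c = b - b_0$ is exactly $J_f/f$. Your two cutoff steps (first $\nabla\cdot(fc)=0$, then $c\cdot\nabla\phi = 0$) amount to proving that symmetry of $\mathcal{L}$ on test functions forces $J_f \equiv 0$. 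In other words, you derive the equivalence that the paper imports.

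What your route buys:
\begin{itemize}
\item a self-contained argument from the more primitive semigroup definition of reversibility;
\item no global assumption that the tails of $f$ kill boundary terms, since compactly supported test functions do that work;
\item an explicit existence check that $b_0$ yields a symmetric Dirichlet form, which the paper leaves implicit in the cited ``iff''.
\end{itemize}
The paper's version is shorter but puts all the content in the cited characterization.

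One small refinement: for uniqueness you only need $C_c^\infty\subset D(\mathcal{L})$, with $\mathcal{L}$ acting there as the differential operator, so that reversibility yields symmetry on test functions. The core / essential self-adjointness property you invoke matters only for the existence direction.
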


\begin{proposition}[Similarity Invariance]
\label{prop:similarity_invariance}
The canonical diffusion is covariant under similarity transformations: if $Y = cQX + b$ for any $c > 0$, orthogonal $Q$, and $b \in \R^d$, then $Y_t = cQX_t + b$ is the canonical diffusion for the density of $Y$. In particular, the generator spectrum is preserved, so SSA and DA are invariant under rotation, uniform rescaling, and translation. (Proof: Appendix~\ref{app:similarity_proof}.)
\end{proposition}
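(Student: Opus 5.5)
The plan is to apply Itô's formula to the affine map $Y_t = cQX_t + b$, express the resulting SDE in terms of the density $g$ of $Y$, and check that it coincides with the canonical SDE~\eqref{eq:canonical_sde} for $g$.

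\textbf{Step 1 (density and score).} By change of variables, $g(y) = c^{-d} f\bigl(c^{-1}Q^\top(y-b)\bigr)$. Hence
\[
\nabla\log g(y) = c^{-1} Q\,\nabla\log f(x), \qquad x = c^{-1}Q^\top(y-b).
\]

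\textbf{Step 2 (scale parameter).} The covariance transforms as $\Cov_g(Y) = c^2 Q\,\Cov_f(X)\,Q^\top$. Since the trace is invariant under orthogonal conjugation, $\sigma_g^2 = c^2\sigma_f^2$, so $\sigma_g = c\,\sigma_f$.

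\textbf{Step 3 (the SDE).} Because the map is affine, Itô's formula has no second-order term. Thus
\[
dY_t = cQ\,dX_t = \tfrac12 c\,\sigma_f^2\, Q\nabla\log f(X_t)\,dt + c\,\sigma_f\, Q\,dW_t .
\]
For the drift, substituting Step 1 gives $\tfrac12 c\,\sigma_f^2\,Q\nabla\log f(X_t) = \tfrac12 c^2\sigma_f^2\,\nabla\log g(Y_t) = \tfrac12\sigma_g^2\nabla\log g(Y_t)$. For the noise, $\widetilde W_t := QW_t$ is again a standard Brownian motion, since $QQ^\top = I$. So $dY_t = \tfrac12\sigma_g^2\nabla\log g(Y_t)\,dt + \sigma_g\, d\widetilde W_t$, which is exactly the canonical diffusion of $g$. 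Theorem~\ref{thm:uniqueness} confirms this is \emph{the} canonical diffusion of $g$ (reversible, stationary density $g$, diffusion matrix $\sigma_g^2 I$); reversibility also transfers directly, since a bijective image of a reversible stationary process is reversible.

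\textbf{Step 4 (spectrum and readouts).} The generators are conjugate via the unitary map $U:L^2(g)\to L^2(f)$, $(U\varphi)(x) = \varphi(cQx+b)$. Indeed $\mathcal L_g\varphi = \tfrac12\sigma_g^2(\Delta\varphi + \nabla\log g\cdot\nabla\varphi)$, and the chain rule gives $\mathcal L_f(U\varphi) = U(\mathcal L_g\varphi)$. The factors $c^2$ from the derivatives cancel against $\sigma_g^2 = c^2\sigma_f^2$, and the $Q$'s cancel by orthogonality. So the spectra coincide.

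For the readouts, the autocovariance transforms as $C_Y(\tau) = c^2 Q\,C_X(\tau)\,Q^\top$, with the translation $b$ dropping out under centering. The main point needing care is that SSA and DA are read off from normalized autocovariances. The trace of $C_Y(\tau)$, once normalized by $\sigma_g^2$ or by the lag-zero covariance, loses the $c^2$ factor, so SSA is invariant. The DA eigenvalues are unchanged, and the DA eigenvectors rotate by $Q$, which is the expected covariance of directions.

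One last detail is that stationarity of $X$ carries over to $Y$ in law. I would also note that, strictly, $Y_t$ is defined on the same probability space as $X_t$, driven by the rotated Brownian motion $\widetilde W_t$.
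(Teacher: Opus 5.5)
Your proposal is correct and is essentially the paper's proof: the same change-of-variables score identity, $\sigma_g = c\,\sigma_f$, It\^o on the affine map with $\widetilde W_t = QW_t$, and a unitary change-of-variables operator intertwining the two generators (you take it in the direction $L^2(g)\to L^2(f)$, the inverse of the paper's $L^2(f)\to L^2(f_Y)$). Your direct identity $C_Y(\tau) = c^2 Q\,C_X(\tau)\,Q^\top$ usefully covers the weights $w_k$, which the paper's ``spectrum preserved'' remark leaves implicit. One small slip: the DA eigenvalues of $C_Y(\tau)$ are $c^2\lambda_j(\tau)$, not unchanged, so what is invariant is their ordering and ratios (or trace-normalized values), together with the $Q$-rotated eigenvectors.
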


\paragraph{Robustness to the coefficient choice.}
We fix $A_0 = \sigma_f^2 I_d$. For scalar alternatives $A_0 = cI$ ($c>0$), DA directions are unchanged and SSA is invariant (Prop.~\ref{prop:similarity_invariance}). For general $A_0\succ 0$, Rayleigh bounds (App.~\ref{app:robustness}) control slow-eigenvalue shifts, and in the deep-metastable regime barrier-height exponents dominate; DA may depend on $A_0$. The same holds for kinetic Langevin in the deep-barrier limit.

\paragraph{Autocovariance and its spectral representation.}
Both of our readouts derive from the stationary autocovariance of the centered process $Y_t = X_t - \bar x_f$, where $\bar x_f = \E_f[X]$. Let $C(\tau) = \E_f[Y_\tau Y_0^\top]$ and $\rho(\tau) = \tr C(\tau) / \tr C(0)$. Define projection coefficients
$\alpha_k = \int (x - \bar x_f)\psi_k(x) f(x)\,dx$, with $\alpha_1 = 0$
since $\psi_1 \equiv 1$. Design choices in the definition of $\rho(\tau)$ (trace normalization, centering by $\bar x_f$) are motivated in
Appendix~\ref{app:design_rationale}. The empirical estimator from
$N$ canonical-diffusion trajectories of length $T+1$ (simulated via
the score; Algorithm~\ref{alg:ssa_da}) is the trajectory-averaged
lag-$\tau$ cross product
\begin{equation}
\hat C(\tau) \;:=\; \frac{1}{N(T+1-\tau)}
\sum_{n=1}^{N}\sum_{t=0}^{T-\tau}
(X_t^{(n)} - \bar x)(X_{t+\tau}^{(n)} - \bar x)^\top,
\label{eq:c_hat_def}
\end{equation}
with $\bar x = (1/[N(T+1)]) \sum_{n,t} X_t^{(n)}$, and
$\hat C^{\mathrm{sym}}(\tau) = \tfrac{1}{2}[\hat C(\tau) + \hat C(\tau)^\top]$.
At $\tau = 0$ this is the sample covariance. Theorem~\ref{thm:limit-spectrum} analyzes the simple-pair analogue exactly; we use the trajectory-averaged $\hat C(\tau)$ operationally (Section~\ref{sec:da}).

\begin{lemma}[Spectral representation of $C(\tau)$]
\label{lem:spectral}
The stationary autocovariance satisfies:
\[
C(\tau) = \sum_{k \geq 2} e^{-\mu_k \tau}\,\alpha_k \alpha_k^\top,
\quad
\rho(\tau) = \sum_{k \geq 2} e^{-\mu_k \tau}\, w_k,
\quad
w_k = \tfrac{\|\alpha_k\|^2}{\sum_{j \geq 2}\|\alpha_j\|^2}
\]
\end{lemma}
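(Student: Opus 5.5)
The plan is to use the spectral theorem for the self-adjoint generator $\mathcal{L}_f$ on $L^2(f\,dx)$ together with the Markov semigroup representation of stationary expectations. Write $P_\tau = e^{-\tau\mathcal{L}_f}$ (sign convention so that $\mathcal{L}_f\psi_k = \mu_k\psi_k$ with $\mu_k\ge 0$), so that for stationary $X_0\sim f$ and any $g,h\in L^2(f)$ we have $\E_f[h(X_0)g(X_\tau)] = \langle h, P_\tau g\rangle_f$. Applying this componentwise with $g(x) = x_i - \bar x_{f,i}$ and $h(x) = x_j - \bar x_{f,j}$ gives $C(\tau)_{ij}$, up to the ordering convention $\E[Y_\tau Y_0^\top]$. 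This ordering is immaterial here: reversibility makes $P_\tau$ self-adjoint, so $C(\tau)$ is symmetric.

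Next I expand the coordinate functions in the orthonormal eigenbasis. The assumption that $f$ has finite second moments (implicit in the definition of $\sigma_f^2$) ensures $g_i(x) = x_i-\bar x_{f,i}\in L^2(f)$. Its coefficients are $\langle g_i,\psi_k\rangle_f = (\alpha_k)_i$. The $k=1$ coefficient vanishes because $\psi_1\equiv 1$ and $g_i$ has mean zero. Then $P_\tau g_i = \sum_{k\ge2} e^{-\mu_k\tau}(\alpha_k)_i\psi_k$, with convergence in $L^2(f)$. Parseval gives
\[
C(\tau)_{ij} = \sum_{k\ge 2} e^{-\mu_k\tau}(\alpha_k)_i(\alpha_k)_j ,
\]
which is the matrix identity. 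Absolute convergence follows from Cauchy--Schwarz, since $\sum_k \|\alpha_k\|^2 = \sum_i\|g_i\|_f^2 = \tr\Cov_f(X)<\infty$ and $e^{-\mu_k\tau}\le 1$.

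Taking traces gives $\tr C(\tau) = \sum_{k\ge2} e^{-\mu_k\tau}\|\alpha_k\|^2$. At $\tau=0$ this reduces to $\tr C(0)=\sum_{j\ge2}\|\alpha_j\|^2 = \tr\Cov_f(X)$, again by Parseval. Dividing yields $\rho(\tau)=\sum_{k\ge2}e^{-\mu_k\tau}w_k$ with the stated weights. The denominator is positive for any nondegenerate density.

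The main obstacle is justifying the spectral setup rather than the algebra. We need that $\mathcal{L}_f$ (Theorem~\ref{thm:uniqueness}, drift $\tfrac12\sigma_f^2\nabla\log f$) is essentially self-adjoint and nonnegative on $L^2(f)$. We also need its spectrum to be discrete so that $\{\psi_k\}$ is a complete orthonormal basis, and the semigroup to coincide with the transition operator of the SDE. I would take these from the regularity assumptions stated in the setup, citing \cite{Pavliotis2014}. Concretely, unitarily conjugating by $\sqrt f$ turns $\mathcal{L}_f$ into a Schr\"odinger operator with confining potential, which has compact resolvent. If discreteness fails, the same argument goes through with the projection-valued spectral measure: the sum becomes $\int e^{-\mu\tau}\,d\langle g_i,E_\mu g_j\rangle$.
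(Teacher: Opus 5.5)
Your proposal is correct and follows the paper's proof: you expand the centered coordinates in the eigenbasis $\{\psi_k\}$ (with $\alpha_1=0$ because $\psi_1\equiv 1$), apply the semigroup termwise, pair with $g$ in $L^2(f\,dx)$, and take traces. The differences are cosmetic: the paper pairs via the tower property where you use Parseval, and your sign convention $P_\tau=e^{-\tau\mathcal{L}_f}$ (with $\mathcal{L}_f$ taken nonnegative) replaces the paper's $e^{\tau\mathcal{L}_f}$; your Cauchy--Schwarz argument for exchanging sum and integral, which the paper glosses over, is a welcome addition.
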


We prove this in Appendix~\ref{app:spectral} by applying the semigroup $P_\tau = e^{\tau\mathcal{L}_f}$ coordinate-wise to $Y$. Tracing $C(\tau)$ and integrating yields SSA (Section~\ref{sec:ssa}); eigendecomposing $C(\tau)$ yields DA (Section~\ref{sec:da}).

\section{SSA: a scalar measure of mode separation}
\label{sec:ssa}

\begin{definition}[SSA]
\label{def:ssa}
The SSA of $f$ is
\begin{equation}
S(f) \;:=\; \int_0^\infty \rho(\tau)^2\,d\tau.
\label{eq:ssa_score}
\end{equation}
\end{definition}

\begin{corollary}[Spectral formula for SSA]
\label{thm:ssa_spectral}
$S(f) \;=\; \displaystyle\sum_{k,\ell \geq 2} \frac{w_k w_\ell}{\mu_k + \mu_\ell}$,
with the convention $1/0 = +\infty$. $S(f)<\infty$ whenever the canonical diffusion has a spectral gap. (Proof: Appendix~\ref{app:spectral}.)
\end{corollary}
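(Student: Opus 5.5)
The plan is to square the spectral expansion of $\rho(\tau)$ from Lemma~\ref{lem:spectral}, integrate term by term, and justify exchanging sum and integral by nonnegativity. Lemma~\ref{lem:spectral} gives $\rho(\tau)=\sum_{k\ge2} e^{-\mu_k\tau}w_k$ with $w_k\ge0$ and $\sum_{k\ge 2} w_k = \rho(0)=1$. Since $\mu_k\ge 0$, this series converges absolutely for every $\tau\ge0$ and is bounded by $1$. Squaring therefore gives the absolutely convergent double series
\[
\rho(\tau)^2=\sum_{k,\ell\ge2} w_k w_\ell\, e^{-(\mu_k+\mu_\ell)\tau}.
\]
Every term is nonnegative and measurable in $\tau$.

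Next, I will integrate over $[0,\infty)$ and apply Tonelli's theorem (monotone convergence for nonnegative series), which allows exchanging the double sum and the integral with no further hypotheses, including when the result is $+\infty$. Each term integrates to $\int_0^\infty e^{-(\mu_k+\mu_\ell)\tau}\,d\tau = 1/(\mu_k+\mu_\ell)$ when $\mu_k+\mu_\ell>0$. It gives $+\infty$ when $\mu_k=\mu_\ell=0$. Terms with $w_kw_\ell=0$ contribute $0$, reading $0\cdot\infty=0$; this is the sense of the convention $1/0=+\infty$ for the weighted terms. This yields $S(f)=\sum_{k,\ell\ge2} w_kw_\ell/(\mu_k+\mu_\ell)$ as an identity in $[0,\infty]$.

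For finiteness, suppose there is a spectral gap, $\mu_2>0$, so that $\mu_k\ge\mu_2$ for all $k\ge2$. Then $1/(\mu_k+\mu_\ell)\le 1/(2\mu_2)$, and hence
\[
S(f)\le \frac{1}{2\mu_2}\Bigl(\sum_{k\ge 2} w_k\Bigr)^2=\frac{1}{2\mu_2}<\infty.
\]
Equivalently, $\rho(\tau)\le e^{-\mu_2\tau}$, which gives the same bound directly.

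The main obstacle is not this computation but making sure the expansion in Lemma~\ref{lem:spectral} is legitimate in the form used. Three things need checking. First, $\sum_{k\ge2}\|\alpha_k\|^2=\tr C(0)$ must be finite and positive; this follows from Parseval in $L^2(f)$ for the coordinate functions, assuming $f$ has finite second moments and is nondegenerate. Second, the weights must sum to one. Third, for the case of a general (non-discrete) spectrum without a gap, I would fall back on the spectral measure of $\mathcal{L}_f$: write $\rho(\tau)=\int e^{-\mu\tau}\,dw(\mu)$ for a probability measure $w$, and repeat the same Tonelli argument with product measures. The discrete formula in the statement is the special case where $w$ is atomic.
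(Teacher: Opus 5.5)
Your proposal is correct and follows the paper's route (Appendix~\ref{app:spectral}, part (iii)): square the expansion $\rho(\tau)=\sum_{k\ge2}w_k e^{-\mu_k\tau}$ from Lemma~\ref{lem:spectral} and integrate term by term over $[0,\infty)$. Your Tonelli justification for the interchange and your explicit bound $S(f)\le 1/(2\mu_2)$ under a spectral gap fill in two steps the paper leaves implicit.
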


\begin{proposition}[Monotonicity in every eigenvalue]
\label{prop:ssa_monotonicity}
In the spectral formula of Corollary~\ref{thm:ssa_spectral}, holding the
weights $\{w_k\}_{k\ge 2}$ and all other eigenvalues fixed, $S(f)$ is
nonincreasing in each generator eigenvalue $\mu_k$, and strictly decreasing in $\mu_k$ whenever $w_k > 0$. The same holds for the finite-horizon score $S_T(f) = \int_0^T \rho(\tau)^2\,d\tau$. (Proof:
differentiate spectral formula from Corollary~\ref{thm:ssa_spectral};
Appendix~\ref{app:monotonicity_proof}.)
\end{proposition}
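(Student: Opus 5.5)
Our plan is to work directly with the spectral formula of Corollary~\ref{thm:ssa_spectral}, viewing $S$ as a function of the eigenvalue vector $(\mu_k)_{k\ge2}$ with the weights $\{w_k\}$ fixed. Fix an index $m\ge 2$ and split the double sum into three parts: the diagonal term $\tfrac{w_m^2}{2\mu_m}$, the cross terms $2\sum_{\ell\ne m}\tfrac{w_m w_\ell}{\mu_m+\mu_\ell}$ (using symmetry in $k,\ell$), and the terms not involving $\mu_m$. The last group is constant in $\mu_m$. Every term in the first two groups has the form $c/(\mu_m+a)$ with $c\ge 0$ and $a\ge 0$ fixed, so each is nonincreasing in $\mu_m>0$. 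This holds with the $1/0=+\infty$ convention, and if $S=+\infty$ the claim is trivial. Term-by-term monotonicity then gives the nonincreasing claim, with no differentiation needed. Interchanging the infinite sum with monotone limits is harmless because all terms are nonnegative.

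For strictness, suppose $w_m>0$ and $\mu_m<\mu_m'$, with $S$ finite at $\mu_m$. Then the diagonal term alone drops strictly, from $w_m^2/(2\mu_m)$ to $w_m^2/(2\mu_m')$. All other terms are weakly smaller, so $S$ decreases strictly. If one prefers the derivative form, write
\[
\partial_{\mu_m}S=-\frac{w_m^2}{2\mu_m^2}-2\sum_{\ell\ne m}\frac{w_mw_\ell}{(\mu_m+\mu_\ell)^2}<0,
\]
where termwise differentiation is justified by dominated convergence on a neighborhood bounded away from $0$.

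For the finite-horizon score we avoid integrating to infinity. By Lemma~\ref{lem:spectral},
\[
\rho(\tau)^2=\sum_{k,\ell}w_kw_\ell e^{-(\mu_k+\mu_\ell)\tau},
\]
so $S_T=\sum_{k,\ell}w_kw_\ell\, g_T(\mu_k+\mu_\ell)$ with $g_T(s)=(1-e^{-sT})/s$ and $g_T(0)=T$. The key fact is that $g_T$ is strictly decreasing on $[0,\infty)$. To see this, write $g_T(s)=\int_0^T e^{-s\tau}\,d\tau$, whose integrand is strictly decreasing in $s$ for each $\tau>0$. The same split into diagonal, cross and constant terms then gives both the monotone and strict claims. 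Alternatively, one can argue pointwise: for each $\tau$, $\rho(\tau)$ is a nonnegative-weight sum of terms $e^{-\mu_k\tau}$, so $\rho(\tau)\ge0$ is nonincreasing in $\mu_m$, hence so is $\rho(\tau)^2$, and we integrate over $\tau$.

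The main obstacle is bookkeeping rather than analysis. We must be careful with the $\mu_m=0$ and $S=\infty$ edge cases, and with exchanging the infinite sum and the derivative or integral. Nonnegativity of all terms (Tonelli) handles the exchange cleanly, so we would phrase everything via monotone termwise comparison rather than differentiation.
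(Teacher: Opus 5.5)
Your proposal is correct and takes essentially the paper's route. Like the paper, you split the spectral double sum into the diagonal term $w_m^2/(2\mu_m)$, the cross terms involving $\mu_m$, and the terms independent of $\mu_m$, and you note that every term is nonincreasing in $\mu_m$. Strictness comes from the diagonal term when $w_m>0$, and $S_T$ is handled through strict monotonicity of the kernel $(1-e^{-sT})/s$. The only difference is that you argue by termwise comparison, making the Tonelli/dominated-convergence bookkeeping and the $S=\infty$ edge case explicit, whereas the paper simply differentiates term by term.
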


\begin{proposition}[SSA for $k$ well-separated equal-weight modes]
\label{prop:ssa_k_modes}
Suppose $f$ has $k$ equally weighted, well-separated modes with all inter-mode barriers of common height $\Delta U$ (so the $k-1$ slow eigenvalues are all approximately $\mu_{\mathrm{slow}} \sim e^{-\Delta U}$ and the slow-mode variance fraction is $(k-1)/k$). Then $S(f) \approx (k-1)^2 / (2 k^2 \mu_{\mathrm{slow}}) + O(1/\mu_{\mathrm{fast}})$, monotone in $\Delta U$ and saturating at $1/(2\mu_{\mathrm{slow}})$ as $k \to \infty$. (Proof: direct evaluation of Corollary~\ref{thm:ssa_spectral} at the assumed spectral profile.)
\end{proposition}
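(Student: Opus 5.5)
The plan is to evaluate the spectral formula of Corollary~\ref{thm:ssa_spectral} directly at the assumed spectral profile. We split the indices $k\ge 2$ into a slow block $\mathcal{S}$ of size $k-1$ and a fast block $\mathcal{F}$ (all remaining indices). Throughout, the eigenvalues are $\mu_j\approx\mu_{\mathrm{slow}}$ for $j\in\mathcal{S}$ and $\mu_j\ge\mu_{\mathrm{fast}}$ for $j\in\mathcal{F}$, with $\mu_{\mathrm{fast}}\gg\mu_{\mathrm{slow}}$.

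\textbf{Step 1: the weights.} By Lemma~\ref{lem:spectral} the weights $w_j$ are the fractions of the total variance $\tr C(0)=\sum_{j\ge2}\|\alpha_j\|^2$ carried by each mode. The hypothesis that the slow-mode variance fraction is $(k-1)/k$ means $W_{\mathcal{S}}:=\sum_{j\in\mathcal{S}}w_j=(k-1)/k$ and hence $W_{\mathcal{F}}=1/k$. For motivation: with $k$ equal-weight, well-separated modes, the slow eigenfunctions approximately span the indicator functions of the mode basins orthogonal to constants. The projection of $x-\bar x_f$ onto them is then the between-mode component of the law of total variance. This is exactly the variance fraction the statement already assumes, so we take it as given rather than re-deriving it.

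\textbf{Step 2: split the double sum.} We write $S(f)=\Sigma_{\mathcal{SS}}+2\Sigma_{\mathcal{SF}}+\Sigma_{\mathcal{FF}}$ according to the blocks of $(k,\ell)$.

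In the slow--slow block every denominator equals $2\mu_{\mathrm{slow}}(1+o(1))$, so
\[
\Sigma_{\mathcal{SS}}\approx\frac{W_{\mathcal{S}}^2}{2\mu_{\mathrm{slow}}}=\frac{(k-1)^2}{2k^2\mu_{\mathrm{slow}}}.
\]

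For the mixed and fast blocks we use $\mu_k+\mu_\ell\ge\mu_{\mathrm{fast}}$ whenever at least one index lies in $\mathcal{F}$. This gives
\[
2\Sigma_{\mathcal{SF}}+\Sigma_{\mathcal{FF}}\le\frac{2W_{\mathcal{S}}W_{\mathcal{F}}+W_{\mathcal{F}}^2}{\mu_{\mathrm{fast}}}\le\frac{1}{\mu_{\mathrm{fast}}},
\]
which is $O(1/\mu_{\mathrm{fast}})$. Combining the two blocks yields the displayed approximation.

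\textbf{Step 3: monotonicity and saturation.} Monotonicity in $\Delta U$ follows from Proposition~\ref{prop:ssa_monotonicity}. Increasing $\Delta U$ decreases $\mu_{\mathrm{slow}}\sim e^{-\Delta U}$ while the weights stay fixed, so $S$ increases; the leading term is in fact explicitly increasing, roughly as $e^{\Delta U}$. Saturation is immediate, since $(k-1)^2/k^2\uparrow 1$ as $k\to\infty$, so the leading term tends to $1/(2\mu_{\mathrm{slow}})$.

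\textbf{Main obstacle.} The delicate point is making ``$\approx$'' precise in the slow block. The $k-1$ slow eigenvalues are only approximately equal, so I would state the result with the two-sided bound
\[
\frac{W_{\mathcal{S}}^2}{2\max_{\mathcal{S}}\mu}\;\le\;\Sigma_{\mathcal{SS}}\;\le\;\frac{W_{\mathcal{S}}^2}{2\min_{\mathcal{S}}\mu}.
\]
This pins $\Sigma_{\mathcal{SS}}$ to the leading term up to a factor $1+o(1)$, provided the slow eigenvalues agree to leading exponential order (e.g.\ by Eyring--Kramers asymptotics).

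Equal $\mu$ alone does not give saturation, because the leading term depends on the weights through $W_{\mathcal{S}}$. The uniformity in $k$ needed for the saturation claim therefore comes from the hypothesis that the slow-mode variance fraction is exactly $(k-1)/k$ and that $\mu_{\mathrm{slow}}$ does not depend on $k$. If we also require the error terms to stay uniform as $k\to\infty$, we must assume that the fast-block bound $1/\mu_{\mathrm{fast}}$ holds uniformly in $k$.
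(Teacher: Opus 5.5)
Your proposal is correct and is essentially the paper's argument: you evaluate the spectral formula of Corollary~\ref{thm:ssa_spectral} blockwise. The slow--slow block gives $W_{\mathcal{S}}^2/(2\mu_{\mathrm{slow}}) = (k-1)^2/(2k^2\mu_{\mathrm{slow}})$, and the mixed and fast--fast blocks are bounded by $1/\mu_{\mathrm{fast}}$, the same slow/fast partition the paper uses for Corollary~\ref{cor:metastable_variance}. Three of your additions are sound refinements of what the paper leaves inside ``$\approx$'': the two-sided bound on the slow block, the observation that saturation needs $\mu_{\mathrm{slow}}$ and the fast-block bound to be uniform in $k$, and treating the $(k-1)/k$ slow variance fraction as a hypothesis (the between-mode fraction from the law of total variance is not $(k-1)/k$ in general).
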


\noindent SSA summarizes slow-mode dynamics weighted by their variance
share, not the number of modes (illustrated by
Figure~\ref{fig:gmm_sweeps} (d)). Proposition~\ref{prop:ssa_monotonicity}
is therefore a statement about the spectral formula with fixed weights, not a claim that every density deformation that deepens a barrier
must increase SSA.

\paragraph{Empirical estimator.}
We estimate SSA from $N$ trajectories with step-size $\Delta t$ and horizon $T_{\max}$:

\begin{equation}
\widehat S \;:=\; \Delta t \sum_{\ell = 0}^{L_{\max}} \hat\rho(\ell\,\Delta t)^2,
\label{eq:ssa_estimator}
\end{equation}
where $L_{\max} = \lfloor T_{\max}/\Delta t \rfloor$,
with $\hat\rho(\tau_\ell) = \tr\hat C(\tau_\ell)/\tr\hat C(0)$ at lag $\tau_\ell = \ell\,\Delta t$. The horizon $T_{\max}$ is a bias--variance parameter: at large $\tau$, fewer effectively-independent lag samples make $\Var[\hat\rho(\tau)]$ grow while $\rho(\tau)^2$ decays. A principled stopping lag is the largest $\tau$ at which $\hat\rho(\tau)^2$ exceeds the null variance of the trace-normalized estimator under the hypothesis $X_0 \perp X_\tau$:
\begin{equation}
T_{\max} \;=\; \max\!\left\{\tau \,:\, \hat\rho(\tau)^2 \;>\; \frac{1}{N}\right\},
\label{eq:tmax_stopping}
\end{equation}
with $1/N$ a distribution-free upper bound on the asymptotic null variance of $\hat\rho(\tau)$ that holds for any $f$ with finite fourth moments (derivation in Appendix~\ref{app:rho_clt}). Algorithm~\ref{alg:ssa_da} details the full procedure. When mixing is too slow to reach a principled $T_{\max}$ within budget, monotonicity justifies truncating at a common horizon under comparable weights, with bootstrap ordering certificates (App.~\ref{app:compute_limited}).

\section{DA: directional decomposition by metastability}
\label{sec:da}

We next present DA, the eigendecomposition of the same autocovariance $C(\tau)$ that SSA traces.

\begin{definition}[Dominant Autocorrelation directions]
\label{def:da}
For each lag $\tau > 0$, let
\[
C(\tau) = \sum_{j=1}^d \lambda_j(\tau)\, v_j(\tau) v_j(\tau)^\top,
\qquad
\lambda_1(\tau) \geq \lambda_2(\tau) \geq \cdots \geq \lambda_d(\tau),
\]
be the eigendecomposition of the population autocovariance. The DA
directions at lag $\tau$ are the eigenvectors
$v_1(\tau), v_2(\tau), \ldots, v_d(\tau)$, ordered by decreasing
eigenvalue. 
\end{definition}

\begin{proposition}[Convergence of the leading DA direction]
\label{prop:da_convergence}
If $\mu_2 < \mu_3$ and $\|\alpha_2\| > 0$, the leading unit-norm eigenvector $v_1(\tau)$ of $C(\tau)$ satisfies $1 - \langle v_1(\tau), \alpha_2\rangle^2 / \|\alpha_2\|^2 = O(e^{-2(\mu_3 - \mu_2)\tau})$ as $\tau \to \infty$. (Proof: Appendix~\ref{app:da_convergence_proof}.)
\end{proposition}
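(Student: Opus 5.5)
We use Lemma~\ref{lem:spectral} to write $C(\tau)=e^{-\mu_2\tau}\bigl(\alpha_2\alpha_2^\top + E(\tau)\bigr)$ with $E(\tau)=\sum_{k\ge 3} e^{-(\mu_k-\mu_2)\tau}\alpha_k\alpha_k^\top$. This perturbation is positive semidefinite. Because $\mu_k\ge\mu_3$ for every $k\ge3$, its operator norm obeys $\|E(\tau)\|\le e^{-(\mu_3-\mu_2)\tau}\sum_{k\ge3}\|\alpha_k\|^2$. The sum is finite, since $\sum_{k\ge2}\|\alpha_k\|^2=\tr C(0)=\tr\Cov_f(X)<\infty$ by Parseval. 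Write $\varepsilon(\tau):=\|E(\tau)\|=O(e^{-(\mu_3-\mu_2)\tau})$. Rescaling by $e^{\mu_2\tau}$ does not change eigenvectors. So $v_1(\tau)$ is the top eigenvector of $M(\tau)=\alpha_2\alpha_2^\top+E(\tau)$. This is a rank-one matrix with eigenvalue $\|\alpha_2\|^2$ and eigengap $\|\alpha_2\|^2>0$, plus a small perturbation.

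A plain Davis--Kahan bound gives $\sin\theta\lesssim \varepsilon/\|\alpha_2\|^2$ only. That yields $1-\cos^2\theta=\sin^2\theta=O(e^{-2(\mu_3-\mu_2)\tau})$, which is exactly the claimed rate, since $1-\langle v_1,\alpha_2\rangle^2/\|\alpha_2\|^2=\sin^2\theta$ with $u=\alpha_2/\|\alpha_2\|$. The plan is therefore to apply the Davis--Kahan $\sin\theta$ theorem. Take $\lambda_1(\tau)\ge \|\alpha_2\|^2$, which follows from the Rayleigh quotient at $u$ and $E\succeq0$. Take $\lambda_2(\tau)\le\varepsilon(\tau)$, which follows from Weyl, since the unperturbed second eigenvalue is $0$. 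This gives $\sin\theta\le \varepsilon/(\|\alpha_2\|^2-\varepsilon)$ once $\varepsilon<\|\alpha_2\|^2$, i.e.\ for all large $\tau$. Squaring gives the result.

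For a self-contained alternative, decompose $v_1=\cos\theta\, u+\sin\theta\, u_\perp$. Projecting the eigen-equation $Mv_1=\lambda_1 v_1$ onto $u^\perp$ gives $\lambda_1\sin\theta\,u_\perp = P_\perp E v_1$. Hence $|\sin\theta|\le \varepsilon/\lambda_1\le\varepsilon/\|\alpha_2\|^2$, which is cleaner still.

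The step needing care is the infinite sum defining $E(\tau)$ and the choice of constant. If the spectrum has infinitely many eigenvalues, one must justify convergence of $\sum_k e^{-\mu_k\tau}\alpha_k\alpha_k^\top$ in operator norm and the uniform bound by $e^{-(\mu_3-\mu_2)\tau}$. Both follow from monotonicity of the $\mu_k$ and summability of $\|\alpha_k\|^2$. The resulting constant is explicit: $(\sum_{k\ge3}\|\alpha_k\|^2/\|\alpha_2\|^2)^2$. The hypothesis $\mu_2<\mu_3$ is used only to get the decaying rate. If $\mu_2$ has multiplicity greater than one, the statement correctly fails to apply. Note that the rate is unaffected when $\mu_3$ is degenerate.
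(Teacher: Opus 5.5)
Your proposal is correct and follows the paper's proof essentially verbatim. Like the paper, you factor out $e^{-\mu_2\tau}$, bound the tail by $e^{-(\mu_3-\mu_2)\tau}\sum_{k\ge3}\|\alpha_k\|^2$, apply Davis--Kahan with gap $\|\alpha_2\|^2-\varepsilon(\tau)$, and square. Your projection variant, $|\sin\theta|\le\varepsilon(\tau)/\lambda_1(\tau)\le\varepsilon(\tau)/\|\alpha_2\|^2$ via $E(\tau)\succeq 0$, is a valid small sharpening that holds for every $\tau$, not only once $\varepsilon(\tau)<\|\alpha_2\|^2$.
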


\paragraph{Beyond DA1.} Assume $\alpha_2, \ldots, \alpha_{m+1}$ are linearly independent and $\mu_{m+1} < \mu_{m+2}$. Then the top-$m$ eigenspace of $C(\tau)$ converges to $\mathrm{span}(\alpha_2, \ldots, \alpha_{m+1})$ as $\tau \to \infty$ (Appendix~\ref{app:da_convergence_proof}). Recovering the individual slow modes as the eigenvectors $v_2(\tau), \ldots, v_m(\tau)$ requires additional gap and non-collinearity conditions on $\{\alpha_k\}$ that we do not establish in general; we validate per-direction recovery empirically per experiment (Section~\ref{sec:exp_molecular}, Appendix~\ref{app:validation}). A theory is left to future work. 

\noindent Proposition~\ref{prop:da_convergence} suggests taking
$\tau \to \infty$, but for finite samples this is a bad idea: the
empirical $\hat C^{\mathrm{sym}}(\tau)$ has a noise floor that the
signal eigenvalues eventually fall below. We therefore characterize
the null spectrum exactly and use it to select a finite $\tau^*$.

\begin{theorem}[Limit spectrum of $\hat C^{\mathrm{sym}}(\tau)$ under the isotropic Gaussian null]
\label{thm:limit-spectrum}
Let $f = \mathcal{N}(0,\sigma^2 I_d)$, $\gamma = d/N$, and fix $\tau > 0$. Let $(X_0^{(n)}, X_\tau^{(n)})_{n=1}^N$ be independent stationary OU lag-pairs from the canonical diffusion of $f$ (within each pair, $X_0$ and $X_\tau$ have the OU correlation $e^{-\tau/2}$; pairs are independent across $n$), and define the simple-pair estimator $\widetilde C(\tau) = (1/N) \sum_n X_0^{(n)}(X_\tau^{(n)})^\top$ and $\widetilde C^{\mathrm{sym}}(\tau) = \tfrac12(\widetilde C(\tau) + \widetilde C(\tau)^\top)$. In the proportional regime $d, N \to \infty$ with $d/N \to \gamma \in (0, \infty)$, the empirical spectral distribution of $\widetilde C^{\mathrm{sym}}(\tau)$ converges weakly almost surely to
\begin{equation}
\mathrm{d}\mu_\tau^\gamma(\lambda) = \max(0, 1 - 2/\gamma)\,\delta_0(\mathrm{d}\lambda) + \tilde\rho_\tau^\gamma(\lambda)\,\mathrm{d}\lambda,
\label{eq:mu_tau_decomp}
\end{equation}
where $\tilde\rho_\tau^\gamma$ is the continuous density of total mass $\min(1, 2/\gamma)$ (the normalized continuous density is $\rho_\tau^\gamma := \tilde\rho_\tau^\gamma / \min(1, 2/\gamma)$). Let $c_\pm(\tau) = (\sigma^2/2)(1 \pm e^{-\tau/2})$ and define
\begin{equation}
a(\tau) = c_+ c_- \gamma^2 \lambda, \quad b(\tau) = \gamma\bigl[c_+ c_- (2-\gamma) + \lambda(c_+ - c_-)\bigr], \quad c(\tau) = (c_+ - c_-)(1 - \gamma) - \lambda.
\label{eq:abc-coefficients}
\end{equation}
\begin{enumerate}[label=(\roman*),leftmargin=*,nosep]
\item \textbf{Edges and support:} the boundary points of the continuous support are real roots of the discriminant quartic
\begin{equation}
\Delta_\tau(\lambda) \;=\; 18\,a(\tau)\,b(\tau)\,c(\tau) - 4 b(\tau)^3 + b(\tau)^2 c(\tau)^2 - 4 a(\tau)\, c(\tau)^3 - 27 a(\tau)^2 \;=\; 0.
\label{eq:edge-quartic}
\end{equation}
We write $\lambda_+(\tau)$ for the largest real root and $\lambda_-(\tau)$ for the smallest; in the supercritical regime $\gamma > 2$, the quartic can have four real roots and the continuous support is two intervals separated by a gap around $0$.
\item \textbf{Density:} on the continuous support, $\tilde\rho_\tau^\gamma(\lambda) = -\pi^{-1}\,\mathrm{Im}\,G(\lambda+i0^+)$, where $G$ is the lower-half-plane root of the Stieltjes cubic ($\Delta_\tau$ is its discriminant)
\begin{equation}
a(\tau)\,G^3 + b(\tau)\,G^2 + c(\tau)\,G + 1 \;=\; 0.
\label{eq:stieltjes-cubic}
\end{equation}
\end{enumerate}
At $\tau = 0$, $\mu_0^\gamma$ is the standard Marchenko--Pastur law with edge $\lambda_+(0) = \sigma^2(1+\sqrt\gamma)^2$. As $\tau \to \infty$,
\begin{equation}
\lambda_+(\tau) \to \lambda_+(\infty) = \frac{\sigma^2}{8}\bigl(\sqrt{1+4\gamma}+3\bigr)^{3/2}\sqrt{\sqrt{1+4\gamma}-1}.
\label{eq:edge-tauinf-main}
\end{equation}
\end{theorem}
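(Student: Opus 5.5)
The plan is to reduce $\widetilde C^{\mathrm{sym}}(\tau)$ to a difference of two independent white Wishart matrices and then apply free probability. Under $f=\mathcal N(0,\sigma^2 I_d)$ we have $\sigma_f^2=\sigma^2$, so the canonical diffusion \eqref{eq:canonical_sde} is the OU process $dX_t=-\tfrac12 X_t\,dt+\sigma\,dW_t$. A stationary lag-pair is therefore jointly Gaussian with $\Cov(X_0)=\Cov(X_\tau)=\sigma^2 I$ and cross-covariance $e^{-\tau/2}\sigma^2 I$. This gives the correlation claimed in the statement.

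\textbf{Step 1 (Wishart reduction).} Use the polarization identity $\tfrac12(xy^\top+yx^\top)=\tfrac14[(x+y)(x+y)^\top-(x-y)(x-y)^\top]$. Set $P_n=(X_0^{(n)}+X_\tau^{(n)})/2$ and $M_n=(X_0^{(n)}-X_\tau^{(n)})/2$. Because $X_0$ and $X_\tau$ have equal covariances, $P_n$ and $M_n$ are uncorrelated, hence independent Gaussians. Their covariances are $c_+ I$ and $c_- I$, with $c_\pm=(\sigma^2/2)(1\pm e^{-\tau/2})$. Thus
\[
\widetilde C^{\mathrm{sym}}(\tau)=c_+W_1-c_-W_2,
\]
where $W_1,W_2$ are independent $d\times d$ white Wishart matrices with $N$ degrees of freedom, normalized by $1/N$. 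Immediate consequences:
\begin{itemize}[leftmargin=*,nosep]
\item The rank is at most $2N$, so a mass of at least $\max(0,1-2/\gamma)$ sits at $0$.
\item At $\tau=0$ we have $c_-=0$ and $c_+=\sigma^2$, which gives Marchenko--Pastur with edge $\sigma^2(1+\sqrt\gamma)^2$.
\end{itemize}

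\textbf{Step 2 (free convolution).} Both $W_i$ are orthogonally invariant and independent, so they are almost surely asymptotically free. I would invoke Voiculescu's theorem, or the Pastur--Vasilchuk almost-sure result for sums of invariant ensembles. The limiting ESD is then the free additive convolution of $\mathrm{MP}_\gamma$ scaled by $c_+$ with the reflection of $\mathrm{MP}_\gamma$ scaled by $c_-$. The $R$-transform of $c\cdot\mathrm{MP}_\gamma$ (in the normalization $d/N=\gamma$) is $c/(1-c\gamma z)$, so
\[
R(z)=\frac{c_+}{1-c_+\gamma z}-\frac{c_-}{1+c_-\gamma z}.
\]
The functional inverse relation $\lambda=R(G)+1/G$ then gives an equation for $G$. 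Multiplying through by $G(1-c_+\gamma G)(1+c_-\gamma G)$ yields a cubic in $G$. I would check by direct expansion that, after normalizing the constant term to $1$, it is exactly \eqref{eq:stieltjes-cubic} with the coefficients \eqref{eq:abc-coefficients}. Stieltjes inversion on the branch with $G(z)\sim 1/z$ at infinity gives part (ii). The atom size follows by comparing the multiplicity of the $0$-eigenvalue: the continuous part must have mass $\min(1,2/\gamma)$.

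\textbf{Step 3 (edges).} On the real axis the density is nonzero exactly where the cubic has a non-real root pair. The boundary points are therefore where two roots coalesce, namely the zeros of the cubic discriminant $18abc-4b^3+b^2c^2-4ac^3-27a^2$ viewed as a function of $\lambda$. This is quartic in $\lambda$ because $a,b,c$ are affine in $\lambda$, which gives part (i). For $\gamma>2$, the atom at $0$ and the two Wishart components separate, and I would argue via a sign analysis of $\Delta_\tau$ near $\lambda=0$ that all four roots are real and a gap opens.

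\textbf{Step 4 ($\tau\to\infty$ edge; main obstacle).} As $\tau\to\infty$, $c_+=c_-=\sigma^2/2$. Then $c(\tau)=-\lambda$, $b=\gamma c^2(2-\gamma)$ with $c=\sigma^2/2$, and $a=c^2\gamma^2\lambda$. The limit measure is symmetric, so $\Delta_\infty$ reduces to a quadratic in $\lambda^2$. The obstacle is solving this explicitly and simplifying to the closed form \eqref{eq:edge-tauinf-main}.

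What I would try first is the parametric route instead of brute-forcing the quartic. Write $\lambda(G)=R(G)+1/G$ and solve $\lambda'(G)=0$ for the real critical point. With $s=\sqrt{1+4\gamma}$ this critical point should come out rational in $s$. Substituting it back into $\lambda(G)$ should produce the factor $(s+3)^{3/2}\sqrt{s-1}$. Continuity of the largest discriminant root in $(c_+,c_-)$ then gives $\lambda_+(\tau)\to\lambda_+(\infty)$.
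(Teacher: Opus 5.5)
\textbf{Overall.} Your route is the paper's route: reduce to $c_+W_+-c_-W_-$, invoke asymptotic freeness with $R(z)=c_+/(1-c_+\gamma z)-c_-/(1+c_-\gamma z)$, obtain \eqref{eq:stieltjes-cubic} from $K(G)=\lambda$, and take the edges from the cubic's discriminant. I checked the expansion, and it does reproduce the stated $a,b,c$.

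\textbf{Where you differ, for the better.} The polarization identity with $P_n=(X_0^{(n)}+X_\tau^{(n)})/2$ and $M_n=(X_0^{(n)}-X_\tau^{(n)})/2$ proves Lemma~\ref{lem:decomp} in one line. The paper reaches the same $(U,V)$, up to sign, via an intermediate $45^\circ$ rotation and a $2\times2$ diagonalization.

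Your critical-point route to $\lambda_+(\infty)$ also works. Put $u=\tfrac{\sigma^2\gamma}{2}G$. Then $K=\tfrac{\sigma^2}{2}\bigl[\tfrac{2u}{1-u^2}+\tfrac{\gamma}{u}\bigr]$, and $K'=0$ becomes $(2-\gamma)u^4+2(1+\gamma)u^2-\gamma=0$. Its unique root with $u^2\in(0,1)$ is $u^2=(s-1)/(s+3)$. Substituting gives $\tfrac{\sigma^2}{8}(s+3)^2u$, which is \eqref{eq:edge-tauinf-main}. This avoids the paper's biquadratic. Your continuity remark (leading coefficient $\gamma^2\sigma^4\neq0$, simple top root at $\tau=\infty$) also supplies the convergence $\lambda_+(\tau)\to\lambda_+(\infty)$, which the paper only evaluates at the endpoint.

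\textbf{The gap: the atom mass.} The rank bound $\le 2N$, passed to the limit by the Portmanteau inequality on the closed set $\{0\}$, gives only $\mu_\tau^\gamma(\{0\})\ge\max(0,1-2/\gamma)$. ``Comparing the multiplicity of the $0$-eigenvalue'' cannot give the reverse inequality, because the generically nonzero eigenvalues could accumulate at $0$. It also says nothing for $\gamma\le2$, where the theorem asserts there is no atom at all.

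The paper closes this with the free-convolution atom rule: $\mu\boxplus\nu(\{0\})=\max(0,\mu(\{0\})+\nu(\{0\})-1)$, with both masses equal to $1-1/\gamma$. You can instead stay inside your cubic. Since $iy\,G(iy)\to m:=\mu_\tau^\gamma(\{0\})$ as $y\downarrow0$, multiply \eqref{eq:stieltjes-cubic} by $z^2$ and let $z=iy\to0$. This gives $c_+c_-\gamma\,m^2(\gamma m+2-\gamma)=0$, so $m\in\{0,1-2/\gamma\}$. Together with your lower bound, this pins $m$ for every $\gamma$. The same balance at any $\lambda_0\neq0$ forces $a(\lambda_0)\,\mu_\tau^\gamma(\{\lambda_0\})^3=0$ with $a(\lambda_0)\neq0$, which rules out other atoms.

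\textbf{A cleaner argument for $\gamma>2$.} You can replace the proposed sign analysis of $\Delta_\tau$ near $0$ by Ostrowski's theorem. Write $\widetilde C^{\mathrm{sym}}(\tau)=N^{-1}YDY^\top$ with $Y=[U\ V]$ and $D=\mathrm{diag}(c_+I_N,-c_-I_N)$. The nonzero eigenvalues are those of $M^{1/2}DM^{1/2}$ with $M=Y^\top Y/N$. Hence there are exactly $N$ positive and $N$ negative eigenvalues, each of modulus at least $c_-\lambda_{\min}(M)\to c_-(\sqrt\gamma-\sqrt2)^2>0$. This yields both the exact atom and the gap around $0$ at once.
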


\paragraph{Operational noise floor.}\noindent\label{rem:floor-scope}
Under the null $f = \mathcal{N}(0, \sigma^2 I_d)$ there are no slow
modes, so the spectrum of $\hat C^{\mathrm{sym}}(\tau)$ is finite-sample
noise. Theorem~\ref{thm:limit-spectrum} characterizes it, giving
$\lambda_+(\tau)$ as a threshold: slow modes persist above it,
non-metastable directions decay into it. The proof (Appendix~\ref{app:limit-spectrum}) is for the simple-pair estimator $\widetilde C^{\mathrm{sym}}(\tau)$ with i.i.d.\ lag-pairs; for the trajectory-averaged $\hat C(\tau)$ (overlapping dependent pairs), we use $\lambda_+(\tau)$ as a reference floor and rely on matched Monte-Carlo calibration when the pipeline departs from the simple-pair null. A practical issue is that once $\nabla \log f$ is estimated rather than
known, and especially when DA is computed after a nonlinear embedding,
the empirical bulk is typically inflated above the idealized null. Section~\ref{sec:exp_synth_validation} verifies $\rho_\tau^\gamma$ and $\lambda_+(\tau)$ against empirical eigenvalue histograms under the exact null. For pretrained generative models we calibrate the floor by a
Monte-Carlo null through the full pipeline using a parallel iso-Gaussian
score oracle (Section~\ref{sec:score_high_dim}), so null and data
share the pipeline biases
(Algorithm~\ref{alg:operational_floor},
Appendix~\ref{app:sdxl_elderly}).

\paragraph{Lag-selection criterion and number of metastable directions.}
For $m \geq 1$ directions, choose $\tau^*(m)$ as the largest lag at
which: (i) each of the top $m$ eigenvalues of $\hat C^{\mathrm{sym}}(\tau)$ rejects the null at level $\alpha$ (Bonferroni-corrected over the $m$ directions), via either the analytic null of Theorem~\ref{thm:limit-spectrum} or a matched Monte Carlo null (Appendix~\ref{app:validation});
(ii) the 1D projection onto each of $v_1, \ldots, v_m$ rejects
unimodality under a 1D multimodality test (Hartigan's dip
test~\cite{hartigan1985} or Silverman's bandwidth
test~\cite{silverman1981}; see Appendix~\ref{app:validation}); and
(iii) $\rho(\tau) < 1/e$, a heuristic guard against under-converged
short lags (Appendix~\ref{app:lag_selection}). The number of
metastable directions recovered is the maximal $m$ for which
$\tau^*(m)$ exists. Full methodology and per-experiment results in
Appendix~\ref{app:validation}.

\section{Score oracles and algorithms}
\label{sec:score_high_dim}

The canonical diffusion requires only the score $\nabla \log f$, not the density. At low-to-moderate $d$ we use kernel density estimation, as in the molecular experiments of Section~\ref{sec:exp_molecular}. For score-based generative models, Tweedie's identity (derived in Appendix~\ref{app:tweedie}) gives $\nabla \log f_t(x) = -\epsilon_\theta(x, t) / \sigma_t$ at noise level $t$ with schedule variance $\sigma_t^2$; at small $t$ this provides a score oracle for $f$ via a single denoiser pass with no density estimation and no dimensionality barrier. At large $t$ the same denoiser approximates the linear iso-Gaussian score $-x/\sigma_t^2$ used in Algorithm~\ref{alg:operational_floor}'s Monte-Carlo null. The mechanism extends to any generative model that learns a denoiser, score, or velocity field across noise levels (DDPMs~\cite{ho2020denoising}, score matching~\cite{song2019generative}, continuous-time diffusion~\cite{song2021scorebased}, flow matching~\cite{lipman2023flow}).

\paragraph{Algorithm.}
\label{sec:algorithms}
Algorithm~\ref{alg:ssa_da} consolidates both readouts into a single procedure: we simulate the canonical diffusion, estimate $\hat C(\tau)$ across a lag grid, then read off SSA from the trace and DA from the eigendecomposition. The two share all computation up to the autocovariance estimate. SSA integrates $\hat\rho(\tau)^2$ up to the stopping lag $L^*$ implied by the null floor of Appendix~\ref{app:rho_clt}; DA eigendecomposes $\hat C(\tau^*)$ at the lag $\tau^*(\hat m)$ selected by the joint multi-direction criterion of Section~\ref{sec:da}, with $\hat m$ the maximal number of recovered metastable directions; Appendix~\ref{app:lag_selection} gives the full criterion.

\begin{algorithm}[!htbp]
\caption{SSA and DA from a Score Oracle}
\label{alg:ssa_da}
\begin{algorithmic}[1]
\State \textbf{Input:} $\nabla \log \hat{f}$, samples from $f$, trajectories $N$, step $\Delta t$, max steps $T$, lag grid $\{\tau_1 < \cdots < \tau_L\}$
\State \textbf{Output:} SSA score $\hat S$; DA directions $\{v_j, \lambda_j\}_{j=1}^d$
\State $\bar x_f \gets \E_f[X]$,\; $\sigma_f^2 \gets \tfrac{1}{d}\tr(\Cov_f(X))$;
\State sample $X_0^{(n)} \sim f$ for $n = 1, \ldots, N$
\For{$n = 1, \ldots, N$,\; $t = 0, \ldots, T-1$} \Comment{simulate canonical diffusion}
    \State $X_{t+1}^{(n)} \gets X_t^{(n)} + \tfrac{1}{2}\sigma_f^2\,\nabla \log f(X_t^{(n)})\,\Delta t + \sigma_f \sqrt{\Delta t}\, Z$, \;\; $Z \sim \mathcal{N}(0, I_d)$
\EndFor
\For{$\ell = 0, 1, \ldots, L$} \Comment{autocovariance and trace-normalized autocorrelation}
    \State $\hat C(\tau_\ell) \gets \tfrac{1}{N(T+1-\tau_\ell)} \sum_{n,t} (X_t^{(n)} - \bar x_f)(X_{t+\tau_\ell}^{(n)} - \bar x_f)^\top$;
    \State $\hat\rho(\tau_\ell) \gets \tr \hat C(\tau_\ell)/\tr \hat C(0)$
\EndFor
\State $L^* \gets \max\{\ell : \hat\rho(\tau_\ell)^2 > 1/N\}$;
\State $\hat S \gets \int_0^{\tau_{L^*}} \hat\rho(\tau)^2\,d\tau$ \Comment{SSA, Appendix~\ref{app:rho_clt}}
\State Select $\tau^* \in \{\tau_1, \ldots, \tau_{L}\}$ (Appendix~\ref{app:lag_selection}; operational floor in Algorithm~\ref{alg:operational_floor});
\State $\hat C^{\mathrm{sym}} \gets \tfrac{1}{2}[\hat C(\tau^*) + \hat C(\tau^*)^\top]$
\State Eigendecompose $\hat C^{\mathrm{sym}} = \sum_j \lambda_j\, v_j v_j^\top$, $\lambda_1 \geq \lambda_2 \geq \cdots$ \Comment{DA directions}
\State \Return $\hat S$,\; $\{v_j, \lambda_j\}_{j=1}^d$
\end{algorithmic}
\end{algorithm}

\section{Experiments}
\label{sec:experiments}

\paragraph{Validation on synthetic distributions.}
For GMMs with known component labels $Z$, the mutual information $I(X;Z)$ measures how well a single observation identifies its source component, providing a natural ground-truth that requires access to the mixture decomposition; see Appendix~\ref{app:mi_baseline}. We evaluate SSA against MI on 10D GMMs across four parameter sweeps in Figure~\ref{fig:gmm_sweeps}; Appendix~\ref{app:gmm_details} gives simulation details. SSA tracks MI across all four sweeps. The two diverge in shape rather than ordering: where MI saturates near $H(Z) = \log k$ as barriers deepen, SSA accelerates exponentially in barrier height, since Corollary~\ref{thm:ssa_spectral} contributes $w_k w_\ell / (\mu_k + \mu_\ell)$ for each slow-mode pair with $\mu_k \sim e^{-\Delta U}$ in the metastable regime. This is visible in the separation and variance sweeps, where MI plateaus while SSA keeps climbing. Weight agreement is exact up to indifference near the symmetric peak; mode count behaves as predicted by Proposition~\ref{prop:ssa_k_modes}. Across 50 randomly configured bimodal GMMs in $d = 10$, Spearman rank correlation is $\rho = 0.890$ ($p = 5.5 \times 10^{-18}$; Figure~\ref{fig:random_sweep}, Appendix). Rankings are preserved under halving the Euler--Maruyama step size (Appendix~\ref{app:dt_robustness}).

\begin{figure}[!htbp]
\centering
\includegraphics[width=\textwidth]{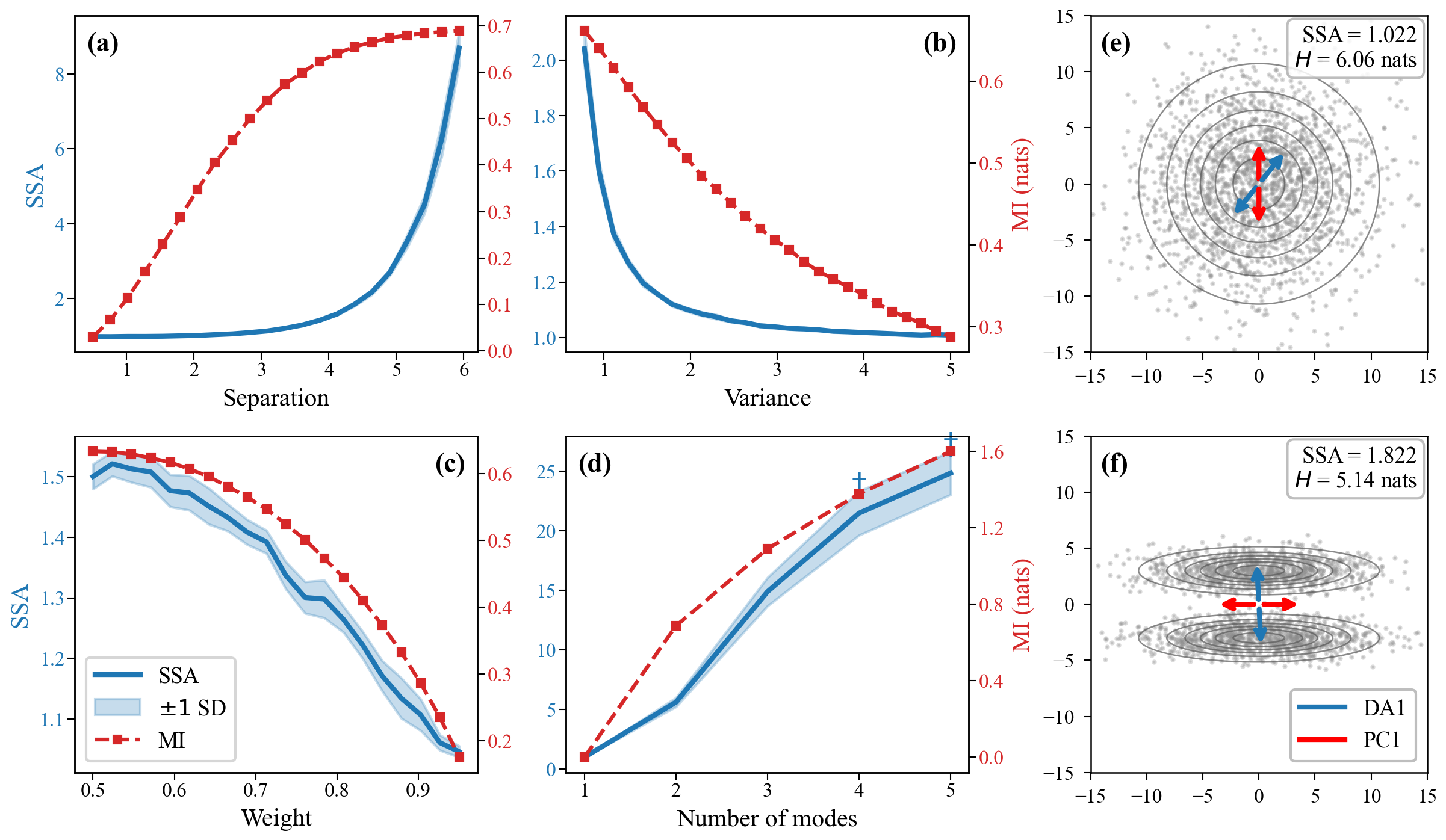}
\caption{ \textbf{Two readouts of the canonical diffusion.}
\textbf{(a--d) SSA on 10D GMMs} (left 2/3): SSA (blue, left axis) and MI (red, right axis) across sweeps in (a) mode separation, (b) isotropic variance at fixed separation, (c) weight asymmetry, (d) mode count. SSA tracks MI in ordering; MI saturates near $H(Z) = \log k$ while SSA grows with barrier height. Mean over $10$ seeds, $\pm 1$ SD. Daggered points ($K{=}4,5$) did not meet the $1/N$ criterion at $T_{\mathrm{phys}}=200$\,s; $\hat S$ lower-bounds the population value (Proposition~\ref{prop:ssa_monotonicity}; ordering preserved, Appendix~\ref{app:gmm_bootstrap}).
\textbf{(e--f) DA vs.\ PCA on 2D distributions} (right 1/3): (e) isotropic $\mathcal{N}(0, 25\,I_2)$, $H = 6.06$, SSA$\,=1.022$: no population direction, arrows reflect finite-sample noise. (f) Parallel ellipses, means $(0, \mp 3)$, covariances $\mathrm{diag}(25, 1)$, $H = 5.14$, SSA$\,=1.822$: entropy ranks (e) above (f); SSA inverts. DA1 points between modes; PC1 along max variance. DA1 passes anisotropic-null and dip tests ($p < 10^{-3}$); PC1 fails dip ($p = 0.99$).
Details: Appendices~\ref{app:gmm_details},~\ref{app:validation}; snapshots: Figure~\ref{fig:gmm_snapshots_appendix}.}
\label{fig:gmm_sweeps}
\label{fig:structure}
\end{figure}

\label{sec:exp_entropy}
\label{sec:exp_parallel_ellipses}
Figure~\ref{fig:structure} (panels (e),(f)) contrasts two 2D distributions chosen to expose the gap between dispersion and fragmentation. The isotropic Gaussian (e) is more dispersed than the parallel-ellipses GMM (panel b), and entropy reflects this ($6.06$ vs.\ $5.14$ nats). SSA inverts the ordering ($1.022$ vs.\ $1.822$), identifying the inter-mode barrier that entropy is blind to; the isotropic value $\approx 1$ matches the closed-form unimodal baseline (Appendix~\ref{app:ou_autocovariance}). The same figure illustrates DA: on the parallel ellipses, PCA selects the high-variance $x$-axis ($25 > 10$), while DA selects the inter-mode $y$-axis. On the isotropic blob the population autocovariance is a scalar multiple of $I_2$ at every lag, so neither method has a preferred population direction ("DA1" here fails the dip test). DA1 on the ellipses passes both an anisotropic-Gaussian null test ($p < 10^{-3}$) and Hartigan's dip test for unimodality ($p < 10^{-3}$); PC1 fails the dip test ($p = 0.99$), correctly flagged as dispersion (Appendix~\ref{app:validation}).


\label{sec:exp_synth_validation}
We also verify Theorem~\ref{thm:limit-spectrum} directly: drawing $N$ samples from $\mathcal{N}(0, I_d)$ at multiple aspect ratios $\gamma = d/N$ and computing $\hat C^{\mathrm{sym}}(\tau)$ across $\tau \in [0, 10]$, the empirical eigenvalue histograms match the analytic free-convolution density $\rho_\tau^\gamma$ tightly, and the largest empirical eigenvalue concentrates inside the analytic edge $\lambda_+(\tau)$ at every lag (Figure~\ref{fig:free_conv_overlay_body}). The full $6 \times 4$ sweep across $\gamma \in \{0.25, 0.5, 0.75, 1, 2, 5\}$ (including the atom regime $\gamma > 2$) is in Appendix~\ref{app:scope}.

\begin{figure}[H]
\centering
\includegraphics[width=\textwidth,height=4cm,keepaspectratio=false]{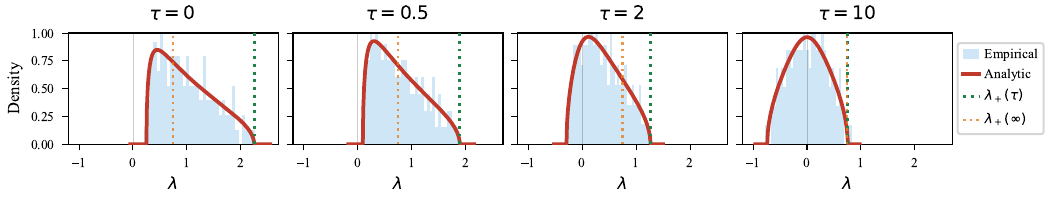}
\caption{Empirical eigenvalue distribution of $\hat C^{\mathrm{sym}}(\tau)$ under the isotropic Gaussian null $f = \mathcal{N}(0, I_d)$, at $\gamma = 0.25$ across $\tau \in \{0, 0.5, 2, 10\}$ (left to right); $N = 500$, single seed. Histogram: empirical; red: analytic density $\rho_\tau^\gamma$ from Theorem~\ref{thm:limit-spectrum}. Dark-green dotted: $\lambda_+(\tau)$; orange dotted: $\lambda_+(\infty)$ from Equation~\eqref{eq:edge-tauinf-main}; the two converge as $\tau$ grows. Full $6 \times 4$ sweep across $\gamma \in \{0.25, 0.5, 0.75, 1, 2, 5\}$ (including the atom regime $\gamma > 2$) in Appendix~\ref{app:scope}.}
\label{fig:free_conv_overlay_body}
\end{figure}

\begin{table}[!htbp]
\centering
\caption{ SSA across eleven prompts and four guidance scales
(SDXL Base~1.0, $N\!=\!200$ trajectories per cell). Bold: per-prompt
peak. All bootstrap SEs ($B = 2{,}000$) are $\le 0.07$ (full per-cell
SEs in Appendix~\ref{app:sdxl_bootstrap}). Three further prompts with
peak $\hat S < 1.1$ excluded as inconclusive
(Appendix~\ref{app:sdxl_subgaussian}); the $1.1$ threshold is
conservative against the upward finite-sample bias relative to the
unimodal-Gaussian baseline of $1$
(Appendix~\ref{app:ou_autocovariance}).}
\label{tab:sdxl_guidance}
\begin{tabular}{@{}lccccccccccc@{}}
\toprule
$w$ & person & elderly & cat & food & car & bat & crane & mouse & seal & jaguar & palm \\
\midrule
$1$   & 1.06 & 1.09 & 0.83 & 0.92 & 0.87 & 1.04 & 1.08 & 1.02 & 0.88 & 0.91 & 1.28 \\
$3$   & 0.99 & 0.94 & 1.10 & 0.99 & 1.11 & 1.41 & 1.56 & \textbf{1.48} & \textbf{1.43} & \textbf{1.20} & 1.40 \\
$7.5$ & \textbf{1.44} & \textbf{1.35} & \textbf{1.18} & \textbf{1.21} & \textbf{1.24} & \textbf{1.52} & \textbf{1.88} & 1.38 & 1.31 & 1.08 & \textbf{1.44} \\
$15$  & 1.36 & 0.92 & 0.94 & 0.75 & 1.00 & 0.90 & 1.59 & 0.87 & 0.71 & 0.83 & 1.00 \\
\bottomrule
\end{tabular}
\end{table}

\paragraph{SSA and DA on SDXL generations in CLIP space.}
\label{sec:exp_sdxl}

We apply the framework to output distributions of SDXL
models~\cite{rombach2022ldm,podell2024sdxl}. The Tweedie identity
(Section~\ref{sec:score_high_dim}) gives a score oracle via the
denoiser, and the canonical diffusion runs in SDXL's latent space;
each trajectory snapshot is decoded through the VAE and embedded via
CLIP ViT-L/14~\cite{radford2021learning} into $\R^{768}$, where
$\hat C(\tau)$ is computed (Appendix~\ref{app:sdxl_details}).

Across fourteen prompts at four guidance scales each, entropy
declines monotonically in $w$ for every prompt (full values in
Appendix~\ref{app:sdxl_subgaussian}). Three prompts are roughly unimodal at every scale (peak $\hat S < 1.1$, conservative against $\hat S$'s upward bias above the unimodal baseline $1$; Appendix~\ref{app:ou_autocovariance}).  For the remaining eleven (Table~\ref{tab:sdxl_guidance}), SSA peaks at intermediate $w \in \{3, 7.5\}$, never at the extremes, and at the practitioner default $w = 7.5$ for eight of them. Per-cell trajectory bootstrap SEs (Table~\ref{tab:sdxl_guidance}) confirm the intermediate-$w$ peak is statistically robust across all eleven prompts. Figure~\ref{fig:sdxl_combined} (left) shows the
``person'' prompt as a representative case: intermediate $w$
produces structured variation across distinct coherent types, while
$w = 15$ collapses to near-identical repeats, reproducing the fragmentation-vs-dispersion results of
Figure~\ref{fig:structure} on a real generative model.

DA on these distributions requires a finer integration step than
SSA (Appendix~\ref{app:sdxl_elderly}). For ``a portrait of an elderly person'' at $w = 7.5$, DA1
separates the distribution along a gender axis (Spearman
$|\rho| = 0.90$), visible in Figure~\ref{fig:sdxl_combined} (right),
and is distinct from PC1, which mixes gender with realism and style
(Appendix~\ref{app:sdxl_elderly}). The two-stage criterion gives
$m = 1$ (Appendix~\ref{app:sdxl_elderly}). The two highest SSA values (crane $1.88$, bat $1.52$) come from lexically ambiguous prompts: SDXL produces both senses of each word. On crane, DA1 and PC1 coincide, capturing the bird-vs-equipment split (Appendix~\ref{app:sdxl_crane}).

\begin{figure}[!htbp]
\centering
\includegraphics[width=\textwidth]{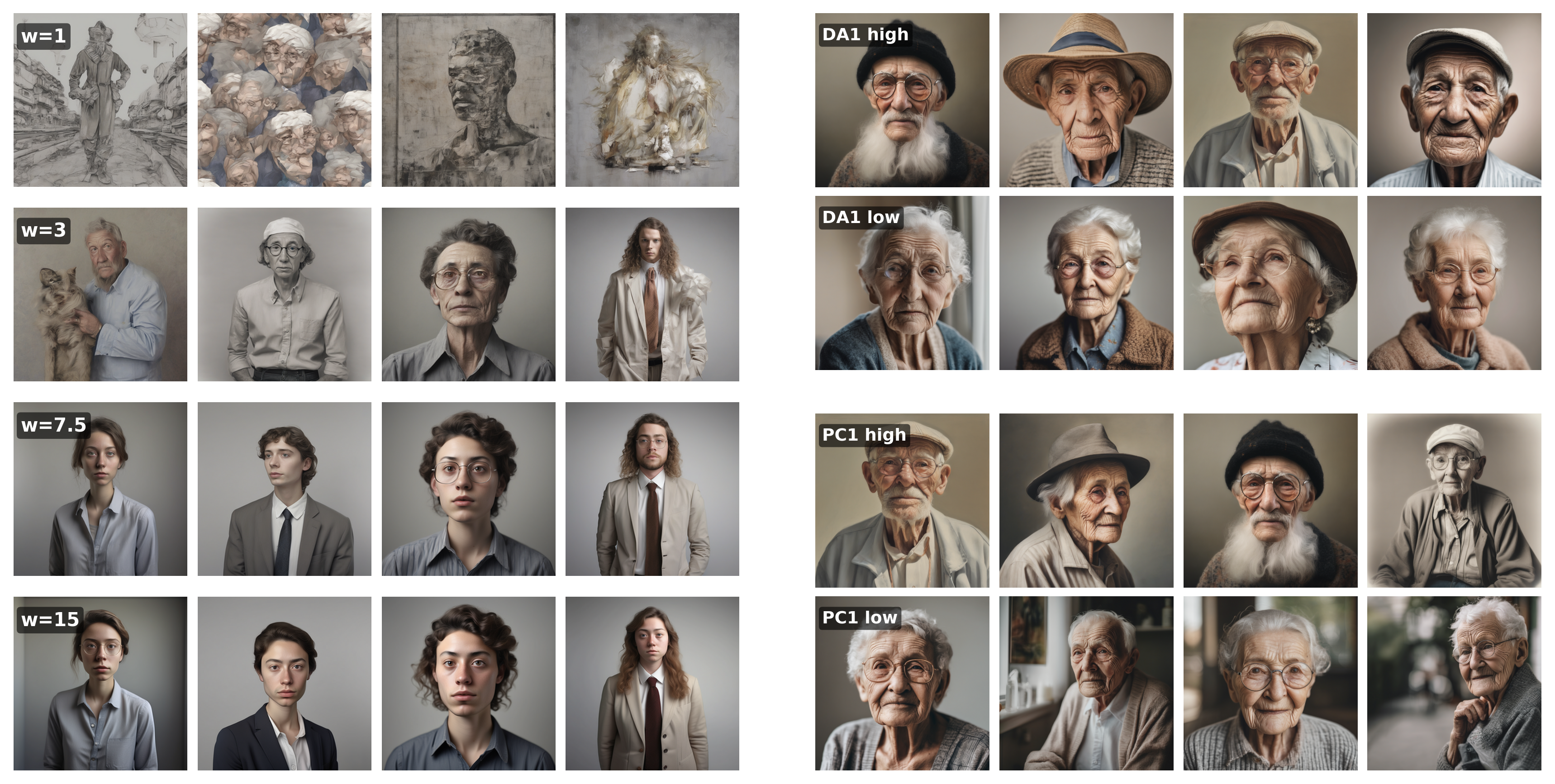}
\caption{ \textbf{SSA and DA on SDXL generations.} \textbf{Left:} Four representative samples per guidance scale for the prompt ``a professional photograph of a person'' (rows: $w \in \{1, 3, 7.5, 15\}$). At $w = 1$ outputs are diverse but weakly prompt-faithful; at $w = 3$ they become mostly on-prompt with broad stylistic variation; at $w = 7.5$ they show structured variation across distinct coherent types (highest SSA, Table~\ref{tab:sdxl_guidance}); at $w = 15$ the distribution narrows onto a small number of repeated types. The same qualitative pattern holds for ``cat'' and ``elderly''. \textbf{Right:} Top-4 images sorted along DA1 (top two rows) and PC1 (bottom two rows) for ``a portrait of an elderly person'' at $w = 7.5$, $\tau^* = 2{,}000$, $\Delta t = 0.0025$. DA1 separates cleanly by gender (Spearman $|\rho| = 0.90$); PC1 mixes gender with realism and style. Spectrum and diagnostics in Appendix~\ref{app:sdxl_elderly}.}
\label{fig:sdxl_combined}
\end{figure}

\paragraph{Molecular conformational landscapes.}
\label{sec:exp_molecular}
A central problem in computational chemistry is identifying the slow
collective variables (CVs) that resolve rare transitions between
metastable conformations. The barrier structure that fragments the
density is the same that makes physical transitions rare, so DA
directions on the canonical diffusion should recover the slow
coordinates that TICA~\cite{perez2013,molgedey1994} extracts from
trajectory data. We test this on alanine dipeptide, a 22-atom
benchmark whose conformational landscape is governed by two backbone dihedrals $\phi, \psi$, using the standard MDShare dataset~\cite{mdshare}
($3 \times 250$\,ns at 300\,K). The system is featurized as $45$
pairwise heavy-atom distances with Gaussian KDE scores
(Appendix~\ref{app:alanine_details}); PCA and TICA serve as linear baselines, with diffusion maps and
Spectral Map~\cite{rydzewski2023} as nonlinear baselines. 

DA at $\tau^* = 10{,}000$\,ps recovers the same $\psi$-then-$\phi$
ordering as TICA from \emph{static samples alone}, while the
trajectory-free baselines fall short: PCA and Spectral
Map~\cite{rydzewski2023} miss $\psi$ entirely, and diffusion maps
recover both dihedrals but with weaker dominance per direction
than DA (Table~\ref{tab:alanine}, Figure~\ref{fig:alanine_da_tica}); the
correlations reflect a linear-method ceiling, since dihedrals are
nonlinear functions of pairwise distances. The two-stage criterion of
Section~\ref{sec:da} gives $\hat m = 2$: $\lambda_1, \lambda_2$ both
reject the null (Appendix~\ref{app:alanine_bonferroni})
and reject unimodality, while no $\tau^*(m)$ exists for $m \geq 3$
(Table~\ref{tab:alanine_bonferroni}).
The chosen $\tau^*$ sits inside the window
$1/(\mu_4 - \mu_3) \ll \tau \ll \log N / (2\mu_3)$ implied by the
subspace-convergence statement
(Proposition~\ref{prop:da_convergence}, Appendix~\ref{app:lag_selection}),
and DA1 varies less than $2^\circ$ across
$\tau \in [700, 10{,}000]$\,ps
(Appendix~\ref{app:alanine_robustness}).

\begin{table}[!htbp]
\centering
\caption{Correlation of linear (left) and nonlinear (right) methods with ground-truth dihedrals on alanine dipeptide ($d = 45$). DA at $\tau^* = 10{,}000$\,ps; configuration details in Appendix~\ref{app:alanine_robustness}. DA/TICA/PCA/SM use all 750k frames; DM uses 2k subsampled (5-seed mean) for kernel scaling.}
\label{tab:alanine}
 
\setlength{\tabcolsep}{3pt}
\begin{tabular}{@{}lcccccc|cccc@{}}
\toprule
& \multicolumn{6}{c|}{\emph{Linear directions in feature space}} & \multicolumn{4}{c}{\emph{Nonlinear embedding coordinates}} \\
& DA1 & DA2 & TICA1 & TICA2 & PCA1 & PCA2 & SM1 & SM2 & DM\,$\psi_2$ & DM\,$\psi_3$ \\
\midrule
$|r(\psi)|$ & \textbf{0.672} & 0.069 & \textbf{0.753} & 0.200 & 0.563 & 0.406 & 0.044 & 0.001 & \textbf{0.615} & 0.348 \\
$|r(\phi)|$ & 0.351 & \textbf{0.873} & 0.265 & \textbf{0.523} & \textbf{0.595} & \textbf{0.598} & \textbf{0.752} & \textbf{0.572} & 0.523 & \textbf{0.659} \\
\bottomrule
\end{tabular}
\end{table}

\begin{figure}[!htbp]
\centering
\includegraphics[width=\textwidth]{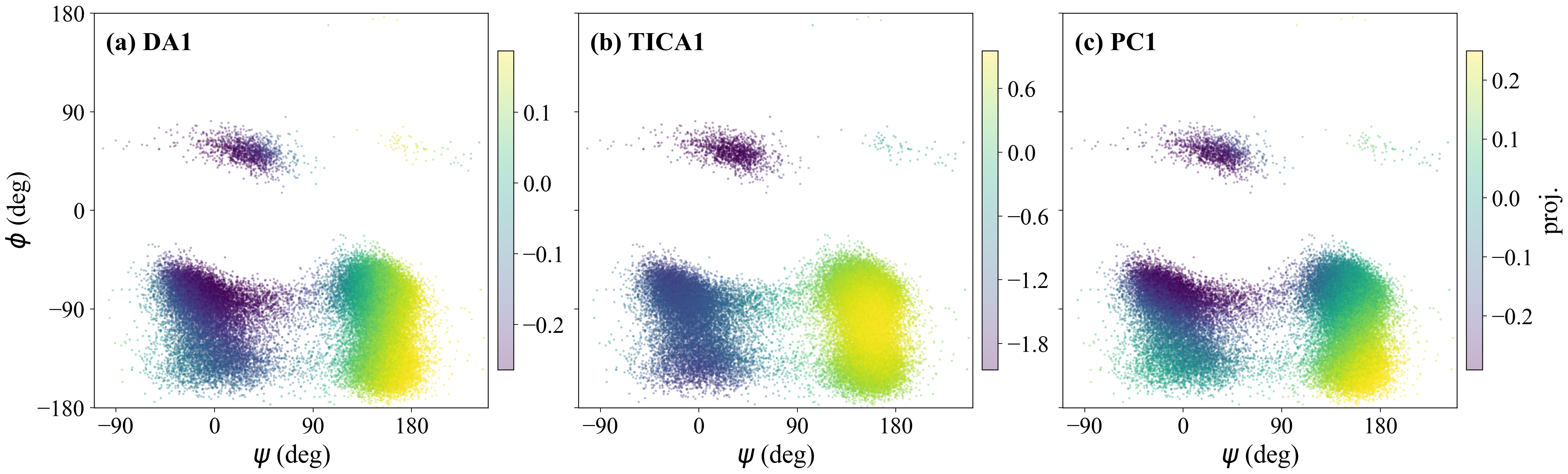}
\caption{ Ramachandran diagram colored by (a) DA1, (b) TICA1, (c) PC1. DA1 and TICA1 vary primarily along $\psi$ (the slowest dihedral); PC1 mixes $\psi$ and $\phi$. $\tau^* = 10{,}000$\,ps.}
\label{fig:alanine_da_tica}
\end{figure}

\section{Discussion}
\label{sec:discussion}
\label{sec:conclusion}

We introduced a canonical reversible diffusion uniquely determined by any density $f$ (Theorem~\ref{thm:uniqueness}) and two readouts of its autocovariance, SSA and DA, that capture barrier structure existing tools miss. Both depend on $f$ only through its score and first two moments, so the framework is compatible with pretrained score-based models via Tweedie's identity, with Theorem~\ref{thm:limit-spectrum} supplying an analytic upper edge $\lambda_+(\tau)$ for lag selection. On eleven SDXL prompts SSA peaks at intermediate guidance for every prompt, near the practitioner default $w = 7.5$; on alanine dipeptide DA recovers the slow backbone dihedrals from static samples, matching TICA's MD ordering.

DA returns linear directions and cannot capture nonlinear mode
geometry, and the framework is only as good as the input score
(Remark~\ref{rem:floor-scope}). The dominant cost is $N \times T$
score evaluations, which on large generative models bounds the
$(\Delta t, T, N)$ regimes reachable in practice; SSA's compute
parameters affect resolution but, for fixed-spectral-profile comparisons (Proposition~\ref{prop:ssa_monotonicity}), not population ordering
(Appendix~\ref{app:hyperparameters}). Linearity is also a feature:
it is what makes the closed-form noise floor of
Theorem~\ref{thm:limit-spectrum} possible. As a measurement tool
applied to existing generative models rather than a new generative
capability, we see no direct deployment risk beyond that of the
underlying models analyzed. 

The framework opens several directions. SSA and DA can probe the output distributions of any pretrained denoiser, and developing this into a standard diagnostic for fragmentation and slow directions in latent or output space is a natural next step. In particular, score-based models trained to cover the full conformational distribution could allow DA and SSA to extract slow CVs and metastable structure from the model alone, replacing the compute-intensive MD simulations classically required.  SSA is also differentiable through the score, so it can be used as an objective to push score-based models or their conditioning inputs toward more or less metastable distributions; the principal obstacle is compute, and amortizing the $N \times T$ score evaluations is its own subproblem. On the theory side, extending Theorem~\ref{thm:limit-spectrum} to non-isotropic Gaussian nulls, other reversible processes, and spiked-model or Tracy--Widom edge fluctuations would give DA a principled finite-sample threshold analogous to Marchenko--Pastur for PCA. Exploring other $A_0$, in particular the whitening choice $A_0 = \Cov_f(X)$ that aligns DA with TICA (App.~\ref{app:robustness_kinetic}), is another direction.

\clearpage
\bibliographystyle{unsrtnat}

\bibliography{ssa_short}

\newpage
\appendix

\section{Canonical Diffusion}
\label{app:proofs}\label{app:design_rationale}

This appendix collects the uniqueness proof of the canonical diffusion (Theorem~\ref{thm:uniqueness}), the proof of similarity invariance (Proposition~\ref{prop:similarity_invariance}), and the design rationale for the choice of diffusion matrix.

\subsection{Proof of Theorem~\ref{thm:uniqueness}}
\label{app:uniqueness_proof}

We assume $f:\R^d \to (0,\infty)$ is smooth, strictly positive, with tails such that integration by parts has no boundary terms. Consider It\^o diffusions on $\R^d$ with generator $\mathcal{L} \varphi(x) = b(x)\cdot \nabla \varphi(x) + \tfrac{1}{2}\tr(A(x) \nabla^2 \varphi(x))$, where $A(x) = \Sigma(x)\Sigma(x)^\top$.

\begin{proof}
For a general diffusion with $A(x)\equiv A_0$ constant, the Fokker--Planck
equation is
\[
\partial_t p(x,t)
= -\nabla \cdot (b(x)\,p(x,t))
+ \frac{1}{2}\,\nabla \cdot \big(A_0 \nabla p(x,t)\big).
\]
Define the probability current
$J_p(x) := b(x)\,p(x) - \frac{1}{2}A_0 \nabla p(x)$.
Stationarity with density $f$ requires $\nabla \cdot J_f(x) = 0$.
Reversibility with respect to $f$ is equivalent to the stronger condition $J_f(x)\equiv 0$ (vanishing probability current; standard characterization of reversible diffusions, see~\cite[Ch.~4]{Pavliotis2014}).
With $A(x)\equiv A_0$ and $J_f\equiv 0$, we have
$0 = b(x)f(x) - \frac{1}{2}A_0\nabla f(x)$, so
$b(x) = \frac{1}{2}A_0 \nabla \log f(x)$, which uniquely determines the
drift and hence the diffusion.
\end{proof}

\begin{remark}[Necessity of both conditions]
Both reversibility and the constant-diffusion-matrix assumption are needed for uniqueness.
Without reversibility, any smooth $f$-divergence-free vector field $u$ satisfying $\nabla \cdot (f(x)\,u(x)) = 0$ can be added to the reversible drift, producing $b(x) = \frac{1}{2}A_0 \nabla \log f(x) + u(x)$, which yields the same stationary density $f$ but different dynamics.
Without fixed $A_0$, if we allow arbitrary $A(x)$ but enforce $J_f\equiv 0$, then $b(x) = \frac{1}{2}\nabla \cdot (A(x)) + \frac{1}{2}A(x)\nabla \log f(x)$, so every choice of diffusion matrix produces a different reversible diffusion with stationary density $f$.
\end{remark}
\subsection{Proof of Proposition~\ref{prop:similarity_invariance}}
\label{app:similarity_proof}

Let $Y = cQX + b$. By change of variables, $Y$ has density
\[
f_Y(y) = c^{-d} f\!\left(c^{-1} Q^\top (y - b)\right).
\]
Applying It\^o's lemma to $Y_t = cQX_t + b$ yields
\[
dY_t = \tfrac{1}{2} c \sigma_f^2\, Q \nabla_x \log f(X_t)\,dt + c \sigma_f\, Q\,dW_t.
\]
The scalar variance of $Y$ is $\sigma_Y^2 = c^2 \sigma_f^2$, and the score transforms as $\nabla_y \log f_Y(y) = \frac{1}{c} Q \nabla_x \log f(x)$. Substituting gives
\[
dY_t = \tfrac{1}{2} \sigma_Y^2\, \nabla \log f_Y(Y_t)\,dt + \sigma_Y\,d\tilde{W}_t,
\]
where $\tilde{W}_t = Q W_t$ is again standard Brownian motion (orthogonal transformations preserve the distribution of $W$). This is exactly the canonical diffusion for $f_Y$. The change-of-variables map $U\varphi(y) := \varphi(c^{-1}Q^\top(y - b))$ is unitary $L^2(f\,dx) \to L^2(f_Y\,dy)$ (the density Jacobian $f_Y(y)\,dy = f(x)\,dx$ makes $U$ an isometry without further normalization), and intertwines the two generators: $\mathcal{L}_{f_Y} U = U \mathcal{L}_f$. Hence the spectra coincide.

\subsection{Why Reversibility (beyond uniqueness)}
\label{app:reversibility}

A diffusion with stationary distribution $f$ is not unique in general: for
any smooth vector field $u$ satisfying $\nabla \cdot (f(x)\,u(x)) = 0$, the
drift $b(x) = \frac{1}{2}A_0 \nabla \log f(x) + u(x)$ yields a different
process with the same stationary density. Such divergence-free components
$u$ introduce net probability currents that circulate mass around modes,
creating rotational dynamics whose mixing rate depends on the choice of $u$
rather than on the barrier structure of $f$ alone. Requiring reversibility
($J_f \equiv 0$) eliminates all such components, ensuring that the only
mechanism for transitioning between modes is diffusion against the barriers
of $f$. This makes metastability a property of $f$ rather than an artifact
of the dynamics.

\subsection{Choice of Diffusion Matrix: Why Not $I_d$ or $\Cov_f(X)$}
\label{app:diffusion_matrix_choice}

\textbf{Why not $A_0 = I_d$.}
Consider a bimodal density $f$ in $\R^d$ with modes at $\pm \mu$. Under the
rescaling $Y = cX$, the transformed density $f_Y$ has modes at $\pm c\mu$.
With $A_0 = I_d$, the canonical SDE for $X$ is
$dX_t = \frac{1}{2}\nabla \log f(X_t)\,dt + dW_t$, while for $Y$ it is
$dY_t = \frac{1}{2}\nabla \log f_Y(Y_t)\,dt + dW_t$. The drift scales as
$\nabla_y \log f_Y(y) = \frac{1}{c}\nabla_x \log f(x)$, so the drift-to-diffusion
ratio changes by a factor of $1/c$: stretching the space ($c > 1$) weakens
the drift relative to the noise, making the process mix faster and lowering
SSA, even though the distribution has the same shape. The multimodality
measure would depend on the choice of units rather than on the geometry
of~$f$.

\begin{figure}[h]
\centering
\includegraphics[width=\linewidth]{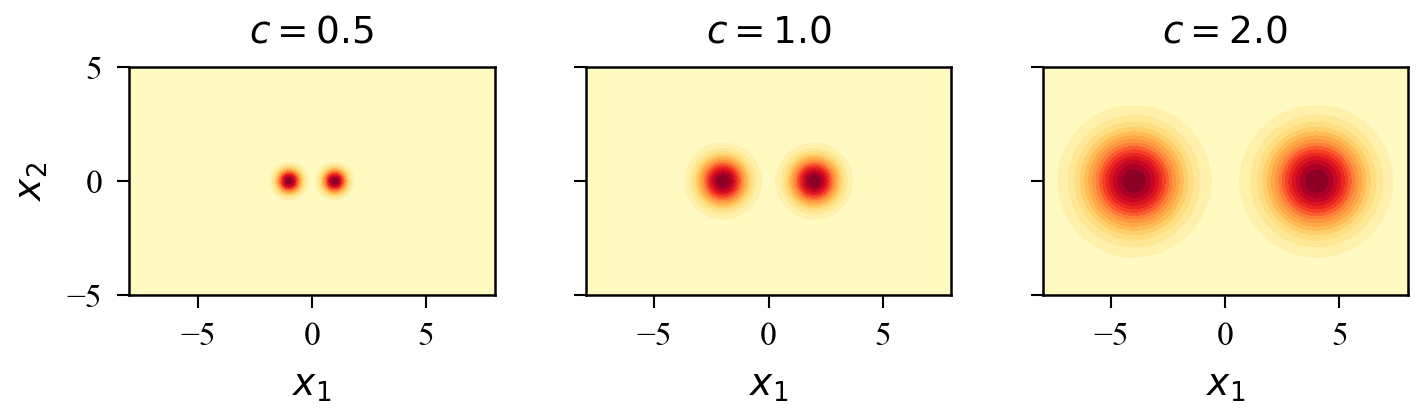}
\caption{Scale-dependence of $A_0 = I_d$. A bimodal density rescaled by
$c \in \{0.5, 1, 2\}$: the three densities have the same shape but
differ in units. Under $A_0 = I_d$, SSA varies across the three
scalings ($\hat S = 3.80, 16.44, 42.62$ for $c = 0.5, 1, 2$); under the
canonical choice $A_0 = \sigma_f^2 I_d$, SSA is constant at
$\hat S = 6.19$, as required by similarity invariance
(Proposition~\ref{prop:similarity_invariance}).}
\label{fig:a0_identity}
\end{figure}

\textbf{Why not $A_0 = \Cov_f(X)$.}
Full-covariance whitening, $A_0 = \Cov_f(X)$, is a natural alternative
that runs the diffusion in coordinates where the marginal covariance is
the identity. This corresponds to a stronger invariance principle than
ours: the resulting construction is invariant under arbitrary
invertible affine transformations of the data, whereas the scalar
variance choice $A_0 = \sigma_f^2 I_d$ is invariant only under
similarities (rotation, uniform scaling, translation;
Proposition~\ref{prop:similarity_invariance}). Both are defensible
design choices with different semantics: similarity invariance treats
SSA as a property of the density up to change of units, while affine
invariance additionally identifies all densities connected by arbitrary
linear reparameterizations. We prefer the minimal choice. It matches
the intuition that a multimodality measure should be a function of the
shape of $f$ up to a change of units, not up to arbitrary linear
distortion; it requires estimating a single scalar $\sigma_f^2$ rather
than $d(d+1)/2$ parameters; and it remains well defined in the
high-dimensional, sample-limited regime
(Appendix~\ref{app:noise_floor}), where $\Cov_f(X)$ is typically
rank-deficient or ill-conditioned and requires regularization that
interpolates back toward the scalar convention.

\section{Autocovariance and SSA}
\label{app:autocov_ssa}

This appendix proves the spectral representation of $C(\tau)$ (Lemma~\ref{lem:spectral}), motivates the design of the trace-normalized autocorrelation $\rho(\tau)$ (trace normalization, centering, squaring), states the spectral kernel monotonicity-and-saturation result, proves SSA monotonicity (Proposition~\ref{prop:ssa_monotonicity}), and gives the SSA Gaussian baseline.

\subsection{Proof of Lemma~\ref{lem:spectral}}
\label{app:spectral}

\begin{proof}

Let $P_\tau = e^{\tau\mathcal{L}_f}$ denote the Markov semigroup generated by $\mathcal{L}_f$. By the Kolmogorov backward equation, $P_\tau$ acts as conditional expectation: $\E_f[\varphi(X_\tau) \mid X_0 = x] = (P_\tau \varphi)(x)$ for any $\varphi \in L^2(f\,dx)$. By the spectral theorem for self-adjoint operators, $\{\psi_k\}_{k \geq 1}$ is a complete orthonormal basis of $L^2(f\,dx)$ and $P_\tau \psi_k = e^{-\mu_k \tau}\psi_k$.

The centered coordinate map $g(x) := x - \bar x_f$ is vector-valued; we assume $f$ has finite second moments so that each component $g_j \in L^2(f\,dx)$. Expanding each component in the eigenbasis,
\[
g_j(x) = \sum_{k \geq 1} (\alpha_k)_j\, \psi_k(x), \qquad (\alpha_k)_j = \langle g_j, \psi_k\rangle_f = \int (x_j - (\bar x_f)_j)\,\psi_k(x)\,f(x)\,dx.
\]
Stacking across coordinates gives $g(x) = \sum_{k \geq 1} \alpha_k\,\psi_k(x)$. The $k = 1$ term drops out because $\psi_1 \equiv 1$, so $(\alpha_1)_j = \E_f[X_j] - (\bar x_f)_j = 0$ by the definition of $\bar x_f$. Hence $g = \sum_{k \geq 2}\alpha_k\,\psi_k$ and, applying $P_\tau$ coordinate-wise,
\be
\E_f[g(X_\tau) \mid X_0 = x] = (P_\tau g)(x) = \sum_{k \geq 2} e^{-\mu_k \tau}\,\alpha_k\,\psi_k(x).
\label{eq:cond_exp_expansion}
\ee

\textit{(i)} Writing the expectation over the joint stationary distribution of $(X_0, X_\tau)$ and integrating out $X_\tau$ first via the tower property,
\[
C(\tau) = \E_f[Y_\tau Y_0^\top] = \int \E_f[Y_\tau \mid X_0 = x]\,(x - \bar x_f)^\top\, f(x)\,dx.
\]
Substituting~\eqref{eq:cond_exp_expansion} and pulling the sum outside the integral,
\[
C(\tau) = \sum_{k \geq 2} e^{-\mu_k \tau}\,\alpha_k \left[\int \psi_k(x)\,(x - \bar x_f)^\top f(x)\,dx\right].
\]
The bracketed quantity is a row vector whose $j$-th entry is $\int (x_j - (\bar x_f)_j)\,\psi_k(x)\,f(x)\,dx = (\alpha_k)_j$, i.e., it equals $\alpha_k^\top$. Therefore
\[
C(\tau) = \sum_{k \geq 2} e^{-\mu_k \tau}\,\alpha_k\,\alpha_k^\top.
\]

\textit{(ii)} Taking the trace of (i) and using $\tr(\alpha_k \alpha_k^\top) = \|\alpha_k\|^2$,
\[
\tr C(\tau) = \sum_{k \geq 2} e^{-\mu_k \tau}\,\|\alpha_k\|^2.
\]
At $\tau = 0$, $\tr C(0) = \sum_{k \geq 2}\|\alpha_k\|^2$, so
\[
\rho(\tau) = \frac{\tr C(\tau)}{\tr C(0)} = \sum_{k \geq 2} e^{-\mu_k \tau}\,w_k, \qquad w_k := \frac{\|\alpha_k\|^2}{\sum_{j \geq 2}\|\alpha_j\|^2}.
\]

\textit{(iii) Spectral formula for SSA (Corollary~\ref{thm:ssa_spectral}).}
From (ii), $\rho(\tau)^2 = \sum_{k, \ell \geq 2} w_k w_\ell\, e^{-(\mu_k + \mu_\ell)\tau}$.
Integrating term-by-term over $[0, \infty)$,
\begin{equation}
S(f) \;=\; \int_0^\infty \rho(\tau)^2\,d\tau
\;=\; \sum_{k,\ell \geq 2} w_k w_\ell \int_0^\infty e^{-(\mu_k+\mu_\ell)\tau}\,d\tau
\;=\; \sum_{k,\ell \geq 2} \frac{w_k w_\ell}{\mu_k + \mu_\ell},
\label{eq:ST_spectral}
\end{equation}
which is the spectral formula of Corollary~\ref{thm:ssa_spectral}.
\end{proof}


\subsection{Why Trace Normalization}
\label{app:trace_normalization}

\begin{proposition}[Orthogonal Invariance]
\label{prop:orthogonal_invariance}
Let $C_Y(\tau) = \E[Y_\tau Y_0^\top]$ and $\rho_Y(\tau) = \tr C_Y(\tau)/\tr C_Y(0)$ denote the autocovariance and trace-normalized autocorrelation of the centered process $Y_t$ from Section~\ref{sec:canonical}, and let $C_{QY}(\tau), \rho_{QY}(\tau)$ be the corresponding quantities for the rotated process $QY_t$. For any orthogonal $Q$, $\rho_{QY}(\tau) = \rho_Y(\tau)$.
\end{proposition}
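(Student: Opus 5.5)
The plan is a direct computation from the definitions. No spectral machinery is needed: the statement is purely about how the trace behaves under conjugation by an orthogonal matrix.

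First, I will express the autocovariance of the rotated process in terms of the original one. The rotated process $QY_t$ is still centered, since $\E[QY_t] = Q\,\E[Y_t] = 0$. It is also stationary whenever $Y_t$ is. By linearity of expectation,
\[
C_{QY}(\tau) = \E\bigl[(QY_\tau)(QY_0)^\top\bigr] = Q\,\E[Y_\tau Y_0^\top]\,Q^\top = Q\,C_Y(\tau)\,Q^\top ,
\]
and this holds for every $\tau \ge 0$, including $\tau = 0$.

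Next, I will take traces. Cyclicity gives $\tr(Q C_Y(\tau) Q^\top) = \tr(Q^\top Q\, C_Y(\tau)) = \tr C_Y(\tau)$, using $Q^\top Q = I_d$. The same identity at $\tau=0$ gives $\tr C_{QY}(0) = \tr C_Y(0)$. This denominator is positive whenever $Y$ is nondegenerate, since it equals $\E\|Y_0\|^2$. Dividing the two identities yields $\rho_{QY}(\tau) = \rho_Y(\tau)$.

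The argument has no real obstacle; the only care needed is bookkeeping. I will note explicitly that $\tr C_Y(0) = \E\|Y_0\|^2 > 0$, so the ratio is well defined, and that the finite second moments assumed in Lemma~\ref{lem:spectral} make all expectations finite. I may also remark that the same computation applies verbatim to the empirical $\hat C(\tau)$, because the sample mean rotates equivariantly. This is consistent with Proposition~\ref{prop:similarity_invariance}, which gives the stronger statement that the whole canonical diffusion is covariant under rotations.
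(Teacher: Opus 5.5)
Your proof is correct and matches the paper's argument exactly: write $C_{QY}(\tau) = Q\,C_Y(\tau)\,Q^\top$, use cyclicity of the trace with $Q^\top Q = I_d$ at both $\tau$ and $\tau = 0$, and divide. Your other points go beyond what the paper writes but are correct and harmless: that $\tr C_Y(0) = \E\|Y_0\|^2 > 0$ makes the ratio well defined, and that the same identity holds for the empirical $\hat C(\tau)$.
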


\begin{proof}
$C_{QY}(\tau) = Q C_Y(\tau) Q^\top$, so $\tr(C_{QY}(\tau)) = \tr(C_Y(\tau))$ by cyclicity. Same at $\tau = 0$.
\end{proof}

Collapsing the $d \times d$ matrix $C(\tau)$ to a scalar requires a choice. A natural requirement is \emph{basis independence}: the score should not change under an orthogonal coordinate rotation. Using a single coordinate's autocorrelation fails this, since the score depends on which axis is picked. Two basis-independent choices remain: the trace $\tr C(\tau) = \sum_i \lambda_i(\tau)$ and the Frobenius norm $\|C(\tau)\|_F = \sqrt{\sum_i \lambda_i(\tau)^2}$. At first glance Frobenius is appealing because it emphasizes the directions with the largest $C(\tau)$-eigenvalues, which seems to reward barrier structure. But this conflates two distinct properties: by Lemma~\ref{lem:spectral}, $C(\tau)$ is a sum of rank-one components with amplitudes $e^{-\mu_k \tau} \|\alpha_k\|^2$, mixing how slow a mode is ($\mu_k$) with how much variance it carries ($\|\alpha_k\|^2$). The components are not orthogonal in general, so this is not the eigendecomposition of $C(\tau)$, but the Frobenius norm is dominated by whichever rank-one component has the largest current amplitude, which at finite $\tau$ can easily be a fast mode with large amplitude rather than a slow mode. The trace instead aggregates modes linearly, and persistence over $\tau$ is handled by integrating $\rho(\tau)^2$ over time: fast modes decay and drop out of the integrand, slow modes persist, and cross-terms in Equation~\eqref{eq:ST_spectral} reward pairs of simultaneously slow modes. Directional emphasis of slow modes is a separate task, handled by DA (Section~\ref{sec:da}) via eigendecomposition of $C(\tau)$ at large $\tau$, where the exponential weighting has already suppressed fast modes. Dividing by $\tr C(0)$ additionally removes absolute-scale dependence, giving $\rho(\tau) = \tr C(\tau) / \tr C(0)$, which is invariant under orthogonal coordinate changes (Proposition~\ref{prop:orthogonal_invariance}) and under isotropic rescaling of the data.

\subsection{Why Center by the Population Mean}
\label{app:centering}

The autocovariance $C(\tau) = \E_f[Y_\tau Y_0^\top]$ requires centering the
process, and there are two natural choices: the population mean
$\bar x_f = \E_f[X]$, computed once from the stationary distribution, or a
per-trajectory mean $\bar{X}^{(n)} = \frac{1}{T+1}\sum_t X_t^{(n)}$. For
ergodic processes observed over long horizons these coincide, but for the
metastable regime SSA is designed to detect, they differ in a way that
matters.

When barriers are impassable within $T$, a trajectory initialized in mode
$A$ stays in mode $A$ for the entire horizon, and its per-trajectory mean is
close to $\mu_A$ rather than $\bar x_f$. Subtracting $\bar{X}^{(n)}$ from such a
trajectory zeros out precisely the inter-mode component of interest,
producing $\hat\rho(\tau) \approx 0$ for any metastable target. Centering by
$\bar x_f$ instead preserves the signal: a trajectory trapped near $\mu_A$
contributes $(\mu_A - \bar x_f)(\mu_A - \bar x_f)^\top$ to $\hat{C}(\tau)$ at every
lag, registering as persistent autocorrelation, which is the correct
behavior for a metastability measure.

\subsection{Why Squaring the Autocorrelation}
\label{app:why_squaring}

Without squaring, the integrated autocorrelation
\[
\int_0^T \rho(\tau)\,d\tau
= \sum_{k \geq 2} w_k \,\frac{1 - e^{-\mu_k T}}{\mu_k}
\]
is (up to the $1/T$ prefactor) the integrated autocorrelation time
$\tau_{\mathrm{cor}}$, a standard mixing measure in MCMC and time-series
analysis~\cite{sokal1997,geyer1992}. It is a weighted sum of individual
mode contributions. At large $T$ each term saturates at $w_k / \mu_k$, so
the sum is dominated by the single smallest eigenvalue: a distribution
with one slow barrier and a distribution with five slow barriers score
nearly identically provided they share the same $\mu_2$.

Squaring before integrating yields cross-terms:
\[
S(T) = \sum_{k,\ell \geq 2} w_k w_\ell\,
\frac{1 - e^{-(\mu_k + \mu_\ell)T}}{\mu_k + \mu_\ell}.
\]
The cross-term between modes $k$ and $\ell$ saturates at
$w_k w_\ell / (\mu_k + \mu_\ell)$, which is large only when
\emph{both} eigenvalues are small. For example, if $\mu_2 = 0.001$
and we compare $\mu_3 = 0.002$ (two slow barriers) against
$\mu_3 = 1.0$ (one slow, one fast), the $(k,\ell) = (2,3)$
cross-term contributes $w_2 w_3 / 0.003$ in the first case versus
$w_2 w_3 / 1.001$ in the second, a ratio of over $300\times$.
The squared formulation thus rewards distributions whose metastable
structure involves multiple slow barriers rather than a single
dominant one.

\subsection{Spectral Kernel: Monotonicity and Saturation}
\label{app:spectral_kernel}

\begin{proposition}[Monotonicity and Saturation]
\label{prop:monotonicity}
For any fixed $T > 0$, $h(x) = (1 - e^{-xT})/x$ is strictly decreasing for $x > 0$, with $\lim_{x \to 0^+} h(x) = T$.
\end{proposition}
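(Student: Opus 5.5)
The plan is to get monotonicity from the integral representation and the limit from a first-order expansion of the numerator. First write
\[
h(x) = \frac{1 - e^{-xT}}{x} = \int_0^T e^{-xs}\,ds ,
\]
which is valid for all $x > 0$ and is exactly the form in which $h$ enters $S(T)$ in Appendix~\ref{app:why_squaring}. Since the integration range $[0,T]$ does not depend on $x$, both claims follow from properties of the integrand $s \mapsto e^{-xs}$.

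For strict monotonicity, take $0 < x_1 < x_2$. For every $s \in (0,T]$ we have $e^{-x_2 s} < e^{-x_1 s}$. The difference $e^{-x_1 s} - e^{-x_2 s}$ is continuous, nonnegative, and strictly positive on a set of positive measure, so its integral over $[0,T]$ is strictly positive; that is, $h(x_1) > h(x_2)$. As a cross-check one can instead differentiate under the integral sign to get $h'(x) = -\int_0^T s\,e^{-xs}\,ds < 0$. The same expression also falls out of the direct quotient-rule computation
\[
h'(x) = \frac{e^{-xT}(1 + xT) - 1}{x^2},
\]
which is negative because $e^{u} > 1 + u$ for $u = xT > 0$.

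For the limit, use dominated convergence on $[0,T]$, with the integrand bounded by $1$ for $x \ge 0$: as $x \to 0^+$, $\int_0^T e^{-xs}\,ds \to \int_0^T 1\,ds = T$. Alternatively, the expansion $1 - e^{-xT} = xT + O(x^2)$ gives the same limit.

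There is no real obstacle here. The only care needed is to make sure the inequality is strict, which is why I use positivity on a set of positive measure (or the strict inequality $e^u > 1+u$) rather than only a pointwise non-strict comparison.
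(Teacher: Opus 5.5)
Your proof is correct, and your quotient-rule ``cross-check'' is in fact the paper's entire argument. The paper sets $z = xT$, notes that $h'(x)$ has the sign of $e^{-z}(1+z) - 1 < 0$ because $e^z > 1+z$, and takes the limit by L'H\^opital.

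Your primary route through $h(x) = \int_0^T e^{-xs}\,ds$ is genuinely different, and it buys a little more:
\begin{itemize}
\item Strict monotonicity becomes a pointwise comparison of integrands, with no differentiation needed.
\item The bounds $0 < h(x) < \min(T, 1/x)$ come for free. The proof of Corollary~\ref{cor:metastable_variance} has to invoke these separately.
\item Iterating gives $(-1)^n h^{(n)}(x) = \int_0^T s^n e^{-xs}\,ds > 0$, so $h$ is completely monotone.
\item The identity $h(\mu_k+\mu_\ell) = \int_0^T e^{-(\mu_k+\mu_\ell)\tau}\,d\tau$ is exactly how the kernel arises in $S_T$. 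So the monotonicity of $S_T$ in each $\mu_k$ (Proposition~\ref{prop:ssa_monotonicity}) can be read directly off the integrand, without the kernel lemma at all.
\end{itemize}

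The paper's version is shorter and needs only one-variable calculus. Either argument is complete.
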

\begin{proof}
L'H\^opital gives the limit. For monotonicity, let $z = xT > 0$. Then $h'(x)$ has the sign of $e^{-z}(z+1) - 1 < 0$ since $e^z > 1+z$ for $z > 0$.
\end{proof}

\begin{corollary}[Metastable Variance Penalty]
\label{cor:metastable_variance}
Fix $T > 0$ and weights $\{w_k\}$. Partition the modes $k \geq 2$ into slow and fast, $\mathcal{K}_{\text{slow}}$ and $\mathcal{K}_{\text{fast}}$. As $\mu_k \to 0$ for all $k \in \mathcal{K}_{\text{slow}}$, with $\mu_k$ for $k \in \mathcal{K}_{\text{fast}}$ held fixed,
\[
S_T(f) \;=\; T\Big(\textstyle\sum_{k \in \mathcal{K}_{\text{slow}}} w_k\Big)^2 \;+\; O(1).
\]
\end{corollary}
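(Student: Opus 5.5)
Starting from the finite-horizon spectral formula of Appendix~\ref{app:why_squaring},
\[
S_T(f) \;=\; \sum_{k,\ell\ge 2} w_k w_\ell\, h(\mu_k+\mu_\ell), \qquad h(x) = \frac{1-e^{-xT}}{x},
\]
obtained by integrating $\rho(\tau)^2 = \sum_{k,\ell} w_k w_\ell e^{-(\mu_k+\mu_\ell)\tau}$ over $[0,T]$ exactly as in the proof of Lemma~\ref{lem:spectral}. I will split the double sum into three blocks according to the slow/fast partition: slow--slow pairs $(k,\ell)\in\mathcal{K}_{\text{slow}}^2$, mixed pairs (one slow, one fast, counted twice by symmetry), and fast--fast pairs. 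The claim then reduces to showing that the slow--slow block converges to $T(\sum_{\text{slow}} w_k)^2$ and that the other two blocks stay bounded uniformly in the limit.

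For the slow--slow block, each argument $\mu_k+\mu_\ell\to 0^+$, so Proposition~\ref{prop:monotonicity} gives $h(\mu_k+\mu_\ell)\to T$. If $\mathcal{K}_{\text{slow}}$ is finite, this gives $\sum w_kw_\ell\,h(\mu_k+\mu_\ell) \to T(\sum_{\text{slow}}w_k)^2$ directly. If the slow set is infinite, I would instead use the uniform bound $0\le h\le T$, also from Proposition~\ref{prop:monotonicity} since $h$ is decreasing with supremum $T$, together with $\sum w_k\le 1$, and apply dominated convergence over the summable weights $w_kw_\ell$. In fact $T-h(x)\le T^2x/2$, so the error in this block is $O(\max_{\text{slow}}\mu_k)$, which is $o(1)$.

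For the mixed block, $\mu_k+\mu_\ell\ge \mu_\ell$ with $\ell$ fast and $\mu_\ell$ fixed and positive, so monotonicity of $h$ bounds each term by $w_kw_\ell\,h(\mu_\ell)\le w_kw_\ell/\mu_\ell$. Summing over $k$ gives at most $\sum_{\text{fast}} w_\ell/\mu_\ell$, which is finite given a positive lower bound $\mu_{\text{fast,min}}$ on the fast eigenvalues (a spectral gap among fast modes). The fast--fast block does not depend on the slow eigenvalues at all and is bounded by $1/(2\mu_{\text{fast,min}})$. Both bounds are constant as the slow eigenvalues go to zero, so these blocks contribute $O(1)$. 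In fact every non-slow-slow term is already bounded by $T$, so the $O(1)$ holds trivially with constant $T$ at fixed $T$; the fast-gap bound is only needed to make the constant independent of $T$.

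The main obstacle is only interpretive: deciding what "$O(1)$" is uniform in. At fixed $T$ and fixed weights, the crude bound $h\le T$ suffices. If uniformity in $T$ is intended, I would use the $1/\mu_{\text{fast,min}}$ bound, which needs the fast eigenvalues to be bounded away from zero. Under that reading the slow--slow error $O(T^2\max\mu_{\text{slow}})$ must also be kept small, which is automatic in the limit $\mu_{\text{slow}}\to 0$ at fixed $T$.
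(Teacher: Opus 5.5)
Your proposal is correct and follows the paper's proof essentially verbatim. You use the same three-block split of $\sum_{k,\ell} w_k w_\ell\, h(\mu_k+\mu_\ell)$, the limit $h\to T$ on the slow--slow block via Proposition~\ref{prop:monotonicity}, and the bound $h(\mu_k+\mu_\ell)\le 1/\mu_\ell$ for any pair with a fast index $\ell$. Your additions are valid refinements the paper leaves implicit: dominated convergence for an infinite slow set, the rate $T-h(x)\le T^2x/2$, and the observation that $h\le T$ alone makes the $O(1)$ trivial at fixed $T$, so the $1/\mu_{\text{fast}}$ bound is what makes the constant independent of $T$.
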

\begin{proof}
Partition the double sum in~\eqref{eq:ST_spectral} into three blocks by whether each index is slow or fast:
\[
S_T(f) = \underbrace{\sum_{k,\ell \in \mathcal{K}_{\text{slow}}}\!\!\! w_k w_\ell\, h(\mu_k+\mu_\ell)}_{A}
     + \underbrace{2\!\!\sum_{\substack{k \in \mathcal{K}_{\text{slow}} \\ \ell \in \mathcal{K}_{\text{fast}}}}\!\!\! w_k w_\ell\, h(\mu_k+\mu_\ell)}_{B}
     + \underbrace{\sum_{k,\ell \in \mathcal{K}_{\text{fast}}}\!\!\! w_k w_\ell\, h(\mu_k+\mu_\ell)}_{C},
\]
where $h(x) = (1-e^{-xT})/x$. For $(k,\ell) \in \mathcal{K}_{\text{slow}}^2$, $\mu_k + \mu_\ell \to 0$ and $h(\mu_k+\mu_\ell) \to T$ by Proposition~\ref{prop:monotonicity}, so $A \to T \sum_{k,\ell \in \mathcal{K}_{\text{slow}}} w_k w_\ell = T\big(\sum_{k \in \mathcal{K}_{\text{slow}}} w_k\big)^2$. For $B$ and $C$, at least one index is fast. Since $h$ is decreasing and $h(x) \leq 1/x$, each term is bounded by $w_k w_\ell / \mu_\ell$ (for $\ell$ fast), giving $B, C = O(1)$. Hence $S_T(f) = T\big(\sum_{k \in \mathcal{K}_{\text{slow}}} w_k\big)^2 + O(1)$ in the stated limit.
\end{proof}

\subsection{Proof of Proposition~\ref{prop:ssa_monotonicity}}
\label{app:monotonicity_proof}
\begin{proof}
For the infinite-horizon score $S(f) = \sum_{i, j \geq 2} w_i w_j / (\mu_i + \mu_j)$, fix $k$. Terms not involving $k$ are independent of $\mu_k$. For off-diagonal terms ($i = k, j \neq k$ or $i \neq k, j = k$),
\[
\frac{\partial}{\partial \mu_k} \frac{w_i w_j}{\mu_i + \mu_j} \;=\; -\frac{w_i w_j}{(\mu_i + \mu_j)^2} \;\leq\; 0.
\]
For the diagonal term $i = j = k$,
\[
\frac{\partial}{\partial \mu_k} \frac{w_k^2}{2\mu_k} \;=\; -\frac{w_k^2}{2\mu_k^2} \;\leq\; 0.
\]
All terms are nonpositive, so the sum is nonincreasing in $\mu_k$, and strictly decreasing whenever $w_k > 0$ (the diagonal term then has strict negative derivative). The same argument applies to the finite-horizon score $S_T(f)$ via Lemma~\ref{lem:spectral}(iii) and the strict monotonicity of the kernel $h_T(x) = (1 - e^{-xT})/x$ in $x > 0$.
\end{proof}

\subsection{Autocovariance Spectrum and SSA Baseline}
\label{app:ou_autocovariance}

For $f = \mathcal{N}(0,\Sigma)$ with eigenvalues $\lambda_1,\ldots,\lambda_d$ and scalar variance $\sigma_f^2 = \frac{1}{d}\tr(\Sigma)$, the canonical diffusion is Ornstein--Uhlenbeck:
\[
dX_t = -\tfrac{1}{2}\sigma_f^2\,\Sigma^{-1} X_t\,dt + \sigma_f\,dW_t,
\]
with stationary autocovariance $C(\tau) = e^{-\frac{\sigma_f^2}{2}\Sigma^{-1}\tau}\,\Sigma$ and trace-normalized autocorrelation
\[
\rho(t) = \frac{\sum_i \lambda_i\, e^{-\sigma_f^2 t/(2\lambda_i)}}{\sum_i \lambda_i}.
\]

For the isotropic case $\Sigma = \sigma^2 I_d$, the canonical diffusion
reduces to the standard OU process
\[
dX_t = -\tfrac{1}{2} X_t\,dt + \sigma\,dW_t.
\]
The drift $-\tfrac{1}{2} X_t$ is independent of $\sigma$ because the two
factors of $\sigma^2$ cancel: the score is $\nabla \log f(x) = -x/\sigma^2$,
and multiplying by the diffusion coefficient $\tfrac{1}{2}\sigma^2$ leaves
just $-\tfrac{1}{2} x$. The noise term $\sigma\,dW_t$ retains the $\sigma$
dependence, so rescaling the data rescales the trajectories without
changing the shape of the dynamics, as expected from similarity invariance
(Proposition~\ref{prop:similarity_invariance}).

The conditional distribution at lag $\tau$ is:
\[
X_\tau \mid X_0 \sim \mathcal{N}\!\left(e^{-\tau/2}X_0,\;\sigma^2(1 - e^{-\tau})I_d\right).
\]
The stationary autocovariance is therefore:
\[
C(\tau) = \E_f[X_\tau X_0^\top] = e^{-\tau/2}\,\E_f[X_0 X_0^\top] = e^{-\tau/2}\,\sigma^2 I_d.
\]
That is, $C(\tau)$ is a scalar multiple of the identity at every lag, with the scalar decaying as $e^{-\tau/2}$. All $d$ eigenvalues of $C(\tau)$ are equal to $e^{-\tau/2}\sigma^2$ and decay uniformly to zero. There are no preferred directions at any lag: the isotropic Gaussian has no metastable structure, and DA applied to the population autocovariance would find no distinguished direction.

Crucially, the decay rate $e^{-\tau/2}$ is independent of $\sigma^2$. This is a direct consequence of the scalar variance normalization: rescaling the data changes both the drift and diffusion strengths proportionally, leaving the autocorrelation structure invariant. The trace-normalized autocorrelation is $\rho(\tau) = e^{-\tau/2}$ and $\rho(\tau)^2 = e^{-\tau}$, giving:
\[
S_T(f) = \int_0^T e^{-\tau}\,d\tau = 1 - e^{-T} \;\to\; 1 \quad\text{as } T \to \infty.
\]
This confirms that SSA introduces no spurious slow timescales for unimodal targets. In contrast, multimodal targets have generator
eigenvalues $\mu_k$ exponentially small in the barrier height. By the
spectral formula of Corollary~\ref{thm:ssa_spectral},
$S(f) = \sum_{k,\ell \geq 2} w_k w_\ell / (\mu_k + \mu_\ell)$, so each
slow-mode pair contributes $w_k w_\ell / (\mu_k + \mu_\ell)$, which
diverges as the corresponding barriers grow (i.e., as
$\mu_k, \mu_\ell \to 0$). The contrast with the isotropic baseline
$S = 1$ is therefore unbounded: SSA grows without limit as barriers
deepen, while the unimodal value stays fixed at unity.

\subsection{Variance-Scale Stopping Rule for Lag-$\tau$ Coupling}
\label{app:rho_clt}

The stopping rule of Section~\ref{sec:ssa} truncates the lag grid at the
largest $\tau$ for which $\hat\rho(\tau)^2$ exceeds a distribution-free upper bound on the asymptotic null variance of $\hat\rho(\tau)$ under $H_0(\tau): X_0 \perp X_\tau$ (no lag-$\tau$ coupling). This is a one-standard-deviation-scale signal threshold rather than a formal rejection test at a specified $\alpha$ level; for isotropic high-$d$ targets it is conservative, while for targets concentrated in a single direction it corresponds to roughly one null standard deviation. The derivation below requires only finite second and fourth moments of $f$; no Gaussian or isotropy assumption is needed, and the bound is computed under independent-pair sampling (the trajectory-averaged estimator pools overlapping pairs, giving a slightly conservative threshold).

\paragraph{Setup.} Write $Y_n = X_0^{(n)} - \bar x_f$ and
$Y_n' = X_\tau^{(n)} - \bar x_f$. Under $H_0$, $Y_n$ and $Y_n'$ are
independent draws from the centered version of $f$; in particular,
$\E[Y_n] = \E[Y_n'] = 0$ and
$\E[Y_n Y_n^\top] = \E[Y_n' Y_n'^\top] = \Sigma_f := \Cov_f(X)$. The
trace-normalized autocorrelation is
\[
\hat\rho(\tau)
= \frac{\tr \hat C(\tau)}{\tr \hat C(0)}
= \frac{A_N}{B_N}, \qquad
A_N := \tfrac{1}{N}\sum_{n=1}^N Y_n^\top Y_n', \qquad
B_N := \tr \hat C(0).
\]

\paragraph{Mean and variance of the numerator.}
Each summand $Z_n := Y_n^\top Y_n' = \tr(Y_n' Y_n^\top)$ satisfies, under $H_0$,
$\E[Z_n] = \tr(\E[Y_n' Y_n^\top]) = \tr(\E[Y_n']\E[Y_n]^\top) = 0$ and
\[
\Var[Z_n]
\;=\; \E[(Y_n^\top Y_n')^2]
\;=\; \E[\tr(Y_n Y_n^\top Y_n' Y_n'^\top)]
\;=\; \tr(\E[Y_n Y_n^\top]\,\E[Y_n' Y_n'^\top])
\;=\; \tr(\Sigma_f^2),
\]
using independence of $Y_n, Y_n'$ in the third equality. Hence
$\E[A_N] = 0$ and $\Var[A_N] = \tr(\Sigma_f^2)/N$.

\paragraph{Mean and variance of the denominator.}
The denominator $B_N = \tr \hat C(0) = \tfrac{1}{N}\sum_n \|Y_n\|^2$ is a
sample mean of i.i.d.\ scalars. Each summand $W_n := \|Y_n\|^2 = \tr(Y_n Y_n^\top)$
satisfies
\[
\E[W_n] = \tr(\E[Y_n Y_n^\top]) = \tr\Sigma_f,
\]
so $\E[B_N] = \tr\Sigma_f$. Expanding in coordinates,
\[
\E[\|Y_n\|^4] = \E\!\left[\Bigl(\sum_i Y_{n,i}^2\Bigr)^2\right] = \sum_{i,j} \E[Y_{n,i}^2 Y_{n,j}^2].
\]
For any $f$ with finite fourth moments, the second-moment relations
$\E[Y_{n,i}^2 Y_{n,j}^2] = \Sigma_{f,ii}\Sigma_{f,jj} + 2\Sigma_{f,ij}^2 + \kappa_{ij}$,
where $\kappa_{ij}$ is the fourth-order cumulant correction at indices
$(i,j)$ (zero for Gaussian $f$), yield
\[
\E[\|Y_n\|^4] \;=\; \sum_{i,j} \Sigma_{f,ii}\Sigma_{f,jj} + 2\sum_{i,j}\Sigma_{f,ij}^2 + \sum_{i,j}\kappa_{ij}
\;=\; (\tr\Sigma_f)^2 + 2\tr(\Sigma_f^2) + \kappa_f,
\]
with $\kappa_f := \sum_{i,j}\kappa_{ij}$, where
$\kappa_f$ collects the fourth-order cumulant contributions of $f$
(vanishing in the Gaussian case). Assuming $f$ has finite fourth
moments, $\kappa_f$ is finite, so
$\Var[W_n] = 2\tr(\Sigma_f^2) + \kappa_f$ is a finite quantity
depending on $\Sigma_f$ and the fourth-order structure of $f$.
Averaging over $N$ i.i.d.\ copies, $\Var[B_N] = \Var[W_n]/N$, so $B_N$
concentrates at $\tr\Sigma_f$ at the $1/\sqrt{N}$ rate.

\paragraph{Asymptotic null distribution of $\hat\rho(\tau)$.}
The denominator $B_N$ has mean $\tr\Sigma_f$ and variance $O(1/N)$
(previous paragraph), so by the multivariate CLT and the delta
method~\cite{vanderVaart1998} applied to the smooth map $g(a, b) = a/b$ at
$(0, \tr\Sigma_f)$ (where $\partial_a g = 1/\tr\Sigma_f$ and $\partial_b g = 0$
because $a = 0$):
\begin{equation}
\sqrt{N}\,\hat\rho(\tau) \;\xrightarrow{d}\; \mathcal{N}\!\left(0,\; \frac{1}{d_{\mathrm{eff}}}\right),
\qquad
d_{\mathrm{eff}} := \frac{(\tr\Sigma_f)^2}{\tr(\Sigma_f^2)}.
\label{eq:rho_clt}
\end{equation}
The asymptotic variance is $\nabla g^\top \Sigma_T \nabla g$, where
$\Sigma_T$ is the covariance of $(Z_n, W_n)$. The diagonal entries are
$\Var[Z_n] = \tr(\Sigma_f^2)$ (shown above) and $\Var[W_n] < \infty$.
The off-diagonal vanishes under $H_0$: conditioning on $Y_n$,
$\E[Z_n W_n] = \E[W_n \cdot \E[Z_n \mid Y_n]] = 0$ since
$\E[Z_n \mid Y_n] = Y_n^\top \E[Y_n'] = 0$. With
$\nabla g = (1/\tr\Sigma_f, 0)$ at $(0, \tr\Sigma_f)$, only the $Z_n$
variance contributes: $\nabla g^\top \Sigma_T \nabla g = \tr(\Sigma_f^2)/(\tr\Sigma_f)^2 = 1/d_{\mathrm{eff}}$.
Equivalently, $\Var[\hat\rho(\tau)] \approx 1/(N d_{\mathrm{eff}})$ to
leading order, with denominator fluctuations contributing only at
$o(1/N)$. The quantity $d_{\mathrm{eff}} \in [1, d]$ is the participation
ratio of the eigenvalues of $\Sigma_f$: it equals $d$ when $\Sigma_f$ is
isotropic and drops toward $1$ as the spectrum concentrates on a single
direction.

\paragraph{Variance-scale threshold.}
The asymptotic distribution \eqref{eq:rho_clt} provides a null-variance scale for $\hat\rho(\tau)$. Since $d_{\mathrm{eff}} \geq 1$ for any $\Sigma_f$, the universal upper bound $\Var[\hat\rho(\tau)] \leq 1/N$ holds without structural assumptions on $f$, giving the distribution-free signal-above-variance rule
\begin{equation}
T_{\max} \;=\; \max\!\left\{\tau \,:\, \hat\rho(\tau)^2 \;>\; \frac{1}{N}\right\}.
\label{eq:rho_clt_threshold}
\end{equation}
This is an operational threshold (squared autocorrelation exceeds one upper-bound null-variance scale), not a formal $\alpha$-level test; for an $\alpha$-level rule one would replace $1/N$ by $z_{1-\alpha/2}^2/(N\,d_{\mathrm{eff}})$ using the plug-in estimate of $d_{\mathrm{eff}}$. Under $H_0$, retaining a lag $\tau$ with $\hat\rho(\tau)^2 > 1/N$ corresponds to a z-score of at least $\sqrt{d_{\mathrm{eff}}}$, conservative on isotropic targets and at the one-standard-deviation scale at $d_{\mathrm{eff}} = 1$. A sharper data-adaptive threshold replaces $1/N$ with the
plug-in estimate $\tr(\hat C(0)^2)/(N\,\tr(\hat C(0))^2)$ from
\eqref{eq:rho_clt}, computable from the same $\hat C(0)$ already needed
for normalization.

\paragraph{Scope and relation to the DA noise floor.}
The CLT \eqref{eq:rho_clt} holds for any fixed $d$ as $N \to \infty$
and assumes only finite second moments of $f$; it applies specifically to
the trace functional $\tr \hat C(\tau)$ that SSA integrates. This is a
distinct regime from the DA noise floor of
Theorem~\ref{thm:limit-spectrum}, which is derived in the proportional
limit $d/N \to \gamma \in (0, \infty)$ and characterizes the top eigenvalue of
$\hat C(\tau)$ rather than its trace. The two thresholds are not
interchangeable: \eqref{eq:rho_clt_threshold} averages over $d$
directions and is the right object for SSA's scalar readout, while
$\lambda_+(\tau)$ takes the maximum over directions and is the right
object for DA's eigendecomposition.

\section{DA and Limit Spectrum}
\label{app:da_limit_spectrum}

We prove Theorem~\ref{thm:limit-spectrum}, the limit spectrum of $\hat C^{\mathrm{sym}}(\tau)$ under the iso-Gaussian null, via Wishart-difference decomposition and free-probability calculation. We also prove Proposition~\ref{prop:da_convergence} on convergence of the leading DA direction, establish the bimodal heuristic, recall the Marchenko--Pastur limit for the sample covariance, and detail the lag-selection criterion for finite-$\tau$ DA.

\subsection{Proof of Proposition~\ref{prop:da_convergence}}
\label{app:da_convergence_proof}
\begin{proof}
Starting from Lemma~\ref{lem:spectral}(i), factor out the dominant exponential:
\[
C(\tau) = e^{-\mu_2 \tau}\bigg[\underbrace{\alpha_2 \alpha_2^\top}_{M} + \underbrace{\sum_{k \geq 3} e^{-(\mu_k - \mu_2)\tau}\, \alpha_k \alpha_k^\top}_{R(\tau)}\bigg].
\]
Since $\mu_k \geq \mu_3$ for all $k \geq 3$, the remainder satisfies $\|R(\tau)\| \leq e^{-(\mu_3 - \mu_2)\tau} \sum_{k \geq 3} \|\alpha_k\|^2$. The prefactor $e^{-\mu_2 \tau}$ does not affect eigenvectors, so $v_1(\tau)$ is the leading eigenvector of $M + R(\tau)$ where $M = \alpha_2 \alpha_2^\top$ is rank-one with eigenvalue $\|\alpha_2\|^2$ and eigenvector $\alpha_2/\|\alpha_2\|$. For sufficiently large $\tau$, $\|R(\tau)\| < \|\alpha_2\|^2$, so the leading eigenvalue of $M$ is separated from all eigenvalues of $M + R(\tau)$ outside a neighborhood of $\|\alpha_2\|^2$ by a gap of at least $\|\alpha_2\|^2 - \|R(\tau)\|$. By the Davis--Kahan $\sin\Theta$ theorem, the unit-norm eigenvector $v_1(\tau)$ satisfies
\[
1 - \frac{\langle v_1(\tau),\, \alpha_2\rangle^2}{\|\alpha_2\|^2} \;\leq\; \left(\frac{\|R(\tau)\|}{\|\alpha_2\|^2 - \|R(\tau)\|}\right)^2.
\]
As $\tau \to \infty$, $\|R(\tau)\| \to 0$ and the denominator approaches the constant $\|\alpha_2\|^2$, so the bound is $O(e^{-2(\mu_3 - \mu_2)\tau})$.

\paragraph{Subspace convergence for $m \geq 2$.}
The same factorization extends to subspace level. Fix $m \geq 1$ and assume $\mu_{m+1} < \mu_{m+2}$ and that $\alpha_2, \ldots, \alpha_{m+1}$ are linearly independent. Linear independence at fixed $m$ implies a positive lower bound on the $m$-th eigenvalue of the slow block,
$\lambda_m\bigl(A_m(\tau)\bigr) \geq c_m\,e^{-(\mu_{m+1} - \mu_2)\tau}$ for some $c_m > 0$, where
$A_m(\tau) = \sum_{k=2}^{m+1} e^{-(\mu_k - \mu_2)\tau}\,\alpha_k\alpha_k^\top$.
Splitting the bracket into this slow block and a tail
$R_m(\tau) = \sum_{k \geq m+2} e^{-(\mu_k - \mu_2)\tau}\,\alpha_k\alpha_k^\top$
with $\|R_m(\tau)\| \leq e^{-(\mu_{m+2} - \mu_2)\tau}\sum_{k \geq m+2}\|\alpha_k\|^2$.
Applying the Davis--Kahan $\sin\Theta$ theorem to the gap between the top-$m$
block of $A_m(\tau)$ and the tail gives
\[
\|V_m(\tau)V_m(\tau)^\top - \Pi_m\|_F^2 \;=\; O\bigl(e^{-2(\mu_{m+2} - \mu_{m+1})\tau}\bigr),
\]
where $V_m(\tau) \in \R^{d \times m}$ collects the top $m$ unit-norm
eigenvectors of $C(\tau)$ and $\Pi_m$ is the orthogonal projector onto
$\mathrm{span}(\alpha_2, \ldots, \alpha_{m+1})$. The bound controls the
\emph{subspace} only; identifying the individual eigenvectors
$v_2(\tau), \ldots, v_m(\tau)$ with specific $\alpha_k$ requires
additional gap and non-collinearity conditions on $\{\alpha_k\}$ that
do not hold in general (the $\{\alpha_k\}$ are projection coefficients
of coordinate functions onto the eigenfunctions $\{\psi_k\}$ and are
not orthogonal). The bimodal case (Appendix~\ref{app:bimodal_heuristic})
is a special case where only $\alpha_2$ is non-negligible, so DA1
identifies cleanly. We validate per-direction recovery in the
multi-direction regime empirically (Section~\ref{sec:exp_molecular},
Appendix~\ref{app:validation}).
\end{proof}

\subsection{Bimodal Heuristic: $\alpha_2 \propto \mu_A - \mu_B$}
\label{app:bimodal_heuristic}

We give the derivation behind the bimodal example in
Section~\ref{sec:da}: for a strongly metastable bimodal density, the
linear summary $\alpha_2$ of the slowest eigenfunction $\psi_2$ is
proportional to the direction connecting mode centers.

\paragraph{Setup.} Let $f = \pi_A f_A + \pi_B f_B$ where $f_A, f_B$ are
localized around $\mu_A, \mu_B$ with negligible overlap, and $\pi_A, \pi_B$
are the basin masses. Write $A$ and $B$ for the two basins (the regions of
support of $f_A$ and $f_B$, respectively).

\paragraph{Piecewise-constant approximation of $\psi_2$.} In the strongly
metastable limit, the slowest eigenfunction of the generator $\mathcal{L}_f$
is approximately piecewise constant on the two basins~\cite{Pavliotis2014}:
\[
\psi_2(x) \approx
\begin{cases} c_A & x \in A, \\ c_B & x \in B, \end{cases}
\]
with a thin transition region across the barrier. The intuition is that
$\psi_2$ must be the slowest-relaxing observable under the dynamics; any
function varying within a basin relaxes quickly (intra-basin mixing is
fast), so the slowest observable is the one that distinguishes the two
basins and is otherwise constant.

\paragraph{The two levels are fixed by orthogonality.} Since
$\psi_1 \equiv 1$ and $\langle \psi_1, \psi_2 \rangle_f = 0$,
\[
\int \psi_2(x)\,f(x)\,dx = \pi_A c_A + \pi_B c_B = 0,
\]
so $c_B = -(\pi_A/\pi_B)\,c_A$. For equal weights ($\pi_A = \pi_B = 1/2$),
this reduces to $c_B = -c_A$; write $c := c_A$.

\paragraph{Computation of $\alpha_2$.} By definition,
$(\alpha_2)_j = \int (x_j - (\bar x_f)_j)\,\psi_2(x)\,f(x)\,dx$. Splitting
the integral over the two basins and using the piecewise-constant
approximation,
\[
\alpha_2 \approx c \int_A (x - \bar x_f)\,f(x)\,dx - c \int_B (x - \bar x_f)\,f(x)\,dx
= c\bigl[\pi_A(\mu_A - \bar x_f) - \pi_B(\mu_B - \bar x_f)\bigr],
\]
where $\mu_A := \E_{f_A}[X]$ and analogously for $\mu_B$. For equal weights,
$\bar x_f = (\mu_A + \mu_B)/2$, so $\mu_A - \bar x_f = \tfrac{1}{2}(\mu_A - \mu_B)$
and $\mu_B - \bar x_f = -\tfrac{1}{2}(\mu_A - \mu_B)$, giving
\[
\alpha_2 \approx \tfrac{c}{2}(\mu_A - \mu_B).
\]
That is, $\alpha_2$ is aligned with the vector connecting the two mode
centers. For unequal weights, the same computation gives
$\alpha_2 \propto \mu_A - \mu_B$ with a prefactor depending on $\pi_A, \pi_B$.

\paragraph{Consequence for DA.} Combined with
Proposition~\ref{prop:da_convergence}, this shows that for a bimodal target
the leading DA direction at large lag converges to the direction connecting
mode centers, independently of how variance is distributed within each mode.
The parallel ellipses experiment
(Section~\ref{sec:exp_parallel_ellipses}, Figure~\ref{fig:structure}c)
exhibits this directly: DA1 points between the two modes while PC1 points
along the within-mode spread.

\subsection{Marchenko--Pastur Law for the Sample Covariance}
\label{app:mp_law}

For reference, the Marchenko--Pastur (MP) law~\cite{marchenko1967} governs the limit eigenvalue distribution of the sample covariance $\hat{\Sigma} = \frac{1}{N}\sum_n x_n x_n^\top$ formed from $N$ i.i.d.\ samples $x_n \in \R^d$ with $\Cov(x) = \sigma^2 I_d$ in the proportional limit $d,N \to \infty$, $d/N \to \gamma > 0$, with density
\[
f_{\mathrm{MP}}(\lambda) = \frac{1}{2\pi\gamma\sigma^2}\,\frac{\sqrt{(\lambda_+ - \lambda)(\lambda - \lambda_-)}}{\lambda}, \quad \lambda_{\pm} = \sigma^2(1 \pm \sqrt{\gamma})^2,
\]
plus a $(1 - 1/\gamma)\delta_0$ atom for $\gamma > 1$. At $\tau = 0$, $\hat C(0)$ is a sample covariance and its eigenvalues follow MP.

\subsection{Proof of Theorem~\ref{thm:limit-spectrum}: exact null spectrum of $\hat C^{\mathrm{sym}}(\tau)$}
\label{app:limit-spectrum}\label{app:noise_floor}

This appendix proves Theorem~\ref{thm:limit-spectrum} in three steps:
(i) a finite-$N$ decomposition of $\hat C^{\mathrm{sym}}(\tau)$ as a
weighted difference of two independent Wishart matrices
(Section~\ref{app:wishart-decomp});
(ii) a short primer on the free-probability tools we use
(Section~\ref{app:fp-primer});
(iii) the derivation of the Stieltjes cubic and edge quartic from
the free-probability calculation
(Section~\ref{app:polynomial-derivation}).
Boundary cases ($\tau = 0, \tau = \infty$) and the small-$\gamma$
semicircle approximation are collected in
Section~\ref{app:consequences}; scope and empirical verification in
Section~\ref{app:scope}.

The proof analyses the simple-pair estimator
$(1/N)\sum_n X_0^{(n)}(X_\tau^{(n)})^\top$, with $\gamma = d/N$ in the
proportional limit. The trajectory-averaged form~\eqref{eq:c_hat_def}
used in Algorithm~\ref{alg:ssa_da} pools $N(T+1-\tau)$ overlapping
pairs that are dependent across $t$; we use the analytic edge as a
reference floor for that estimator and rely on matched Monte-Carlo
calibration when the pipeline departs from the exact simple-pair null.

\paragraph{Related random-matrix results.}
Theorem~\ref{thm:limit-spectrum} sits near two recent random-matrix
threads. Bhattacharjee, Bose, and Dey~\cite{bhattacharjee2023} study
sample cross-covariance matrices $C = n^{-1}XY^\ast$ with correlated
pairs and derive joint asymptotic results implying limiting laws for
symmetric polynomials such as $C + C^\ast$. Kumar and
Charan~\cite{kumar2020differencewishart} analyze weighted differences of
Wishart matrices directly. Our result is more specialized, restricted to the OU-lag Gaussian
null relevant to DA, but in exchange yields closed-form expressions
not given by the more general works above: the lagged Gaussian pair
structure admits an exact finite-$N$ orthogonal rotation to
$c_+W_+ - c_-W_-$, from which we derive the explicit Stieltjes cubic,
edge quartic, atom mass, and the operational noise floor
$\lambda_+(\tau)$ used throughout the paper.

\subsubsection{Step 1: exact Wishart-difference decomposition}
\label{app:wishart-decomp}

The first step is a finite-$N$ rewriting of $\hat C^{\mathrm{sym}}(\tau))$ that converts it
into a sum of functions of two \emph{independent} Gaussian ensembles.
Contrast with the naive decomposition
$\hat C(\tau) = e^{-\tau/2}\hat C(0) + \sigma\sqrt{1-e^{-\tau}}\,\hat R$
with $\hat R = N^{-1}\sum_n \xi^{(n)}(X_0^{(n)})^\top$: the two terms on
the right share $X_0$ and are therefore \emph{not} independent, which
is what prevents a clean application of free probability. Symmetrizing
and rotating in a well-chosen $2{\times}2$ basis produces an
independent-ensemble representation.

\begin{lemma}[Exact decomposition of $\hat C^{\mathrm{sym}}(\tau)$]
\label{lem:decomp}
With $c_\pm(\tau) = (\sigma^2/2)(1 \pm e^{-\tau/2})$, there exist
$d\times N$ random matrices $U, V$ with i.i.d.\ $\mathcal N(0,1)$
entries, independent of each other, such that
\begin{equation}
\hat C^{\mathrm{sym}}(\tau) \;=\; c_+(\tau)\,\frac{UU^\top}{N} \;-\; c_-(\tau)\,\frac{VV^\top}{N}.
\label{eq:wishart-decomp}
\end{equation}
Equivalently, in distribution, $\hat C^{\mathrm{sym}}(\tau) \stackrel{d}{=} c_+(\tau)\,W_+ - c_-(\tau)\,W_-$,
with $W_+,W_-$ i.i.d.\ $d\times d$ standard Wishart matrices at aspect
ratio $\gamma = d/N$. Moreover, $(U, V)$ is obtained from $(\tilde X, Z)$
(standardized samples and noise, defined in the proof) by a specific
orthogonal transformation applied column-wise, with angle a function of
$\tau$ alone.
\end{lemma}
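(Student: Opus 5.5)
The plan is to reduce each lag pair to a pair of independent standard Gaussian columns via the OU transition law recalled in Appendix~\ref{app:ou_autocovariance}, and then diagonalize the symmetric bilinear form $\tfrac12(xy^\top + yx^\top)$ by a fixed $2\times 2$ rotation.

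\textbf{Standardization.} For each $n$, write $X_0^{(n)} = \sigma \tilde X^{(n)}$ with $\tilde X^{(n)} \sim \mathcal N(0, I_d)$. By the conditional law $X_\tau \mid X_0 \sim \mathcal N(e^{-\tau/2}X_0, \sigma^2(1-e^{-\tau})I_d)$, we can write
\[
X_\tau^{(n)} = \sigma\bigl(r\,\tilde X^{(n)} + s\,Z^{(n)}\bigr),
\qquad r = e^{-\tau/2}, \quad s = \sqrt{1-r^2},
\]
with $Z^{(n)} \sim \mathcal N(0,I_d)$ independent of $\tilde X^{(n)}$. Pairs are independent across $n$. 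Stack the columns into $d\times N$ matrices $\tilde X$ and $Z$; these have i.i.d.\ standard normal entries and are mutually independent. Then
\[
\widetilde C^{\mathrm{sym}} = \frac{\sigma^2}{2N}\Bigl[\tilde X Y^\top + Y \tilde X^\top\Bigr],
\qquad Y = r\tilde X + sZ.
\]

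\textbf{Polarization.} Write $Y = \cos\theta\,\tilde X + \sin\theta\,Z$ with $\cos\theta = r$, so $Y$ is itself standard Gaussian. Use the identity
\[
\tfrac12(ab^\top + ba^\top) = \tfrac14\bigl[(a+b)(a+b)^\top - (a-b)(a-b)^\top\bigr].
\]
Here $a + b = (1+r)\tilde X + sZ$ has per-entry variance $(1+r)^2 + s^2 = 2(1+r)$, and $a - b = (1-r)\tilde X - sZ$ has variance $2(1-r)$. Define the normalized columns
\[
U = \frac{(1+r)\tilde X + sZ}{\sqrt{2(1+r)}}, \qquad
V = \frac{(1-r)\tilde X - sZ}{\sqrt{2(1-r)}}.
\]
The rows of this $2\times 2$ map, $\bigl(\sqrt{(1+r)/2}, \sqrt{(1-r)/2}\bigr)$ and $\bigl(\sqrt{(1-r)/2}, -\sqrt{(1+r)/2}\bigr)$, are orthonormal. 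This uses $s^2 = (1-r)(1+r)$. Both rows depend on $\tau$ only, through the half-angle $\theta/2$.

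Since $(\tilde X, Z)$ is entrywise i.i.d.\ standard Gaussian, rotational invariance of the standard Gaussian in $\R^2$, applied entrywise, gives that $(U, V)$ are independent with i.i.d.\ $\mathcal N(0,1)$ entries. Substituting back,
\[
\widetilde C^{\mathrm{sym}} = \frac{\sigma^2}{4}\Bigl[2(1+r)\frac{UU^\top}{N} - 2(1-r)\frac{VV^\top}{N}\Bigr].
\]
This equals $c_+\,UU^\top/N - c_-\,VV^\top/N$ with $c_\pm = (\sigma^2/2)(1 \pm e^{-\tau/2})$. The distributional statement then follows immediately, since $UU^\top/N$ and $VV^\top/N$ are independent standard Wisharts at aspect ratio $\gamma$.

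\textbf{Main obstacle.} The main point needing care is the orthogonality check together with the edge case $\tau \to 0$, where $1 - r \to 0$ and $V$'s normalization degenerates. For $\tau > 0$ this is harmless, since $r < 1$. One would also remark that the same construction is exact at finite $N$ and needs no asymptotics. The rest is bookkeeping.
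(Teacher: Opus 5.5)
Your proof is correct and is essentially the paper's argument. Both write the lag pair as $X_\tau = \sigma(r\tilde X + sZ)$ and apply a fixed column-wise $2\times 2$ orthogonal map to $(\tilde X, Z)$ that diagonalizes the symmetric bilinear form, giving coefficients $\sigma^2(1\pm e^{-\tau/2})$. The difference is only in presentation: your polarization identity with $a=\tilde X$, $b=Y$ writes down the diagonalizing basis in one step, and it works because $\tilde X$ and $Y$ are equal in law, so $a\pm b$ are uncorrelated. The paper instead passes through an intermediate $\pi/4$ rotation and then eigendecomposes the coefficient matrix $K$. Note also that your $2\times 2$ map has determinant $-1$, so it is a reflection rather than a rotation; this is harmless because the standard Gaussian is invariant under all of $O(2)$.
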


\begin{proof}
Standardize $X_0^{(n)} = \sigma\tilde X^{(n)}$ with $\tilde X^{(n)} \sim \mathcal N(0, I_d)$.
Let $\xi^{(n)} \sim \mathcal N(0, I_d)$ be the OU noise column driving
$X_\tau^{(n)} = \alpha\sigma \tilde X^{(n)} + \beta\, \xi^{(n)}$, where
$\alpha = e^{-\tau/2}$ and $\beta = \sigma\sqrt{1 - e^{-\tau}}$; we write
$\tilde X, Z$ for the $d \times N$ matrices whose $n$-th columns are
$\tilde X^{(n)}, \xi^{(n)}$ (respectively). Then
\[
2N\,\hat C^{\mathrm{sym}}(\tau) \;=\; 2\alpha\sigma^2\,\tilde X\tilde X^\top \;+\; \beta\sigma\bigl(Z\tilde X^\top + \tilde X Z^\top\bigr).
\]

\emph{Intermediate change of basis.}
Set $A = (Z + \tilde X)/\sqrt 2$ and $B = (Z - \tilde X)/\sqrt 2$. The
$2\times 2$ transformation $(\tilde X, Z) \mapsto (A, B)$ is orthogonal,
so $(A, B)$ has the same joint i.i.d.\ $\mathcal N(0, I_d)$ distribution
as $(\tilde X, Z)$, with columns of $A$ and $B$ independent of one
another. Then $\tilde X = (A - B)/\sqrt 2$ and $Z = (A + B)/\sqrt 2$, so direct expansion gives
$\tilde X\tilde X^\top = \tfrac12(AA^\top + BB^\top - AB^\top - BA^\top)$
and $Z\tilde X^\top + \tilde X Z^\top = AA^\top - BB^\top$, so
\[
2N\,\hat C^{\mathrm{sym}}(\tau) \;=\; \underbrace{(\alpha\sigma^2 + \beta\sigma)}_{P}\,AA^\top \;+\; \underbrace{(\alpha\sigma^2 - \beta\sigma)}_{Q}\,BB^\top \;+\; \underbrace{(-\alpha\sigma^2)}_{R}\bigl(AB^\top + BA^\top\bigr).
\]

\emph{Diagonalization.}
The $2{\times}2$ coefficient matrix
$K = \bigl(\begin{smallmatrix}P&R\\R&Q\end{smallmatrix}\bigr)$ has
eigenvalues $\tfrac{P+Q}{2} \pm \sqrt{\bigl(\tfrac{P-Q}{2}\bigr)^2 + R^2} = \alpha\sigma^2 \pm \sigma\sqrt{\alpha^2\sigma^2 + \beta^2} = \sigma^2(\alpha \pm 1)$
(using $\alpha^2\sigma^2 + \beta^2 = \sigma^2$). Let $O(\tau)$ be the orthogonal
$2\times 2$ matrix that diagonalizes $K$; its angle $\theta(\tau)$
satisfies $\tan 2\theta = 2R/(P-Q) = -\alpha\sigma/\beta = -e^{-\tau/2}/\sqrt{1 - e^{-\tau}}$,
a function of $\tau$ alone. Applying $O(\tau)$ column-wise to the pair
$(A_n, B_n)$ produces new $d \times N$ random matrices $(U, V)$. Since each column-pair $(A_n, B_n)$ is jointly i.i.d.\ $\mathcal N(0, I_{2d})$ and orthogonal transformations preserve the i.i.d.\ Gaussian distribution, $(U, V)$ are still jointly i.i.d.\ $\mathcal N(0, I_d)$ with columns of $U$ and $V$ independent of one another, and
\[
2N\,\hat C^{\mathrm{sym}}(\tau) \;=\; \sigma^2(1 + \alpha)\,UU^\top \;-\; \sigma^2(1 - \alpha)\,VV^\top.
\]
Dividing by $2N$ gives \eqref{eq:wishart-decomp}. Composing the two
rotations, $(U, V)$ is obtained from $(\tilde X, Z)$ by a single
orthogonal transformation with angle $\pi/4 + \theta(\tau)$, a
deterministic function of $\tau$.
\end{proof}

\subsubsection{Step 2: primer on free additive convolution and the
$R$-transform}
\label{app:fp-primer}

A short summary of the three tools used below; readers familiar with
Voiculescu--Dykema--Nica~\cite{voiculescu1992} or
Nica--Speicher~\cite{nica2006} may skip.

\paragraph{Stieltjes transform.}
For a compactly supported probability measure $\mu$ on $\R$,
$G_\mu(z) = \int (z-x)^{-1}\,d\mu(x)$ for $z \in \mathbb{C}\setminus\mathrm{supp}(\mu)$.
Density: $\rho_\mu(\lambda) = -\pi^{-1}\lim_{\epsilon\downarrow 0}\mathrm{Im}\,G_\mu(\lambda+i\epsilon)$.
Large-$|z|$ expansion: $G_\mu(z) = \sum_{k\ge 0} m_k/z^{k+1}$.

\paragraph{Free additive convolution.}
Let $A, B$ be $d\times d$ self-adjoint random matrices that are
independent, with at least one of them orthogonally invariant in
distribution ($A \stackrel{d}{=} O A O^\top$ for every orthogonal
$O$). For a self-adjoint $d \times d$ matrix $X$, write
$\mathrm{ESD}(X) := \frac{1}{d}\sum_{i=1}^d \delta_{\lambda_i(X)}$
for its empirical spectral distribution and
$\mu_X := \lim_{d \to \infty}\mathrm{ESD}(X)$ for the limit (when it
exists). Voiculescu's asymptotic-freeness
theorem~\cite{voiculescu1992,nica2006} guarantees that
$\mathrm{ESD}(A+B)$ converges weakly almost surely to a measure on
$\R$, the \emph{free additive convolution} of $\mu_A$ and $\mu_B$:
\[
\mu_A \boxplus \mu_B \;:=\; \lim_{d\to\infty}\,\mathrm{ESD}(A+B).
\]
It is the non-commutative analogue of classical convolution (mean and
variance add under both, but higher moments differ). The limit is not
computable in closed form from this definition; the next paragraph
introduces the $R$-transform, which linearizes it.

\paragraph{The $R$-transform.}
Let $K_\mu(z) = G_\mu^{-1}(z)$ be the compositional inverse of $G_\mu$
near $z=0$. The $R$-transform is $R_\mu(z) := K_\mu(z) - 1/z$. It is
\emph{additive} under free convolution, $R_{A\boxplus B}(z) = R_A(z) + R_B(z)$,
and transforms simply under scaling and reflection:
$R_{cX}(z) = c\,R_X(cz)$, $R_{-X}(z) = -R_X(-z)$. The Marchenko--Pastur
law has $R_{\mathrm{MP}(\gamma)}(z) = 1/(1 - \gamma z)$~\cite{marchenko1967, bai2010, nica2006}.

\paragraph{Recipe: from $R$-transforms back to a density.}
The three identities above give a procedure for deriving the spectral
distribution $\mu_A \boxplus \mu_B$ when $R_A, R_B$ are known:
\begin{enumerate}[label=(\roman*),leftmargin=2.5em,nosep]
\item By additivity, $R_\mu = R_A + R_B$.
\item Set $K_\mu(z) := R_\mu(z) + 1/z$. Then $G_\mu$, the Stieltjes
transform of $\mu_A \boxplus \mu_B$, is the compositional inverse,
defined implicitly by $K_\mu(G_\mu(z)) = z$. This is a polynomial
equation in $G$; in our setting it is the \emph{Stieltjes cubic}
(Section~\ref{app:polynomial-derivation}, Eq.~\eqref{eq:stieltjes-cubic}),
so $G_\mu(z)$ is an algebraic function of $z$ given by Cardano's
formula on the upper-half-plane branch.
\item Recover $\mu_A \boxplus \mu_B$ from $G_\mu$ by Stieltjes
inversion (paragraph above): its density is
$\rho(\lambda) = -\pi^{-1}\lim_{\epsilon\downarrow 0}\mathrm{Im}\,G_\mu(\lambda+i\epsilon)$.
The support boundary is the locus where $G$ acquires its imaginary
part, determined by the discriminant of the polynomial in (ii);
in our setting this is the \emph{edge quartic}
(Section~\ref{app:polynomial-derivation}, Eq.~\eqref{eq:edge-quartic}).
\end{enumerate}
The remainder of this section carries out item~(i) for our case (Asymptotic-freeness paragraph below, giving $R_{\mu_\tau^\gamma}$ explicitly in Eq.~\eqref{eq:R-explicit-appendix}) plus the atom contribution via the free-conv atom rule. Items~(ii) and~(iii) are executed in Step 3 (Section~\ref{app:polynomial-derivation}), which derives the Stieltjes cubic from $K(G)=z$ and the edge quartic from its discriminant.

\paragraph{Asymptotic freeness of $W_+$ and $-W_-$.}
$W_+, W_-$ in Lemma~\ref{lem:decomp} are independent standard Wisharts with i.i.d.\ $\mathcal{N}(0, 1)$ entries, each orthogonally invariant in distribution. By Voiculescu's asymptotic-freeness theorem for Gaussian ensembles~\cite{voiculescu1992,nica2006}, they become
asymptotically free as $d\to\infty$ with $d/N \to \gamma \in (0, \infty)$.
Consequently $\mathrm{ESD}(\hat C^{\mathrm{sym}}(\tau))$ converges weakly almost surely to
$\mu_\tau^\gamma = (c_+\,\mathrm{MP}(\gamma)) \boxplus (-c_-\,\mathrm{MP}(\gamma))$,
and
\begin{equation}
R_{\mu_\tau^\gamma}(z) \;=\; \frac{c_+}{1 - c_+\gamma z} \;-\; \frac{c_-}{1 + c_-\gamma z}.
\label{eq:R-explicit-appendix}
\end{equation}

\paragraph{Atom at zero for $\gamma > 2$.}
The standard Marchenko--Pastur measure $\mathrm{MP}(\gamma)$ has a
point mass $\max(0, 1-1/\gamma)$ at the origin, arising from the
rank deficiency of the Wishart matrix when $d > N$~\cite{marchenko1967, bai2010}.
Free additive convolution combines atoms via the rule
\cite{nica2006}: if $\mu$ has atom of mass $p$ at $a$ and $\nu$ has
atom of mass $q$ at $b$, then $\mu \boxplus \nu$ has atom of mass
$\max(0, p + q - 1)$ at $a + b$. Applied to
$(c_+\,\mathrm{MP}(\gamma)) \boxplus (-c_-\,\mathrm{MP}(\gamma))$ —
both atoms at $0$ with mass $1 - 1/\gamma$ for $\gamma > 1$ — the
result has an atom at $0$ of mass
$\max(0, 2(1 - 1/\gamma) - 1) = \max(0, 1 - 2/\gamma)$. The continuous
part has the complementary mass $\min(1, 2/\gamma)$ and remains
characterized by the Stieltjes cubic and edge quartic derived below;
the $R$-transform identity~\eqref{eq:R-explicit-appendix} continues to
hold for all $\gamma > 0$. The formal identity $R_{\mathrm{MP}(\gamma)}(z) = 1/(1-\gamma z)$ is valid as a generating function regardless of $\gamma$: for $\gamma \le 1$ it inverts to the absolutely continuous MP density on $[(1-\sqrt\gamma)^2, (1+\sqrt\gamma)^2]$; for $\gamma > 1$ the same formal $R$ inverts via the Stieltjes transform $G(z) = m/z + h(z)$ to a measure with an atom of mass $m = 1 - 1/\gamma$ at $0$, with the atom emerging because $G_\mu$ acquires a residue at $z = 0$. This recovers the decomposition
\eqref{eq:mu_tau_decomp} of Theorem~\ref{thm:limit-spectrum}. The
rank-deficiency argument has a direct finite-$N$ counterpart:
$\hat C^{\mathrm{sym}}(\tau)$ has rank at most $2N$ (it is a sum of $N$ symmetric
outer products on $\mathrm{span}(X_t, X_{t+\tau})$), so when $d > 2N$
exactly $d - 2N$ eigenvalues are deterministically zero, and the
empirical atom mass $(d - 2N)/d \to 1 - 2/\gamma$ matches the free-conv
prediction. The same atom mass can be obtained directly from the cubic Stieltjes equation~\eqref{eq:stieltjes-cubic} by a dominant-balance argument at $z = 0$ (not shown).

\subsubsection{Step 3: polynomial equations via the $R$-transform}
\label{app:polynomial-derivation}

\paragraph{Derivation of the Stieltjes cubic.}
The compositional-inverse identity $K(G(\lambda)) = \lambda$ with
$K = R + 1/z$ and \eqref{eq:R-explicit-appendix} yields, writing $G = G(\lambda)$,
\[
\frac{c_+}{1 - c_+\gamma G} \;-\; \frac{c_-}{1 + c_-\gamma G} \;+\; \frac{1}{G} \;=\; \lambda.
\]
Multiplying through by $G(1 - c_+\gamma G)(1 + c_-\gamma G)$ and
collecting terms polynomial in $G$ gives the cubic
\eqref{eq:stieltjes-cubic} of Theorem~\ref{thm:limit-spectrum}.
The leading $G^3$ term comes from the product of the two denominators
times $\lambda G$; the $G^2$ coefficient collects cross terms and
simplifies to $\gamma[c_+c_-(2-\gamma) + \lambda(c_+ - c_-)]$; the $G^1$
coefficient reduces to $(c_+ - c_-)(1-\gamma) - \lambda$; and the
constant term is $1$. Density on the interior of the support follows
from Sokhotski--Plemelj~\cite{bai2010}: $\mathrm{Im}\,G(\lambda + i0^+) = -\pi\rho(\lambda)$.

\paragraph{Derivation of the edge quartic.}
Edges are points at which the complex-conjugate pair of roots of the
cubic \eqref{eq:stieltjes-cubic} coalesces onto the real line as a
double root. A cubic of the form $a G^3 + b G^2 + c G + 1$ (constant term $1$, leading coefficient $a$) has a double root iff its discriminant vanishes:
$18abc - 4b^3 + b^2 c^2 - 4ac^3 - 27a^2 = 0$, which with $(a,b,c)$
from \eqref{eq:stieltjes-cubic} is precisely the quartic
$\Delta_\tau(\lambda) = 0$ of Theorem~\ref{thm:limit-spectrum}(i).
Expanding as a polynomial in $\lambda$: each of $a, b, c$ is at most
linear in $\lambda$, so the five summands have $\lambda$-degrees
$(3, 3, 4, 4, 2)$. Writing $[\lambda^k] f$ for the coefficient of
$\lambda^k$ in a polynomial $f$ (in $\lambda$), the $\lambda^4$
contribution is
\[
[\lambda^4]\,\Delta_\tau \;=\; [\lambda^2]\,b^2 \cdot [\lambda^2]\,c^2 \;+\; (-4)\,[\lambda^1]\,a \cdot [\lambda^3]\,c^3 \;=\; \gamma^2(c_+\!-\!c_-)^2 + 4 c_+c_-\gamma^2 \;=\; \gamma^2(c_++c_-)^2 \;=\; \gamma^2\sigma^4,
\]
using $(c_+-c_-)^2 + 4 c_+c_- = (c_++c_-)^2$ and $c_++c_- = \sigma^2$.
Depending on $(\gamma, \tau)$, the quartic $\Delta_\tau$ may have two or four real roots. The real roots are precisely the candidate boundary points of the continuous support: in the one-cut regime only the smallest and largest real roots occur as support edges; in the supercritical regime $\gamma > 2$, four real roots occur and the support splits into two intervals around the atom at zero (Section~\ref{app:consequences}). The operational threshold $\lambda_+(\tau)$ is the largest real support edge.

\subsubsection{Boundary cases and special regimes}
\label{app:consequences}

\begin{remark}[Two regimes for $\tau > 0$]
\label{rem:two_regimes}
For fixed $\tau > 0$, the limit measure $\mu_\tau^\gamma$
in~\eqref{eq:mu_tau_decomp} takes one of two qualitatively different
forms:
\begin{itemize}[leftmargin=*,nosep]
\item \emph{Sub-critical regime, $\gamma \in (0, 2]$:}
$\mu_\tau^\gamma = \rho_\tau^\gamma$ is a continuous probability
measure on $[\lambda_-(\tau), \lambda_+(\tau)]$, no atom. Even when
each MP factor in the free-additive decomposition has its own atom
(for $\gamma > 1$), the rule $\max(0, p+q-1)$ with
$p = q = 1 - 1/\gamma$ wipes both out exactly when $\gamma \le 2$.
\item \emph{Super-critical regime, $\gamma > 2$:}
$\mu_\tau^\gamma = (1 - 2/\gamma)\,\delta_0 + (2/\gamma)\,\rho_\tau^\gamma$,
the continuous part $\rho_\tau^\gamma$ on the same support, given by
the same cubic.
\end{itemize}
The cubic equation \eqref{eq:stieltjes-cubic} and edge quartic
are identical across both regimes; only the mass distribution between
atom and continuous part changes. So crossing $\gamma = 1$ is a
non-event for the $\tau > 0$ spectrum; the natural atom threshold is
$\gamma = 2$.

\paragraph{Geometric origin: rank-doubling at $\tau = 0^+$.}
At $\tau = 0$ each trajectory contributes one column $X_t$ to the
column space of $\hat C(0)$, giving rank $\le N$ and forced zero
eigenvalues for $d > N$. At any $\tau > 0$ the symmetrization picks
up both $X_t$ and $X_{t+\tau}$ as distinct vectors, doubling the rank
bound to $2N$ and unsticking the previously-atomic mass into a sharp
continuous peak near zero that smooths out as $\tau$ grows. For
$\gamma \in (1, 2)$ the discontinuity is most striking: a substantial
atom of mass $1 - 1/\gamma$ at $\tau = 0$ vanishes the moment $\tau$
becomes positive, even infinitesimally; the continuous bulk absorbs
the atomic mass via free-conv cancellation.
\end{remark}

\paragraph{$\tau = 0$: classical Marchenko--Pastur.} $c_-=0$, so
$a\equiv 0$ and the Stieltjes cubic collapses to the quadratic
$\gamma\sigma^2\lambda\,G^2 + (\sigma^2(1-\gamma) - \lambda)G + 1 = 0$,
the classical Stieltjes equation of $\sigma^2\mathrm{MP}(\gamma)$. Its
discriminant gives edges $\lambda_\pm(0) = \sigma^2(1 \pm \sqrt\gamma)^2$
and density
$\rho_0^\gamma(\lambda) = (2\pi\gamma\sigma^2\lambda)^{-1}\sqrt{(\lambda_+-\lambda)(\lambda-\lambda_-)}$
on $[\lambda_-, \lambda_+]$.

\paragraph{$\tau = \infty$: symmetric free difference of MPs.}
$c_+ = c_- = c := \sigma^2/2$. The edge quartic becomes a biquadratic
in $\lambda$: $\lambda^4 + c^2(1 - 10\gamma - 2\gamma^2)\lambda^2 - \gamma c^4(2-\gamma)^3 = 0$,
with discriminant $(1+4\gamma)^3$. Setting $s = \sqrt{1+4\gamma}$ and
simplifying via $2-\gamma = (3-s)(3+s)/4$ and
$s^4 + 8s^3 + 18s^2 - 27 = (s-1)(s+3)^3$,
\begin{equation}
\boxed{\;\lambda_+(\infty) \;=\; \frac{\sigma^2}{8}(s+3)^{3/2}\sqrt{s-1}, \qquad s = \sqrt{1+4\gamma},\;}
\label{eq:edge-tauinf-appendix}
\end{equation}
and $\lambda_-(\infty) = -\lambda_+(\infty)$ by symmetry. Small-$\gamma$
expansion: $\lambda_+(\infty) = \sigma^2\sqrt{2\gamma}(1 + O(\gamma))$.
Numerical values: at $\gamma = 0.1$, $\lambda_+(\infty) \approx 0.458\,\sigma^2$;
at $\gamma = 0.5$, $\lambda_+(\infty) \approx 1.101\,\sigma^2$;
at $\gamma = 1.0$, $\lambda_+(\infty) \approx 1.665\,\sigma^2$. The
empirical eigenvalue maxima at $\tau = 10$ sit $1\text{--}3\%$ above these closed-form predictions across all $\gamma$ in Figures~\ref{fig:free_conv_overlay_g0p25}--\ref{fig:free_conv_overlay_g5p0}, the expected Tracy--Widom edge concentration.

\paragraph{Small-$\gamma$ semicircle for arbitrary $\tau$.}
For $\gamma \ll 1$ the $R$-transform \eqref{eq:R-explicit-appendix} is
near-linear in $z$, and the bulk of $\mu_\tau^\gamma$ is approximately a
Wigner semicircle of radius $r(\tau) = 2\sqrt{(c_+^2 + c_-^2)\gamma}$
centered at the mean $c_+ - c_- = \sigma^2 e^{-\tau/2}$. Consequently
\begin{equation}
\lambda_+(\tau) \;\approx\; (c_+ - c_-) \;+\; 2\sqrt{(c_+^2 + c_-^2)\,\gamma} \;+\; O(\gamma),
\label{eq:edge-small-gamma-appendix}
\end{equation}
which serves as a closed-form approximation to $\lambda_+(\tau)$ that
avoids numerical root-finding on the quartic $\Delta_\tau$ in the
small-$\gamma$ regime. At $\tau=0$ it recovers
$\sigma^2(1+2\sqrt\gamma) \approx \sigma^2(1+\sqrt\gamma)^2$; at
$\tau=\infty$ it recovers $\sigma^2\sqrt{2\gamma}$.

\subsubsection{Scope and empirical verification}
\label{app:scope}

The upper edge $\lambda_+(\tau)$ is a sharp threshold separating signal eigenvalues from null noise, and the bulk density $\rho_\tau^\gamma$ matches simulated eigenvalue histograms tightly at every $\tau$ (Figures~\ref{fig:free_conv_overlay_g0p25}--\ref{fig:free_conv_overlay_g5p0} below).

\begin{remark}[Score-oracle scope]
\label{rem:score_oracle}
Theorem~\ref{thm:limit-spectrum} calibrates the spectrum under an
isotropic-Gaussian null and so requires the score oracle to be
approximately correct \emph{on iso-Gaussian inputs}, a local
condition strictly weaker than global fidelity to the true $f$. We
distinguish:
\begin{itemize}[leftmargin=*,nosep]
\item \emph{Null fidelity:} the score on iso-Gaussian inputs equals
$-x/\sigma^2$. Required for Theorem~\ref{thm:limit-spectrum} to apply
and for $\lambda_+(\tau)$ to bound the null spectrum. The iso-Gaussian
score is linear and falls within the high-noise, near-Gaussian regime
that Tweedie's identity exploits, so this local condition is
plausible for pretrained denoisers and holds asymptotically for KDE
with shrinking bandwidth.
\item \emph{Signal fidelity:} the score on the data distribution
$f$ equals $\nabla \log f$. Required for the eigenvectors of
$\hat C(\tau)$ to reflect the true metastable structure of $f$
rather than the score model's $\hat f$. This is a stronger,
global condition. Approximation bias (finite NN capacity) and
estimation error (finite training data) both contribute, and
empirical validation against ground truth, as in the alanine
experiment of Section~\ref{sec:exp_molecular}, is the appropriate
check.
\end{itemize}
The two requirements decouple: noise-floor calibration depends only
on null fidelity, while interpretation of the recovered DA directions
depends on signal fidelity.
\end{remark}

\paragraph{Scope.}
Lemma~\ref{lem:decomp} requires that the process whose autocovariance
is eigendecomposed is itself the canonical diffusion. When DA is
applied to a nonlinear transformation of diffusion trajectories (for
example, when latent-space trajectories are embedded through a
pretrained encoder before computing $\hat C(\tau)$), the
decomposition no longer holds exactly, and Theorem~\ref{thm:limit-spectrum}
applies only approximately. Even in the native space, the theorem is
exact for the independent-pair lag estimator analyzed in
Appendix~\ref{app:limit-spectrum}; the trajectory-averaged estimator
\eqref{eq:c_hat_def} used in Algorithm~\ref{alg:ssa_da} averages
overlapping lagged pairs along each trajectory, so those summands are
dependent. We therefore use $\lambda_+(\tau)$ as an analytic
reference floor for the trajectory-averaged estimator, and
in settings with additional nonlinear embeddings we additionally verify
by a matched Monte-Carlo null: simulate the canonical diffusion for an
isotropic Gaussian with variance matched to the embedding-space
covariance, push the trajectories through the same embedding, and take
the top eigenvalue of the resulting $\hat C(\tau)$ as an empirical
floor (applied in Appendix~\ref{app:sdxl_elderly}).

\paragraph{Empirical verification under the Gaussian null.}
Figures~\ref{fig:free_conv_overlay_g0p25}--\ref{fig:free_conv_overlay_g5p0} compare the analytic density $\rho_\tau^\gamma$ from Theorem~\ref{thm:limit-spectrum} to simulated eigenvalue histograms of $\hat C^{\mathrm{sym}}(\tau)$ across six aspect ratios $\gamma \in \{0.25, 0.5, 0.75, 1, 2, 5\}$ at four lags $\tau \in \{0, 0.5, 2, 10\}$ each, with $N = 500$ fixed and eigenvalues pooled over three seeds per panel. At $\tau = 0$ the free-convolution density coincides with the classical Marchenko--Pastur law (Section~\ref{app:consequences}), with a $\delta_0$ atom of mass $\max(0, 1-1/\gamma)$ visible for $\gamma \ge 1$. At every $\tau > 0$ the free-convolution curve tracks the empirical bulk tightly, the analytic upper edge $\lambda_+(\tau)$ contains the bulk, and the atom (when present, $\gamma > 2$) sits at $w_0 = 1 - 2/\gamma$ as predicted. The closed-form $\tau \to \infty$ edge $\lambda_+(\infty)$ from Equation~\eqref{eq:edge-tauinf-appendix} matches the empirical $\tau = 10$ bulk edge across all $\gamma$, confirming that $\lambda_+(\tau)$ is sharp and supporting its use as the noise floor for DA's lag-selection rule (Section~\ref{sec:da}, Appendix~\ref{app:lag_selection}).

\begin{figure}[p]
\centering
\vspace*{\fill}
\begin{minipage}{\textwidth}
\centering
\includegraphics[width=0.62\textwidth]{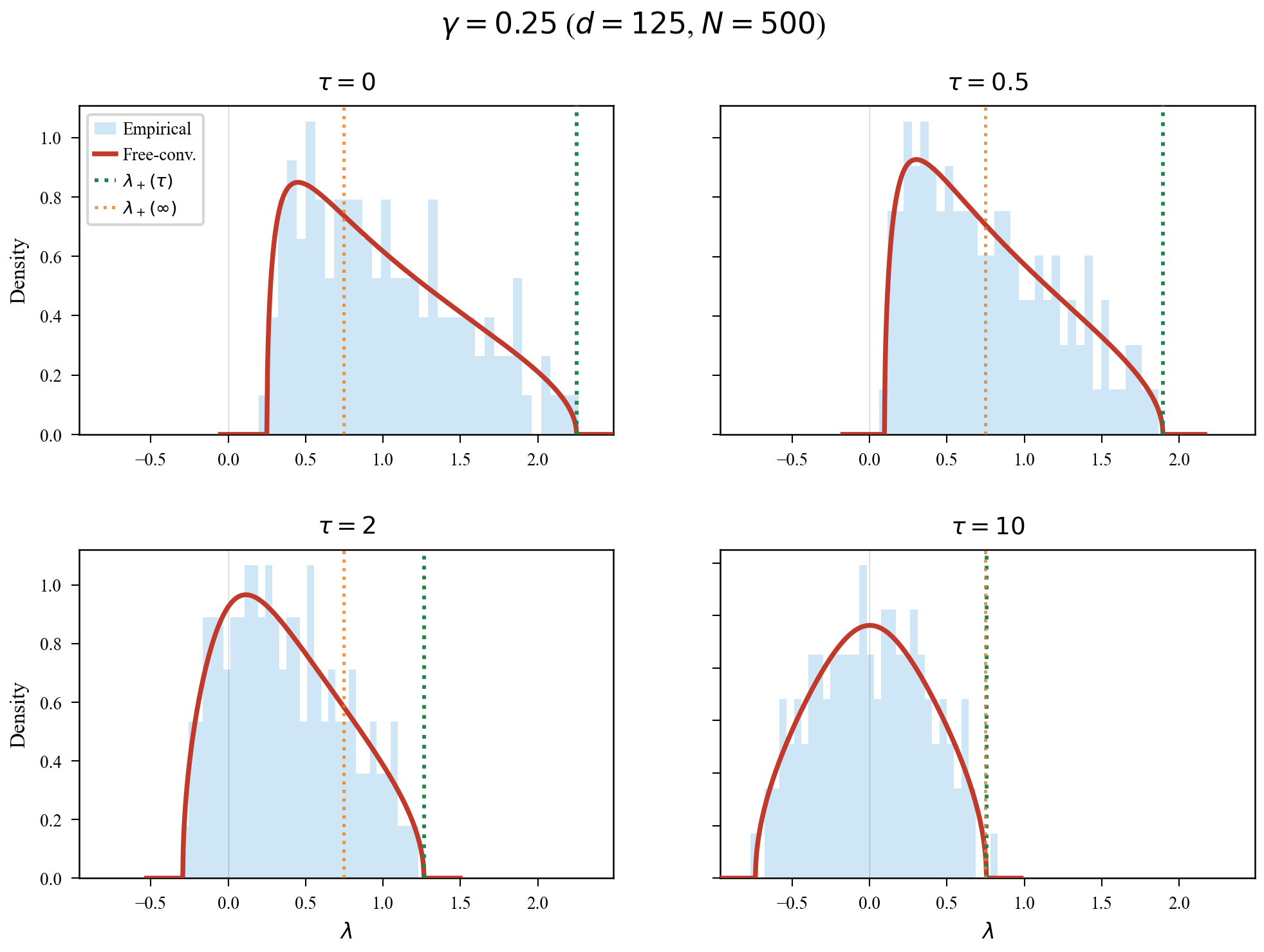}
\captionof{figure}{Free-convolution validation at $\gamma = 0.25$. Light-blue histogram: empirical eigenvalue density of $\hat C^{\mathrm{sym}}(\tau)$ under the isotropic Gaussian null. Red curve: analytic free-convolution density. Red stem at $\lambda = 0$ (when present): $\delta_0$ atom with mass $w_0$. Dark-green dotted: $\lambda_+(\tau)$; orange dotted: $\lambda_+(\infty)$.}
\label{fig:free_conv_overlay_g0p25}
\end{minipage}
\vfill
\begin{minipage}{\textwidth}
\centering
\includegraphics[width=0.62\textwidth]{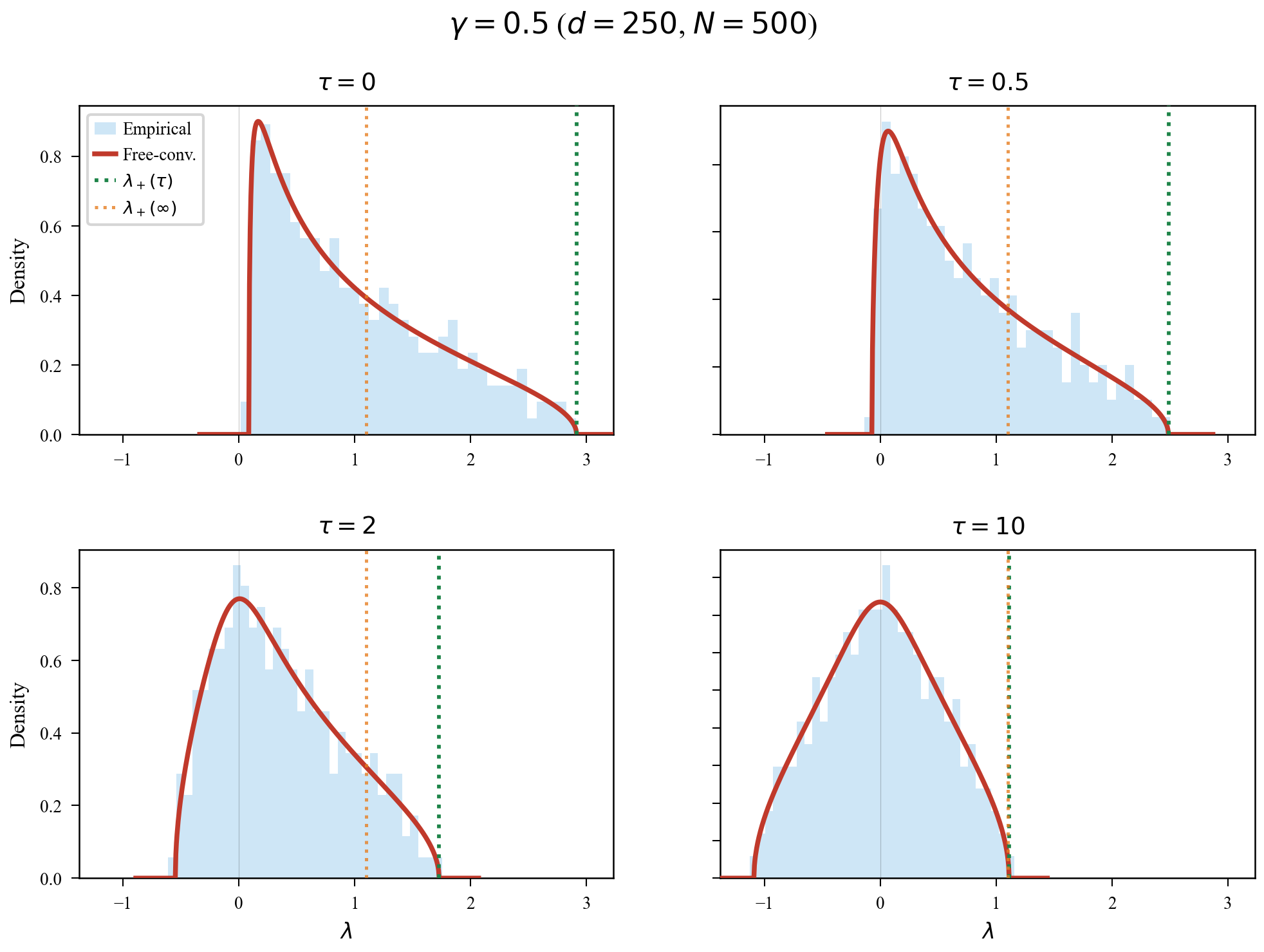}
\captionof{figure}{Free-convolution validation at $\gamma = 0.5$; layout as in Figure~\ref{fig:free_conv_overlay_g0p25}.}
\label{fig:free_conv_overlay_g0p5}
\end{minipage}
\vfill
\begin{minipage}{\textwidth}
\centering
\includegraphics[width=0.62\textwidth]{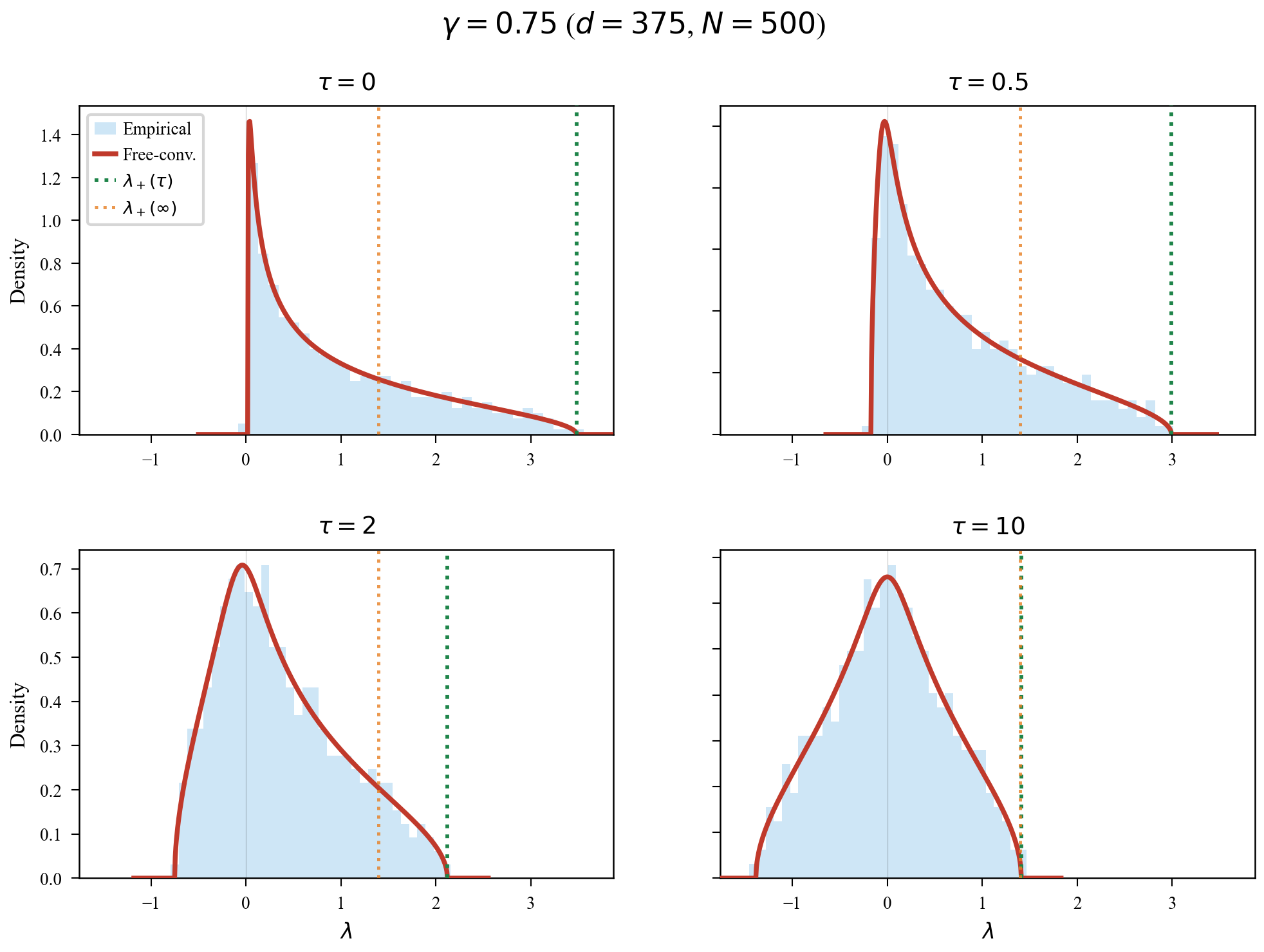}
\captionof{figure}{Free-convolution validation at $\gamma = 0.75$; layout as in Figure~\ref{fig:free_conv_overlay_g0p25}.}
\label{fig:free_conv_overlay_g0p75}
\end{minipage}
\vspace*{\fill}
\end{figure}

\begin{figure}[p]
\centering
\vspace*{\fill}
\begin{minipage}{\textwidth}
\centering
\includegraphics[width=0.62\textwidth]{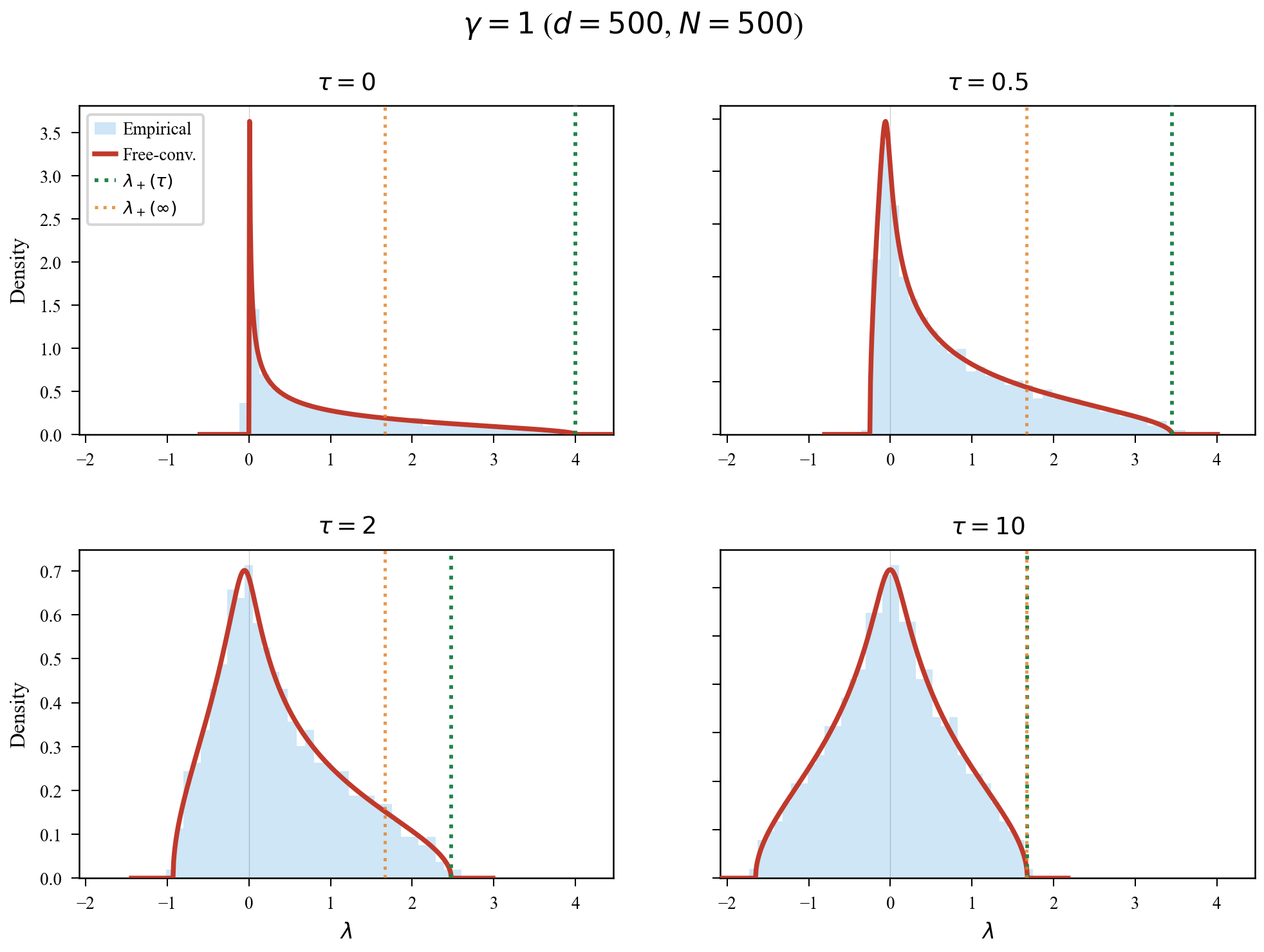}
\captionof{figure}{Free-convolution validation at $\gamma = 1$; the $\tau = 0$ panel shows the expected MP $\delta_0$ atom of mass $1 - 1/\gamma = 0$ at the $\gamma = 1$ boundary (no atom); $\tau > 0$ remains atom-free as $\gamma \le 2$. Layout as in Figure~\ref{fig:free_conv_overlay_g0p25}.}
\label{fig:free_conv_overlay_g1p0}
\end{minipage}
\vfill
\begin{minipage}{\textwidth}
\centering
\includegraphics[width=0.62\textwidth]{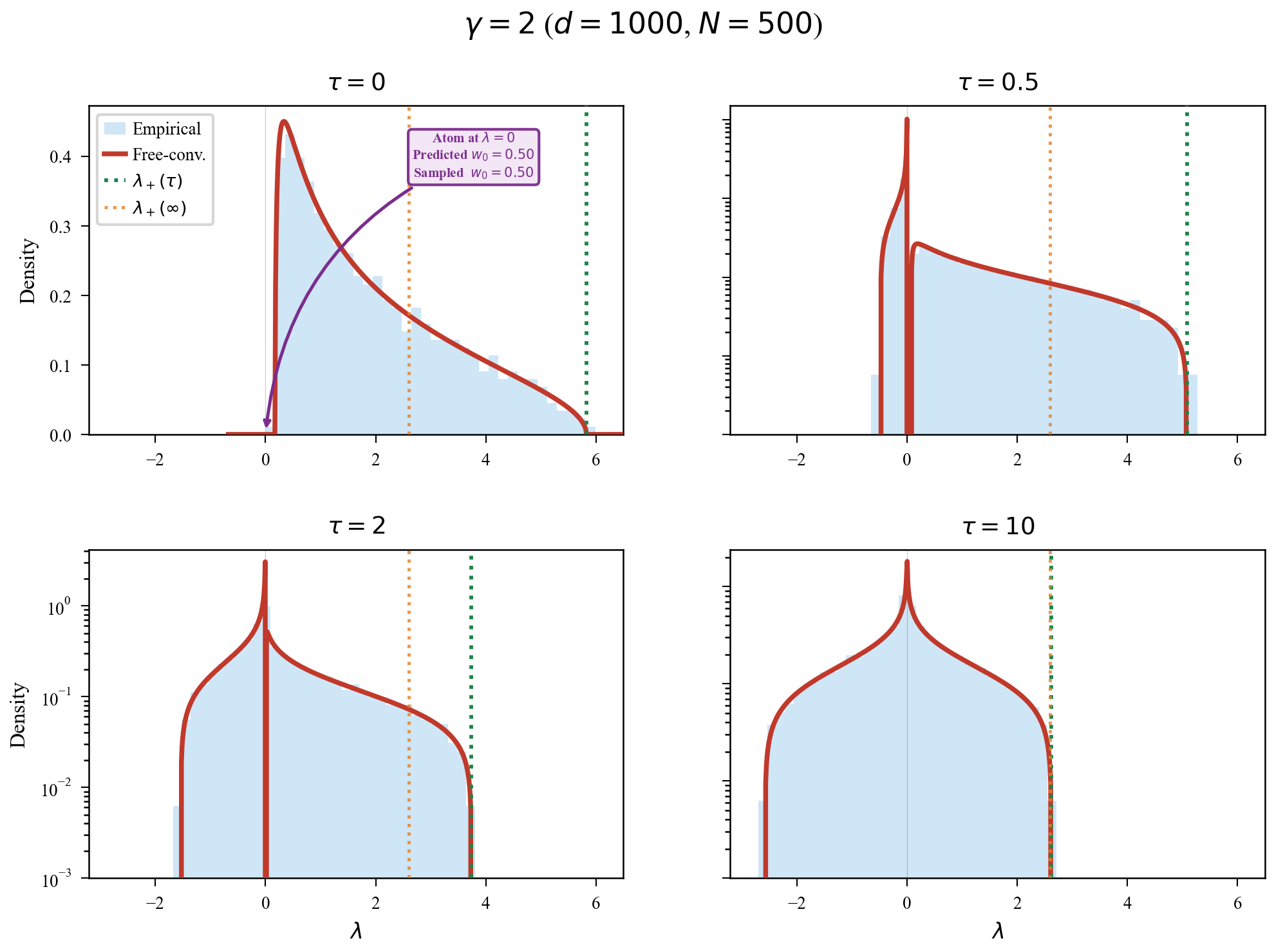}
\captionof{figure}{Free-convolution validation at $\gamma = 2$. The $\tau = 0$ panel has an MP atom of mass $0.5$; the $\tau > 0$ panels are non-atomic (the sub-critical $\gamma = 2$ boundary), with a sharp continuous peak near zero shown on a log-$y$ axis. Layout as in Figure~\ref{fig:free_conv_overlay_g0p25}.}
\label{fig:free_conv_overlay_g2p0}
\end{minipage}
\vfill
\begin{minipage}{\textwidth}
\centering
\includegraphics[width=0.62\textwidth]{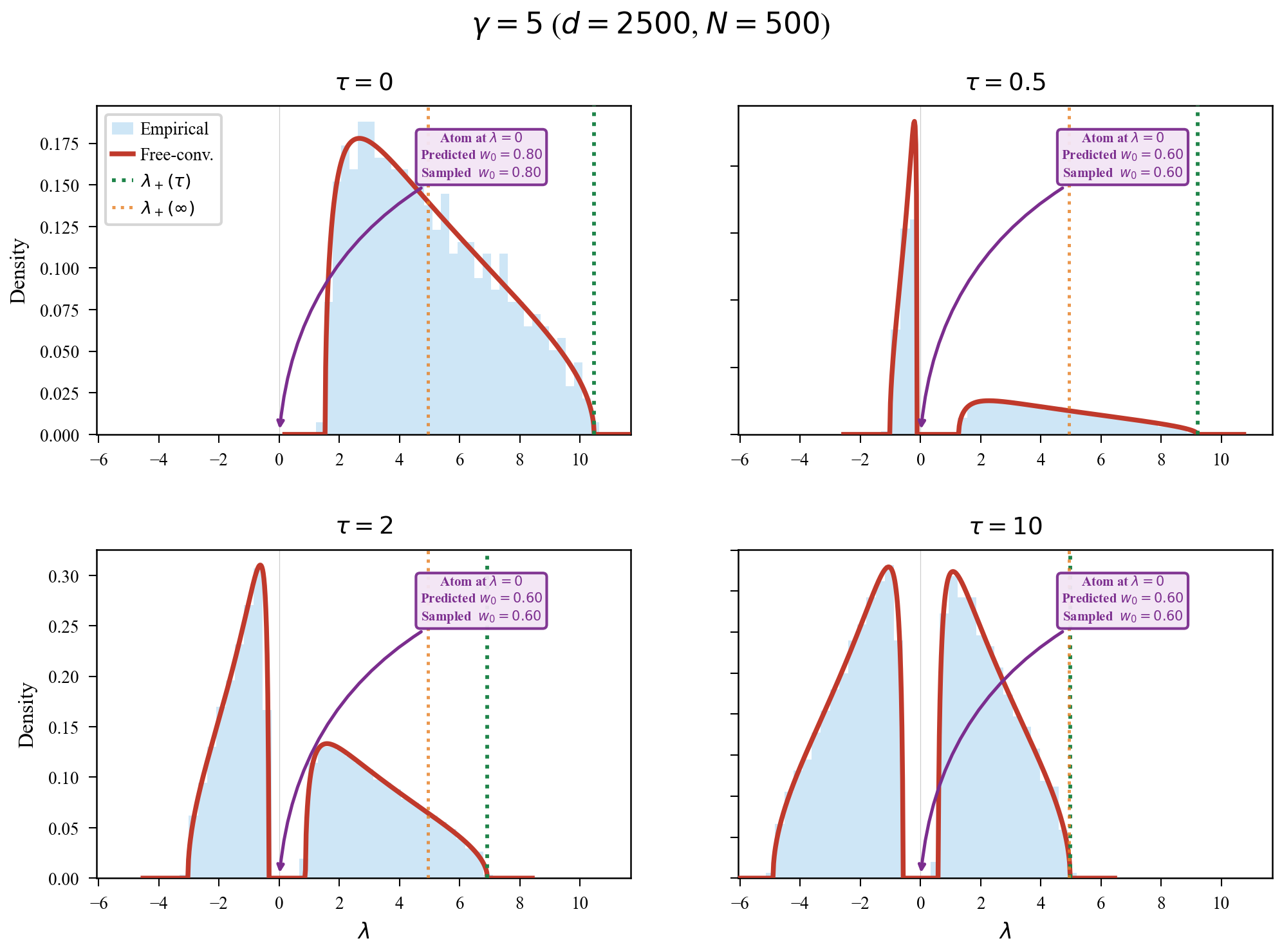}
\captionof{figure}{Free-convolution validation at $\gamma = 5$. The atom mass drops from $1 - 1/\gamma = 0.80$ at $\tau = 0$ (MP) to $1 - 2/\gamma = 0.60$ at every $\tau > 0$ (free-conv). Layout as in Figure~\ref{fig:free_conv_overlay_g0p25}.}
\label{fig:free_conv_overlay_g5p0}
\end{minipage}
\vspace*{\fill}
\end{figure}

\paragraph{Validity on real, non-Gaussian targets.}
Theorem~\ref{thm:limit-spectrum} is proved under an i.i.d.\ isotropic-Gaussian
null. We expect the bulk-and-spike picture it predicts to extend to
broader distributions via Marchenko--Pastur universality
\cite{bai2010,marchenko1967}: classical universality results show that
the MP law governs the bulk of sample-covariance spectra under finite
fourth-moment assumptions far beyond the Gaussian case, and the
free-additive-convolution construction underlying
Theorem~\ref{thm:limit-spectrum} inherits this robustness. We invoke
this universality to apply the analytic bulk and edge as approximations
in the alanine and SDXL experiments of Section~\ref{sec:experiments} (Appendices~\ref{app:alanine_full},~\ref{app:sdxl_details}); the empirical evidence there is that the non-signal eigenvalues track the analytic bulk and the metastable spikes sit outside it, matching the prediction. A formal universality
statement for the symmetrized lag-$\tau$ autocovariance is left to
future work.

\subsubsection{A foundation for further theory}
\label{app:foundation_for_further_theory}

Theorem~\ref{thm:limit-spectrum} establishes the limit spectrum of $\hat C^{\mathrm{sym}}(\tau)$ under the iso-Gaussian null: the bulk density, the upper edge $\lambda_+(\tau)$, and the atom mass for $\gamma > 2$. It does not address edge fluctuations or signal-detection thresholds for embedded slow modes. Two natural extensions remain for future work: (i) a \emph{spiked-model analysis} via BBP-type techniques~\cite{baik2005phase, benaych2011eigenvalues} applied to the free-convolution representation, giving an explicit signal-detection threshold for DA and a minimum $N$ below which slow modes of a given strength are undetectable; (ii) $N^{2/3}$-scale \emph{Tracy--Widom edge fluctuations} via edge-universality results for free additive convolutions~\cite{bao2016, lee2016}, giving a principled data-driven lag-selection rule with explicit false-positive rate.

\subsection{Lag Selection for DA}\label{app:lag_selection}

By Proposition~\ref{prop:da_convergence}, the leading DA direction
$v_1(\tau)$ converges to $\alpha_2/\|\alpha_2\|$ as $\tau \to \infty$,
with rate controlled by the spectral gap $\mu_3 - \mu_2$. For $m \geq 2$
the top-$m$ eigenspace converges to $\mathrm{span}(\alpha_2, \ldots, \alpha_{m+1})$
(Section~\ref{sec:da}). In the population limit the optimal lag is
$\tau \to \infty$; finite $\tau^*$ selection is a concession to limited data.
For recovering $m$ DA directions, the valid window is
\[
\frac{1}{\mu_{m+2} - \mu_{m+1}} \;\ll\; \tau \;\ll\; \frac{\log N}{2\mu_{m+1}},
\]
since the $m$-th direction (governed by $\mu_{m+1}$) is the first to fall below the noise floor as $\tau$ grows. The window is wide precisely in the metastable regime ($\mu_2, \ldots, \mu_{m+1}$ exponentially small in barrier heights, $\mu_{m+2}$ of order one).

\paragraph{Selection rule.}
For $m \geq 1$ directions, one needs all $m$ eigenvectors from a single
eigendecomposition at a common lag so that the directions are mutually
orthogonal. Each of the top $m$ eigenvalues at the chosen lag must
reflect metastable structure rather than finite-sample noise. Theorem~\ref{thm:limit-spectrum} gives the exact null spectral
distribution of $\hat C^{\mathrm{sym}}(\tau)$ under an isotropic Gaussian, with upper edge
$\lambda_+(\tau)$ as its sharp threshold. The selection rule is: choose $\tau^*(m)$ as the largest lag at
which the top $m$ eigenvalues of $\hat{C}(\tau)$ all exceed
$\lambda_+(\tau)$ and $\rho(\tau) < 1/e$. The autocorrelation
requirement excludes short lags at which
$\hat{C}(\tau) \approx \hat{C}(0)$ and any high-variance direction
trivially clears the noise floor. The threshold is taken from the
OU convention~\cite{Pavliotis2014}: for the OU process
$dX_t = -\theta X_t\,dt + \sigma\,dW_t$ the autocorrelation is
$\rho(\tau) = e^{-\theta \tau}$ and the relaxation time
$\tau_c = 1/\theta$ is defined as the unique time at which
$\rho(\tau_c) = e^{-1}$, so $\rho(\tau) < 1/e$ corresponds to ``at
least one relaxation time out,'' filtering directions that have not
yet decayed. The number of metastable directions recovered is then
the maximal $m$ for which $\tau^*(m)$ exists. This balances spectral
filtering (larger $\tau$ improves eigenvector convergence by
Proposition~\ref{prop:da_convergence}) against statistical
detectability (larger $\tau$ pushes weaker eigenvalues toward the
noise floor). The alanine dipeptide experiment
(Section~\ref{sec:exp_molecular}) uses this criterion to extract DA1
and DA2 jointly.

\paragraph{Insufficient lag budget.}
The selection rule succeeds when the window
$1/(\mu_{m+2} - \mu_{m+1}) \ll \tau \ll \log N / (2\mu_{m+1})$ is non-empty, so that the slow eigenvalues remain above the noise floor.
When $N$ is too small relative to the target's spectral gap, this
window can be narrow or absent: the noise floor crosses the signal
eigenvalues before the spectral filter has fully suppressed fast
modes, so the largest feasible $\tau^*$ still sits close to $\tau = 0$.
Since $\hat C(0)$ is (up to trace normalization) the sample covariance,
DA at small $\tau$ is close to PCA; if DA and PCA agree at every
feasible lag, the estimate is ambiguous between (i) DA = PCA being the
true answer for this density, and (ii) an under-resolved estimate that
would separate from PCA at larger $\tau$ if more trajectories were
available. The noise floor scales as $\sigma^2\sqrt{d/N}$
(Appendix~\ref{app:noise_floor}), so increasing $N$ lowers the floor
and extends the feasible lag window, at linear compute cost. Without
ground truth to break the ambiguity, the only principled check is to
verify that DA and PCA have actually separated at the chosen $\tau^*$;
if they have not, one should either raise $N$ or report the result as
inconclusive rather than claiming DA recovers a direction distinct from
PCA. In the experiments reported here (Section~\ref{sec:experiments}; Appendices~\ref{app:gmm_experiments},~\ref{app:sdxl_details},~\ref{app:alanine_full}), DA separates cleanly from PCA at $\tau^*$ in every case.

\subsection{Validation: Anisotropic Null and Dip Test}
\label{app:validation}

The lag-selection criterion in Section~\ref{sec:da} combines two checks on each recovered DA direction: an anisotropic-Gaussian null test for the leading eigenvalue, and Hartigan's dip test on the 1D projection.

\paragraph{Anisotropic-Gaussian null test.}
For each experiment, we draw $B$ replicates from $\mathcal{N}(0, \hat\Sigma)$ with $\hat\Sigma$ matched to the data's empirical covariance, run the full pipeline (Algorithm~\ref{alg:operational_floor} below), and threshold each observed $\lambda_i^{\mathrm{obs}}$ against the empirical null distribution.

\begin{algorithm}[H]
\caption{Operational Noise Floor via Matched-Pipeline Monte Carlo}
\label{alg:operational_floor}
\begin{algorithmic}[1]
 
\State \textbf{Input:} data samples $\{x_n\}_{n=1}^N$, lag $\tau^*$, replicates $B$, pipeline $\mathcal{P}$ (score estimator, canonical diffusion simulator, optional embedding)
\State \textbf{Output:} empirical null distribution $\{\lambda_i^{(b)}\}_{b=1}^B$ for each $i$
\State $\hat\Sigma \gets \tfrac{1}{N}\sum_n (x_n - \bar x)(x_n - \bar x)^\top$ \Comment{covariance match}
\For{$b = 1, \ldots, B$}
    \State Draw $\{x_n^{(b)}\}_{n=1}^N \sim \mathcal{N}(0, \hat\Sigma)$ \Comment{null samples}
    \State Refit score estimator on $\{x_n^{(b)}\}$ if applicable (e.g., KDE)
    \State Run pipeline $\mathcal{P}$ to obtain $\hat C^{(b)}(\tau^*)$ (same diffusion, same embedding)
    \State $\{\lambda_i^{(b)}\}_i \gets \mathrm{eig}(\hat C^{\mathrm{sym},(b)}(\tau^*))$
\EndFor
\State \Return $\{\lambda_i^{(b)}\}$, threshold each observed $\lambda_i^{\mathrm{obs}}$ against the empirical $(1-\alpha)$-quantile of $\{\lambda_i^{(b)}\}_b$
\end{algorithmic}
\end{algorithm}

\paragraph{Dip test.}
Hartigan's dip statistic $D$~\cite{hartigan1985} measures the maximum departure from unimodality of an empirical CDF. We apply it to the 1D projection of the data onto each recovered direction; rejection of $H_0:\ \mathrm{unimodal}$ certifies that the direction carries genuine multimodal structure rather than non-Gaussian unimodal spread.

\paragraph{Hartigan vs. Silverman.}
Hartigan's dip is most sensitive to symmetric bimodality and can
fail to reject on asymmetric mixtures with unequal-mass modes. The
elderly DA1 direction (Appendix~\ref{app:sdxl_elderly}) is one such
case: dip gives $D$ small with $p = 0.234$ (does not reject), while
Silverman's bandwidth test~\cite{silverman1981} gives $p < 0.002$
(rejects). Silverman's test, based on the smallest KDE bandwidth at
which the smoothed density is unimodal, is sensitive to both
symmetric and asymmetric bimodality. We therefore use Hartigan's
dip on directions with approximately symmetric projections (the
alanine DA1 and DA2 directions both pass dip at $p \approx 0$:
DA1 $D = 0.046$, DA2 $D = 0.018$) and Silverman on directions with
asymmetric projections, reporting both when ambiguous.

\subsubsection{Parallel ellipses}
\label{app:validation_ellipses}

$N = 10{,}000$ samples, $B = 1000$ null replicates. Observed $\lambda_1 = 3.49$ exceeds every null replicate ($p < 10^{-3}$). DA1 dip test: $D = 0.087$, $p < 10^{-3}$. PC1 dip test: $D = 0.002$, $p = 0.99$.

\subsection{A Toy Demonstration on a Planted Slow Subspace}
\label{app:planted_spikes}

To illustrate the behavior of the analytic noise floor of Theorem~\ref{thm:limit-spectrum} in a controlled setting, we construct a synthetic density whose slow subspace is known by design and whose metastable rates are all equal. The first construction (Figure~\ref{fig:planted_spikes}) uses a uniform metastable rate across the $k$ slow axes, isolating cleanly how $\lambda_+(\tau)$ separates the slow block from the fast block. A second variant (Figure~\ref{fig:planted_spikes_varying_rate}) introduces a ${\sim}1000\times$ rate spread within the slow block, showing that the plateau characterization does not require within-block uniformity. The realistic regime, in which the slow block itself is unbounded in extent and slow axes drop out of detection one by one at large $\tau$, is exemplified by the alanine dipeptide spectrum in Appendix~\ref{app:alanine_robustness}.

\paragraph{Score-oracle caveat.}
The simulation uses the \emph{analytic} drift $\nabla \log f = -\nabla U$
from the planted potential $U$, so L1 score-estimation error in the
sense of Remark~\ref{rem:score_oracle} is zero by construction. The
experiment thereby validates only the L2 component of
Theorem~\ref{thm:limit-spectrum} — trajectory-pair fluctuations and
spike detection under an oracle score. L1 robustness is exercised
separately in the alanine experiment
(Section~\ref{sec:exp_molecular}), which uses a kernel-density-estimated
score on real data and absorbs L1 inflation via the peeled-bulk
adjustment described in Section~\ref{sec:da}.

\paragraph{Setup.}
Fix $d = 200$ ambient dimensions and $k = 20$ planted metastable axes. Let $Q \in \mathbb{R}^{d \times d}$ be a uniformly random orthogonal matrix; the planted basis is $Q[:, :k]$. The energy in the rotated frame $y = Qx$ is
\[
U(x) = \sum_{i=1}^{k} V(x_i; c, \sigma_s) \;+\; \frac{1}{2 \sigma_{\mathrm{iso}}^2} \sum_{i=k+1}^{d} x_i^2,
\]
with $V(z; c, \sigma_s) = -\log\!\left[\tfrac{1}{2}\mathcal{N}(z; +c, \sigma_s^2) + \tfrac{1}{2}\mathcal{N}(z; -c, \sigma_s^2)\right]$ a symmetric double-well, and $\sigma_{\mathrm{iso}}^2 = c^2 + \sigma_s^2$ chosen so that every coordinate has the same stationary variance $\sigma_f^2 = c^2 + \sigma_s^2$. Matched per-axis variance makes the population covariance proportional to $I_d$, so PCA at $\tau = 0$ has no preferred direction. We use $c = 3$, $\sigma_s = 0.3$, $N = 2000$ canonical-diffusion trajectories, time step $\Delta t = 0.005$, and a log-spaced lag grid up to $\tau_{\max} = 10$. Initial samples are drawn directly from $f$, so the trajectories are stationary from the start; no equilibration is needed.

\paragraph{Bulk and spike spectra.}
Figure~\ref{fig:planted_spikes}(a) shows the eigenvalue histogram of $\hat C(0)$ overlaid with the analytic Marchenko--Pastur density at $\sigma^2 = c^2 + \sigma_s^2$ and $\gamma = d/N = 0.1$. All $200$ eigenvalues lie in the predicted bulk, with no eigenvalues outside. Figure~\ref{fig:planted_spikes}(b) shows the spectrum of $\hat C^{\mathrm{sym}}(\tau^\star)$ at the auto-selected $\tau^\star = 10$: $20$ eigenvalues separate from the bulk and concentrate near $\lambda \approx 9$, while the remaining $180$ eigenvalues track the free-convolution density of Theorem~\ref{thm:limit-spectrum} centered around zero with edge $\lambda_+(\tau^\star) = 4.23$. The detected spike count equals the planted truth.

\paragraph{Plateau and the valid $\tau$-window.}
Figure~\ref{fig:planted_spikes}(c) plots $\mathrm{count}(\tau) = |\{j: \lambda_j(\tau) > \lambda_+(\tau)\}|$ on a symlog $\tau$ axis. The count is one at $\tau = 0$ (a single eigenvalue exceeds the analytic edge by an amount within the Tracy--Widom-scale fluctuation), rises through $9$, $13$, $17$ as fast modes decay below the bulk edge, and locks onto a plateau at $20$ from $\tau \approx 2$ onward. The auto-rule $\hat m = \max_\tau \mathrm{count}(\tau)$, $\tau^\star = \max\{\tau : \mathrm{count}(\tau) = \hat m\}$ returns $\hat m = 20$ and $\tau^\star = 10$, matching the planted $k$. The plateau onset matches the lower edge of the valid $\tau$-window from Appendix~\ref{app:lag_selection}: with $\mu_{k+2} \approx \sigma_f^2/\sigma_{\mathrm{iso}}^2 \approx 0.5$ (the OU rate of the isotropic background) and $\mu_{k+1} \sim \exp(-c^2/(\sigma_s^2 \sigma_f^2)) \approx 1.7 \times 10^{-5}$, the theoretical bound $1/(\mu_{k+2} - \mu_{k+1}) \approx 2.00$ agrees with the empirical onset $\tau \approx 2.00$ to within $1\%$. The plateau is the regime in which the slow block of $k$ metastable axes is jointly well-separated from the fast block of $d-k$ isotropic axes; outside the upper bound $\log N / (2 \mu_2) \approx 2 \times 10^5$ the slowest axes would peel off the count one by one as their eigenvalues fall through the noise floor (the alanine dipeptide spectrum, Appendix~\ref{app:alanine_robustness}, is in that regime).

\paragraph{Sample-complexity sweep.}
Figure~\ref{fig:planted_spikes}(d) sweeps $N \in \{200, 300, 500, 750, 1000, 1500, 2000, 3000, 5000\}$ at fixed $d$ and $k$ and reports the auto-detected count. Recovery is partial below $N \approx 750$ (which corresponds to $\gamma \gtrsim 0.27$) and exact at every $N$ above, yielding $8$ of $20$ at $N = 200$ ($\gamma = 1.0$), $17$ of $20$ at $N = 500$ ($\gamma = 0.4$), and the full $20$ from $N = 750$ onward.

\paragraph{Recovery vs.\ planted $k$.}
Figure~\ref{fig:planted_spikes}(e) sweeps $k \in \{5, 10, 20, 40, 80\}$ at fixed $d = 200$, $N = 2000$, with a fresh random rotation per $k$. The detected count equals the planted $k$ at every value, including $k = 80$ where $40\%$ of the dimensions are metastable.

\paragraph{Subspace alignment vs.\ PCA.}
Figure~\ref{fig:planted_spikes}(f) compares the alignment of the recovered top-$k$ subspace with the planted basis $Q[:, :k]$, measured by $\bigl\| V_{\text{top-}k}(\tau)^\top Q[:, :k] \bigr\|_F^2 / k \in [0, 1]$. Two estimators are shown: DA, which uses the top-$k$ eigenvectors of $\hat C^{\mathrm{sym}}(\tau)$ and depends on $\tau$, and PCA, which uses the top-$k$ eigenvectors of $\hat C(0)$ and is independent of $\tau$. DA's alignment starts at $0.10 = k/d$ at $\tau = 0$ (where $\hat C^{\mathrm{sym}}(0) = \hat C(0)$ and the spectrum is approximately isotropic), rises monotonically with $\tau$, and reaches $0.948$ at $\tau^\star = 10$. PCA's alignment is $0.101$, statistically indistinguishable from chance $k/d = 0.10$. The matched per-axis variance pins PCA at the random-subspace baseline; the same matched-variance setup leaves DA's recovery near its perfect-alignment ceiling.

\paragraph{Generality across rate spread.}
The plateau characterization above does not require the slow block to share a single rate. Figure~\ref{fig:planted_spikes_varying_rate} repeats the experiment with per-axis well widths $\sigma_{s,i}$ varying geometrically over $[0.3, 0.5]$, giving a ${\sim}1000\times$ spread in $\mu_{\mathrm{meta},i}$ across the $k = 20$ axes (panel b). The plateau still forms at the same theoretical lower bound (empirical onset $\tau \approx 2.00$ vs.\ theory $\approx 2.08$, ratio $0.96$), and DA's subspace alignment with the planted basis reaches $0.948$ at $\tau^\star = 10$ while PCA stays at chance $k/d = 0.10$ (panel c). What the $\tau$-window controls is the slow-vs-fast block separation; within-block uniformity is not required.

\begin{figure}[p]
\centering
\vspace*{\fill}
\begin{minipage}{\textwidth}
\centering
\includegraphics[width=0.92\textwidth]{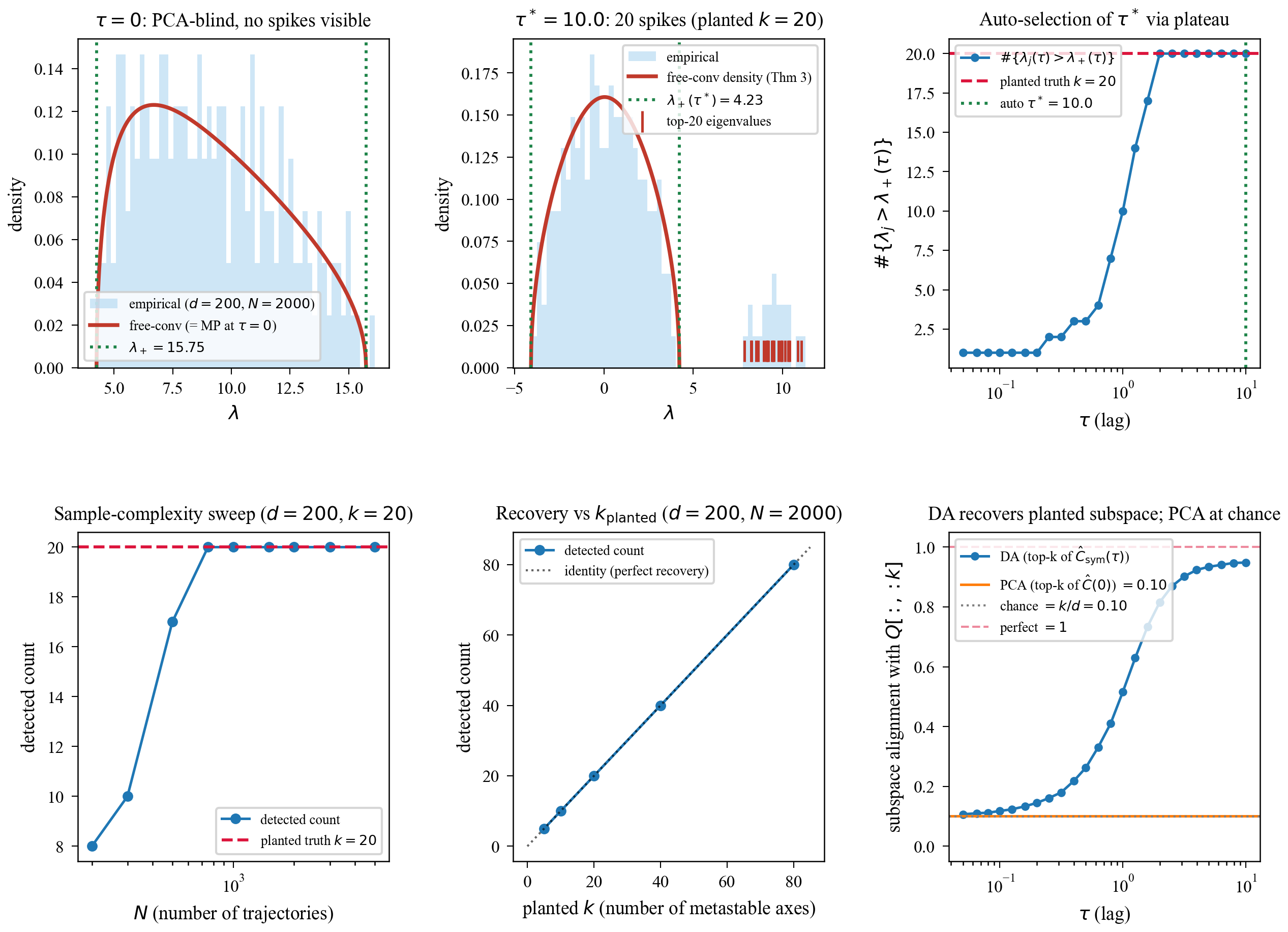}
\captionof{figure}{Validation under planted ground truth, with $d = 200$, $k = 20$, $\gamma = d/N = 0.1$, $c = 3$, $\sigma_s = 0.3$, matched per-axis variance, planted basis $Q[:, :k]$ for a uniformly random orthogonal $Q$.
\textbf{(a)}~$\tau = 0$ eigenvalue histogram of $\hat C(0)$ against the analytic Marchenko--Pastur density; no eigenvalues outside the bulk.
\textbf{(b)}~Spectrum of $\hat C^{\mathrm{sym}}(\tau^\star)$ at the auto-selected $\tau^\star = 10$: $20$ eigenvalues above the free-convolution edge $\lambda_+(\tau^\star) = 4.23$, planted truth $k = 20$.
\textbf{(c)}~Spike count vs.\ $\tau$ on a symlog axis; auto-rule returns $\hat m = 20$, $\tau^\star = 10$.
\textbf{(d)}~Sample-complexity sweep over $N$ at fixed $d$, $k$. Full recovery at $N \geq 750$ ($\gamma \leq 0.27$).
\textbf{(e)}~Recovery vs.\ planted $k$ on the identity line, with a fresh random $Q$ per $k$.
\textbf{(f)}~Subspace alignment of recovered top-$k$ with $Q[:, :k]$. DA reaches $0.948$ at $\tau^\star = 10$; PCA stays at $0.10$ (chance $= k/d$).}
\label{fig:planted_spikes}
\end{minipage}
\vfill
\begin{minipage}{\textwidth}
\centering
\includegraphics[width=0.92\textwidth]{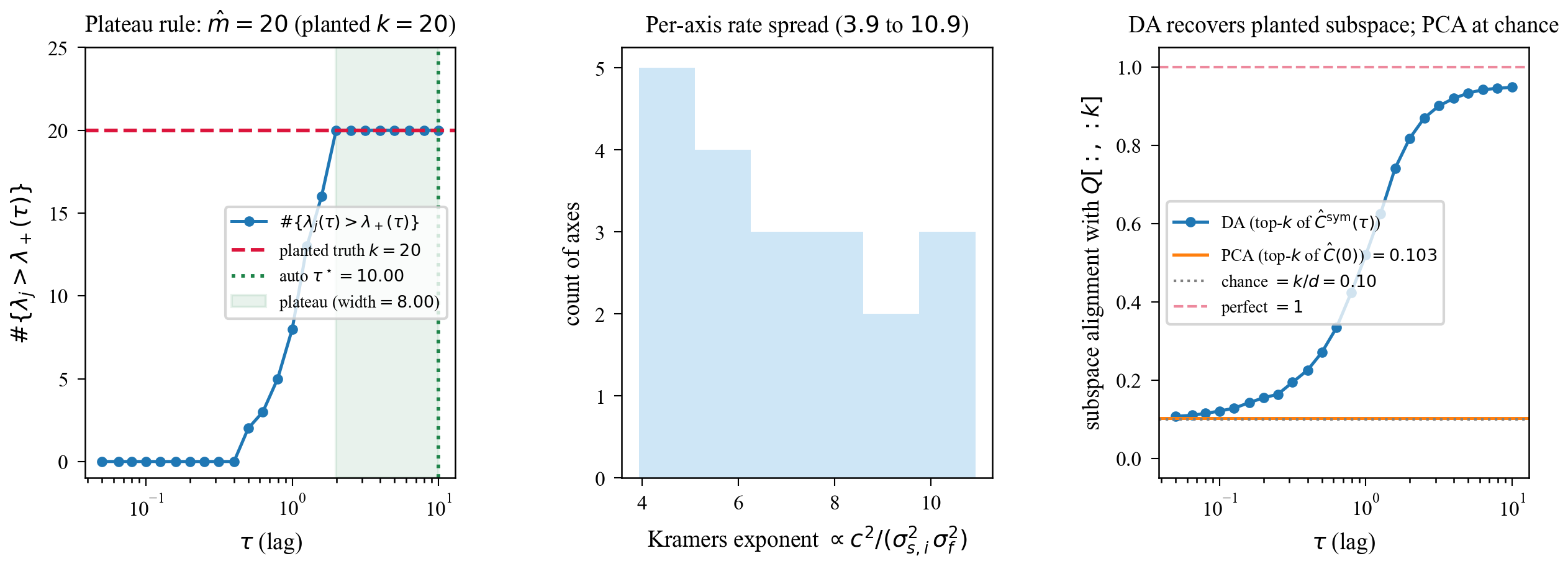}
\captionof{figure}{Same construction as Figure~\ref{fig:planted_spikes}, but with per-axis well widths $\sigma_{s,i}$ varying geometrically over $[0.3, 0.5]$ instead of uniform $\sigma_s = 0.3$.
\textbf{(a)}~Spike count vs.\ $\tau$ on a symlog axis; plateau onset at $\tau \approx 2$ matches the theoretical lower bound $1/(\mu_{k+2} - \mu_{k+1}) \approx 2.08$ to within $1\%$, auto-rule returns $\hat m = 20$, $\tau^\star = 10$.
\textbf{(b)}~Histogram of per-axis Kramers exponents $\propto c^2 / (\sigma_{s,i}^2 \sigma_f^2)$, spanning $[4, 11]$ — a ${\sim}1000\times$ dynamic range in $\mu_{\mathrm{meta}}$.
\textbf{(c)}~Subspace alignment with $Q[:, :k]$ as a function of $\tau$. DA reaches $0.948$ at $\tau^\star = 10$; PCA stays at $0.103 \approx k/d$.}
\label{fig:planted_spikes_varying_rate}
\end{minipage}
\vspace*{\fill}
\end{figure}

\section{Robustness to Process Choice}
\label{app:robustness}

The canonical diffusion of Section~\ref{sec:canonical} is a single element of two natural broader families of processes with stationary density $f$: reversible It\^o diffusions on $\R^d$ with the same stationary $f$ but different constant diffusion matrices $A_0$, and reversible kinetic processes (underdamped Langevin) on phase space $\R^d \times \R^d$ with the same marginal $f$ on positions. SSA and DA values change quantitatively across both families; their \emph{orderings} across densities are preserved in the metastable regime. The first statement is a clean consequence of the variational characterization of self-adjoint operator spectra and is proved here. The second inherits from standard results in stochastic homogenization \cite{pavliotis_stuart2008,Pavliotis2014} and metastability \cite{bovier_dh2015,kramers1940,nelson1967}, and we record its corollary for SSA and DA below.

\subsection{Reversible It\^o Diffusions with Different Constant Diffusion Matrices}
\label{app:robustness_ito}

Fix any constant positive-definite $A_0 \in \R^{d \times d}$. The unique reversible It\^o diffusion with stationary density $f$ and diffusion matrix $A_0$ (Theorem~\ref{thm:uniqueness}) is
\[
dX_t \;=\; \tfrac{1}{2}\,A_0\,\nabla \log f(X_t)\,dt \;+\; A_0^{1/2}\,dW_t.
\]
Its infinitesimal generator
\[
\mathcal{L}_{A_0}\varphi \;=\; \tfrac{1}{2}\nabla \cdot (A_0 \nabla \varphi) \;+\; \tfrac{1}{2}(A_0\nabla \log f)\cdot \nabla \varphi
\]
is self-adjoint on $L^2(f\,dx)$ with discrete non-negative spectrum
$0 = \mu_1(A_0) < \mu_2(A_0) \le \mu_3(A_0) \le \cdots$. We compare the spectra across two choices $A, B \succ 0$ via the standard Rayleigh-quotient comparison.

\begin{proposition}[Eigenvalue ratio bound across diffusion matrices]
\label{prop:robustness_ito}
For any two constant positive-definite diffusion matrices $A, B \in \R^{d\times d}$ and every $k \ge 1$,
\[
\lambda_{\min}\!\bigl(B^{-1/2} A B^{-1/2}\bigr)\,\mu_k(B) \;\le\; \mu_k(A) \;\le\; \lambda_{\max}\!\bigl(B^{-1/2} A B^{-1/2}\bigr)\,\mu_k(B).
\]
\end{proposition}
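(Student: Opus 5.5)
The plan is to compare Dirichlet forms and apply the Courant--Fischer min--max principle on the common Hilbert space $L^2(f\,dx)$. The key observation is that the Hilbert space, and hence the inner product in the Rayleigh quotient, does not depend on the diffusion matrix; only the quadratic form changes. First I would compute the Dirichlet form of $-\mathcal{L}_{A_0}$. For smooth compactly supported (or suitably decaying) $\varphi$, integrating by parts against $f$ using the generator above gives
\[
\mathcal{E}_{A_0}(\varphi,\varphi) \;:=\; \langle -\mathcal{L}_{A_0}\varphi,\varphi\rangle_f \;=\; \tfrac12\int \nabla\varphi(x)^\top A_0\,\nabla\varphi(x)\,f(x)\,dx .
\]
The drift term $\tfrac12(A_0\nabla\log f)\cdot\nabla\varphi$ combines with $\tfrac12\nabla\cdot(A_0\nabla\varphi)$ into $\tfrac{1}{2f}\nabla\cdot(fA_0\nabla\varphi)$, which is the reversibility structure of Theorem~\ref{thm:uniqueness}. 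Boundary terms vanish under the same tail assumptions as in Appendix~\ref{app:uniqueness_proof}. The form domains for different $A_0\succ0$ coincide, namely the weighted Sobolev space of $\varphi$ with $\int\|\nabla\varphi\|^2 f<\infty$, since any two positive-definite constant matrices induce equivalent quadratic forms.

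Second, I would compare the forms pointwise. Writing $g=\nabla\varphi(x)$ and $h=B^{1/2}g$, we have $g^\top A g = h^\top (B^{-1/2}AB^{-1/2}) h$ and $h^\top h = g^\top B g$. Hence
\[
\lambda_{\min}(B^{-1/2}AB^{-1/2})\,g^\top B g \;\le\; g^\top A g \;\le\; \lambda_{\max}(B^{-1/2}AB^{-1/2})\,g^\top B g .
\]
Integrating against $f\,dx$ yields $\lambda_{\min}\,\mathcal{E}_B(\varphi,\varphi)\le \mathcal{E}_A(\varphi,\varphi)\le \lambda_{\max}\,\mathcal{E}_B(\varphi,\varphi)$ for every $\varphi$ in the common form domain.

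Third, I would invoke Courant--Fischer:
\[
\mu_k(A_0)=\inf_{\dim V=k}\ \sup_{0\neq\varphi\in V}\ \frac{\mathcal{E}_{A_0}(\varphi,\varphi)}{\|\varphi\|_f^2},
\]
where $V$ ranges over $k$-dimensional subspaces of the form domain. Since the denominator $\|\varphi\|_f^2$ is the same for $A$ and $B$, the pointwise form inequality transfers directly through the sup and the inf, which gives both bounds for every $k$. The case $k=1$ is trivial, with both sides zero via constants.

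The main obstacle is the functional-analytic bookkeeping rather than the inequality itself. I need to confirm that the min--max characterization applies, meaning that $-\mathcal{L}_{A_0}$ is the Friedrichs extension associated with the closed form $\mathcal{E}_{A_0}$ and has discrete spectrum as asserted. I also need the form domains to agree so that the same subspaces $V$ are admissible for both operators. I would handle this by defining everything through the closure of $\mathcal{E}_{A_0}$ on $C_c^\infty$ and noting that the closures coincide because the forms are mutually bounded. If discreteness fails, the same min--max statement still holds with $\mu_k$ interpreted as min--max values, which is sufficient for the stated inequality.
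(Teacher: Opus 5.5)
Your proposal is correct and follows the paper's proof essentially step for step: compare the Dirichlet forms $\mathcal{E}_A,\mathcal{E}_B$ pointwise using the extreme eigenvalues of $B^{-1/2}AB^{-1/2}$, then pass the inequality through the Courant--Fischer min--max on the common space $L^2(f\,dx)$. Your additional bookkeeping is slightly cleaner than the paper's version, namely the common form domain via closure on $C_c^\infty$ and taking $k$-dimensional subspaces of the full form domain so that $\mu_1=0$ is included. The paper restricts to $L^2_0$ with $\dim V_k=k$, which strictly indexes the $k$-th nontrivial eigenvalue.
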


\begin{proof}
The Dirichlet form associated with $\mathcal{L}_{A_0}$ is
$\mathcal{E}_{A_0}(\varphi) = \tfrac{1}{2}\int (\nabla \varphi)^\top A_0 (\nabla \varphi)\,f\,dx$,
defined on its natural domain $\mathcal{D}(\mathcal{E}_{A_0}) \subset H^1(f\,dx) \cap L^2_0(f\,dx)$ of zero-mean functions. For any test function $\varphi$ in this domain,
\[
\lambda_{\min}\!\bigl(B^{-1/2} A B^{-1/2}\bigr)\,(\nabla \varphi)^\top B (\nabla \varphi) \;\le\; (\nabla \varphi)^\top A (\nabla \varphi) \;\le\; \lambda_{\max}\!\bigl(B^{-1/2} A B^{-1/2}\bigr)\,(\nabla \varphi)^\top B (\nabla \varphi)
\]
pointwise in $x$, by the variational characterization of $\lambda_{\min}, \lambda_{\max}$ of $B^{-1/2} A B^{-1/2}$. Integrating against $f\,dx$,
\[
\lambda_{\min}\!\bigl(B^{-1/2} A B^{-1/2}\bigr)\,\mathcal{E}_B(\varphi) \;\le\; \mathcal{E}_A(\varphi) \;\le\; \lambda_{\max}\!\bigl(B^{-1/2} A B^{-1/2}\bigr)\,\mathcal{E}_B(\varphi).
\]
The min-max characterization of the $k$-th non-trivial eigenvalue,
\[
\mu_k(A_0) \;=\; \min_{\substack{V_k \subset L^2_0(f\,dx) \\ \dim V_k = k}}\;\max_{\substack{\varphi \in V_k \\ \|\varphi\|_{L^2(f)} = 1}} \mathcal{E}_{A_0}(\varphi),
\]
applied with $A_0 = A$ and $A_0 = B$ then gives the stated bound term-by-term.
\end{proof}

\paragraph{Consequences for SSA and DA.}
Three direct consequences:
\begin{itemize}[leftmargin=*,nosep]
\item For $A_0 = c\,I_d$ scalar, all eigenvalues scale by $c$ exactly. SSA scales as $1/c$ via the spectral form $S(f) = \sum_{k,\ell \ge 2} w_k w_\ell / (\mu_k + \mu_\ell)$, and DA directions are independent of $c$. This is the diffusion-matrix special case of similarity invariance (Proposition~\ref{prop:similarity_invariance}).
\item For $A_0$ general positive-definite, eigenvalue ratios $\mu_k(A_0)/\mu_\ell(A_0)$ are preserved across choices up to a multiplicative factor of the condition number of $B^{-1/2} A B^{-1/2}$. In the metastable regime, where the slow eigenvalues are exponentially small in the barrier heights of $-\log f$, the $f$-determined Arrhenius exponents dominate the multiplicative factor, so slow-mode orderings $\mu_k(f_A) \lessgtr \mu_k(f_B)$ across densities are preserved asymptotically whenever the barrier-height gap dominates the $A_0$-dependent prefactors.
\item DA directions depend on $A_0$ in general. In the strongly-metastable bimodal limit (Appendix~\ref{app:bimodal_heuristic}), the slowest eigenfunction is approximately piecewise constant on the two basins, a property of $f$ alone. The leading DA direction therefore converges to the inter-mode direction $\mu_A - \mu_B$ for any $A_0$ in this regime.
\end{itemize}

\subsection{Reversible Kinetic (Underdamped Langevin) Dynamics}
\label{app:robustness_kinetic}

Real physical systems (proteins, peptides, biomolecules in solvent) are governed not by the canonical overdamped diffusion but by underdamped Langevin dynamics on phase space $(q, p) \in \R^d \times \R^d$:
\begin{equation}
dq_t \;=\; M^{-1} p_t\,dt, \qquad
dp_t \;=\; -\nabla U(q_t)\,dt \;-\; \eta\,M^{-1} p_t\,dt \;+\; \sqrt{2 \eta \beta^{-1}}\,dW_t,
\label{eq:underdamped}
\end{equation}
with mass matrix $M \succ 0$, friction $\eta > 0$, inverse temperature $\beta = 1/k_B T$, and potential $U(q) = -\beta^{-1}\log f(q)$. The joint stationary distribution on phase space is
$\pi(q, p) = f(q)\,\mathcal{N}\!\bigl(0, \beta^{-1} M\bigr)(p)$,
so the marginal on $q$ is the target density $f$. The generator $\mathcal{L}^{\rm UD}$ acts on functions of $(q, p)$ and satisfies generalized detailed balance under momentum reversal $(q, p) \mapsto (q, -p)$ (it is hypocoercive and not self-adjoint in $L^2(\pi)$; its spectrum need not be real in general). We write $\mu_k^{\rm UD}(\eta, \beta)$ for its slow real relaxation rates in the metastable asymptotic regime considered below, and $\mu_k^{\rm OD}(f)$ for those of the overdamped canonical diffusion attached to $f$.

Two standard limit theorems describe the relationship between $\mu_k^{\rm UD}(\eta, \beta)$ and $\mu_k^{\rm OD}(f)$.

\begin{proposition}[Smoluchowski limit; \cite{pavliotis_stuart2008,Pavliotis2014}]
\label{thm:smoluchowski}
As $\eta \to \infty$, the time-rescaled position trajectory $q_{\eta t}$ of the underdamped Langevin equation~\eqref{eq:underdamped} converges weakly to an overdamped Langevin diffusion with stationary marginal $f$ and a constant diffusion matrix determined by $M$ and $\beta$. Its slow eigenvalues are the overdamped rates of that limiting diffusion; by Appendix~\ref{app:robustness} (Proposition~\ref{prop:robustness_ito}), they share the same metastable barrier exponents as the canonical scalar-diffusion rates $\mu_k^{\rm OD}(f)$.
\end{proposition}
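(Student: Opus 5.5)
The plan has three parts: identify the limiting generator, verify that it is a reversible diffusion with stationary marginal $f$, and then compare its spectrum with the canonical one via Proposition~\ref{prop:robustness_ito}.

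\emph{Identifying the limit.} Write the underdamped system~\eqref{eq:underdamped} in the rescaled time $t \mapsto \eta t$. Momentum then relaxes on the fast scale $O(1/\eta)$ relative to position. The standard tool is a formal multiscale (homogenization) expansion of the backward Kolmogorov equation, following \cite{pavliotis_stuart2008}. The rescaled generator splits as $\eta^2\mathcal{L}_0 + \eta\mathcal{L}_1$, where $\mathcal{L}_0 = -M^{-1}p\cdot\nabla_p + \beta^{-1}\Delta_p$ is an Ornstein--Uhlenbeck operator in $p$ (after absorbing the friction scaling) and $\mathcal{L}_1$ collects the Hamiltonian transport $M^{-1}p\cdot\nabla_q - \nabla U\cdot\nabla_p$. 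We solve the hierarchy $\mathcal{L}_0 u_0 = 0$, $\mathcal{L}_0 u_1 = -\mathcal{L}_1 u_0$, and impose the Fredholm solvability condition for $u_2$ against the Gaussian momentum marginal $\mathcal{N}(0,\beta^{-1}M)$. This yields the limiting generator
\[
\bar{\mathcal{L}}\varphi = \beta^{-1}\Delta_q\varphi - \nabla U\cdot\nabla_q\varphi ,
\]
up to a constant prefactor fixed by $M$ and the friction normalization. This is the generator of $dq = -\nabla U\,dt + \sqrt{2\beta^{-1}}\,dW$, or, when $M$ enters non-trivially, of the same form with a constant matrix $D$ in front of both the Laplacian and the drift.

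\emph{Rigor and stationarity.} Weak convergence of $q_{\eta t}$ on path space will be cited from \cite{pavliotis_stuart2008}, using their tightness-plus-martingale-problem argument; I would not redo it. Substituting $U=-\beta^{-1}\log f$ gives drift $\tfrac12 A_0\nabla\log f$ with constant $A_0 = 2\beta^{-1}D$. The zero-current check of Theorem~\ref{thm:uniqueness} then shows directly that this limit is reversible with stationary density $f$. By uniqueness, it is exactly the reversible diffusion with diffusion matrix $A_0$.

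\emph{Spectral comparison.} Apply Proposition~\ref{prop:robustness_ito} with $A=A_0$ and $B=\sigma_f^2 I_d$. This gives $\mu_k(A_0)\asymp\mu_k^{\rm OD}(f)$ up to the fixed condition-number factor $\kappa(A_0)/\sigma_f^2$, which is independent of barrier heights. In the metastable regime $\mu_k\sim e^{-\Delta U_k}$, taking logarithms shows the barrier exponents coincide: $\log\mu_k(A_0) = \log\mu_k^{\rm OD}(f) + O(1)$.

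The main obstacle is the claim that the \emph{slow eigenvalues} of the underdamped process converge to those of the limit, rather than merely the trajectories. Weak path convergence alone does not transfer spectra, and $\mathcal{L}^{\rm UD}$ is non-self-adjoint. I would handle this by restricting the statement to the limiting diffusion's eigenvalues, which is how the proposition is phrased, and citing hypocoercive resolvent-convergence results in \cite{Pavliotis2014} for the identification in the metastable regime.
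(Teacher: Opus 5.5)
Your proposal is correct and follows the paper's route. Like the paper, you cite \cite{pavliotis_stuart2008,Pavliotis2014} for the weak Smoluchowski--Kramers limit, transfer the barrier exponents to the canonical rates via the Rayleigh-quotient bound of Proposition~\ref{prop:robustness_ito}, and read the eigenvalue claim as a statement about the limiting diffusion; the paper also defers any finite-$\eta$ comparison of $\eta\,\mu_k^{\rm UD}$ with $\mu_k^{\rm OD}$. Your homogenization and zero-current checks are added detail.

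One sharpening. Your own Fredholm step shows that $M$ drops out, since $\E_{p\sim\mathcal{N}(0,\beta^{-1}M)}[p^\top M^{-1}\nabla^2 u_0\,p] = \beta^{-1}\Delta u_0$. So $A_0 = 2\beta^{-1}I_d$ is scalar, and the comparison is the exact proportionality $\mu_k(A_0) = \tfrac{2}{\beta\sigma_f^2}\,\mu_k^{\rm OD}(f)$. Its sub-exponential prefactor leaves the barrier exponents unchanged, so you need neither the hedge about $M$ entering nor a condition-number factor. When $\beta\to\infty$, the log-offset is $O(\log\beta)$, not $O(1)$, because that factor is not fixed in $\beta$.
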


\paragraph{Eyring--Kramers asymptotics (informal).}
Standard Eyring--Kramers theory~\cite{kramers1940,bovier_dh2015,nelson1967} for a metastable density $f$ with a finite collection of well-separated basins (well-defined saddle structure, non-degenerate quadratic minima and saddles) gives slow rates of the underdamped Langevin generator at fixed positive $\eta$ and $M$ of the form
\[
\mu_k^{\rm UD}(\eta, \beta;\, f) \;\sim\; \kappa_k(\eta, M;\, f)\,\exp\!\bigl(-\beta\,\Delta U_k(f)\bigr) \quad \text{as } \beta \to \infty,
\]
with $\kappa_k > 0$ depending on $\eta, M$ and local curvatures, and $\Delta U_k(f)$ a barrier height determined by $f$ alone. The same form holds for the overdamped rates $\mu_k^{\rm OD}$ with a different prefactor.

\paragraph{Asymptotic ordering implication.}
\label{cor:kinetic_ordering}
For two densities $f_A, f_B$ with non-degenerate barrier-height ordering at index $k$, the Eyring--Kramers form implies that in the deep-barrier limit $\beta\,|\Delta U_k(f_A) - \Delta U_k(f_B)| \to \infty$ (at fixed $\eta, M$), the sign of $\mu_k^{\rm UD}(f_A) - \mu_k^{\rm UD}(f_B)$ is determined by the barrier-height comparison and matches the corresponding overdamped sign. This is an asymptotic implication of standard metastability results, not a free-standing theorem; we state it as the heuristic anchor for the empirical agreement between DA on the canonical diffusion and TICA on underdamped MD trajectories (Section~\ref{sec:exp_molecular}).

\paragraph{Derivation sketch.}
The Eyring--Kramers form gives $\mu_k^{\rm UD}(\eta, \beta;\, f) \sim \kappa_k(\eta, M;\, f)\exp(-\beta \Delta U_k(f))$ as $\beta \to \infty$. Taking the ratio of the rates for $f_A$ and $f_B$, the prefactor ratio is finite and positive while the exponent diverges (in absolute value) as $\beta \to \infty$, so the sign of $\mu_k^{\rm UD}(f_A) - \mu_k^{\rm UD}(f_B)$ is determined by $\Delta U_k(f_B) - \Delta U_k(f_A)$ in the deep-barrier limit. The same applies to $\mu_k^{\rm OD}$ via the Smoluchowski limit.

\paragraph{Consequence for slow-mode orderings.}
For \emph{leading exponential escape rates}, the deep-barrier asymptotic predicts that the canonical diffusion and any underdamped Langevin process with the same stationary marginal $f$ have the same slow-CV ordering, since both inherit it from the $f$-determined barrier-height profile. SSA values, DA directions, and prefactors can still differ between the two dynamics outside the asymptotic regime; what is preserved asymptotically is the ordering of the leading exponents. The empirical agreement between DA's static-sample recovery on alanine dipeptide (Section~\ref{sec:exp_molecular}) and TICA's trajectory-based recovery is consistent with this prediction in the regime $\beta \Delta U_k \gg 1$ (alanine dihedrals at 300\,K).
\paragraph{Algorithmic relation to TICA.}
DA and TICA differ in the eigenvalue-problem formulation, not merely in the underlying dynamics. DA solves the standard eigenvalue problem $C(\tau)v = \lambda v$, retaining variance weighting and ranking directions by absolute slow-mode contribution to the autocovariance. TICA solves the generalized eigenvalue problem $C(\tau)v = \lambda C(0)v$, normalizing out per-direction variance and ranking directions by relaxation \emph{rate}. Had we adopted the full-covariance convention $A_0 = \Cov_f(X)$ instead of the scalar-variance choice $A_0 = \sigma_f^2 I_d$ (Appendix~\ref{app:diffusion_matrix_choice}), DA and TICA would coincide in that respect. We did not, for two reasons: a conceptual one (the affine-invariant construction has counterintuitive edge cases) and a computational one (DA avoids inverting $\hat C(0)$, which scales better in high dimensions). Systematically exploring the full-covariance variant is a natural direction for future work.

\paragraph{Quantitative comparison at finite $(\eta, \beta)$.}
Without taking limits, comparing $\eta\,\mu_k^{\rm UD}(\eta, \beta)$ to $\mu_k^{\rm OD}(f)$ at finite friction and finite temperature would require a non-asymptotic relative-error statement of the form
\[
\Bigl|\,\eta\,\mu_k^{\rm UD}(\eta, \beta;\, f) \,/\, \mu_k^{\rm OD}(f) \;-\; 1\,\Bigr| \;\le\; C_1/\eta^2 \;+\; C_2\,e^{-\beta\,\Delta U_k(f)}.
\]
A clean version of this bound is plausible by perturbation theory of the underdamped generator on phase space, but is outside the scope of this paper; we leave it to follow-up work. The asymptotic regime matters: outside it, slow-mode orderings between underdamped and overdamped dynamics can flip, since polynomial-in-width escape costs (overdamped diffusion across a wide shallow basin) compete with kinetic-energy-aided ballistic crossings (underdamped through a narrow deep barrier when $\beta\,\Delta U_k \sim 1$). Alanine dipeptide at 300\,K satisfies $\beta\,\Delta U_k \gg 1$ for the dihedral barriers, so the corollary applies.

\section{Score Estimation via Diffusion Models}
\label{app:tweedie}

\subsection{The Tweedie Identity}
A diffusion model defines a forward noising process $q(x_t \mid x_0) = \mathcal{N}(x_t;
\sqrt{\bar\alpha_t} x_0, \sigma_t^2 I)$, where $\sigma_t^2 = 1 - \bar\alpha_t$.
The marginal at time $t$ is
\[
f_t(x) = \int q(x \mid x_0) f(x_0)\, dx_0.
\]
Using Bayes' rule to express the posterior $p(x_0 \mid x_t)$, the score of this marginal satisfies
\[
\nabla \log f_t(x) = -\frac{x - \sqrt{\bar\alpha_t}\,\E[x_0 \mid x_t = x]}{\sigma_t^2}.
\]
One can notice that any sample from this kernel can be written in the reparameterized form
\[
x_t = \sqrt{\bar\alpha_t}\,x_0 + \sigma_t\,\epsilon, \qquad \epsilon \sim \mathcal{N}(0, I),
\]
which isolates the structured signal $\sqrt{\bar\alpha_t}\,x_0$ from the additive
Gaussian noise $\sigma_t\,\epsilon$. Rearranging gives the residual noise explicitly:
\[
\epsilon = \frac{x_t - \sqrt{\bar\alpha_t}\,x_0}{\sigma_t},
\qquad \text{equivalently,} \qquad
x_0 = \frac{x_t - \sigma_t\,\epsilon}{\sqrt{\bar\alpha_t}}.
\]
A diffusion model trains a denoiser $\epsilon_\theta(x_t, t)$ to predict this noise $\epsilon$,
so that $\E[x_0 \mid x_t] \approx (x_t - \sigma_t\,\epsilon_\theta(x_t, t))/\sqrt{\bar\alpha_t}$.
Substituting into the score expression above yields the Tweedie identity:
\[
\nabla \log f_t(x) = -\frac{\epsilon_\theta(x, t)}{\sigma_t}.
\]

\subsection{Recovery of the Data Score as $t \to 0$}
As $t \to 0$, $\sigma_t \to 0$ and $\sqrt{\bar\alpha_t} \to 1$, so the forward kernel
$q(x_t \mid x_0)$ concentrates: $f_t \to f$ in distribution. More precisely, for any
smooth test function $\phi$,
\[
\int \phi(x) f_t(x)\, dx \;\to\; \int \phi(x) f(x)\, dx,
\]
and under mild regularity on $f$ the scores converge pointwise:
$\nabla \log f_t(x) \to \nabla \log f(x)$.
Consequently, evaluating the Tweedie identity at a small but finite $t_{\mathrm{small}}$
gives
\[
\nabla \log f(x) \approx -\frac{\epsilon_\theta(x,\, t_{\mathrm{small}})}{\sigma_{t_{\mathrm{small}}}},
\]
with approximation error controlled by $\sigma_{t_{\mathrm{small}}}$ and the smoothness
of $f$. This provides a tractable score oracle for the data distribution through a
single forward pass of the trained denoiser, with no density estimation step and no
dependence on dimensionality. The division by $\sigma_t$ is numerically benign at the
first discrete timestep of a standard linear schedule ($\sigma_1 \approx 0.01$).

\subsection{Iso-Gaussian Null via Large-$t$ Score Evaluation}
\label{app:tweedie_largeT}
The same trained denoiser provides a self-contained iso-Gaussian score
oracle for the operational noise floor of Algorithm~\ref{alg:operational_floor}.
By construction, the noised marginal $f_t$ at large $t$ is close to
$\mathcal{N}(0, \sigma_t^2 I)$, so the denoiser learns the linear iso-Gaussian
score $\nabla \log f_t(x) \approx -x/\sigma_t^2$ at that level. Evaluating
the canonical diffusion with this same score network at
$t = t_{\mathrm{null}} \gg 0$ therefore runs the iso-Gaussian null through
the entire downstream pipeline (sampling, decoding, embedding) at no extra
training cost. We use this for the elderly DA experiment in
Section~\ref{sec:exp_sdxl} (details in Appendix~\ref{app:sdxl_elderly}).

\section{Synthetic GMM Experiments}
\label{app:gmm_experiments}

This appendix details the GMM simulation setup, the mutual information ground-truth comparator, distribution snapshots from the parameter sweeps, the random-GMM sweep that confirms SSA-MI rank correlation across many configurations, and bootstrap analysis of the mode-count sweep.

\subsection{GMM Simulation Details}
\label{app:gmm_details}

All GMM experiments use 10-dimensional bimodal Gaussian mixtures with
$N = 200$ trajectories of the canonical diffusion, $\Delta t = 0.01$
Euler--Maruyama steps, and physical horizon $T_{\mathrm{phys}} = 200$\,s
(so $T = 20{,}000$ steps). The truncation lag $T_{\max}$ is selected by
the $1/N$ stopping rule of \eqref{eq:tmax_stopping}. The diffusion
coefficient follows the scalar variance convention
$\sigma_f^2 = \frac{1}{d}\tr(\Cov_f(X))$. Stationary moments are
computed analytically:
\[
\bar x_f = \sum_{i=1}^{k} \pi_i \mu_i, \qquad
\Sigma_f = \sum_{i=1}^{k} \pi_i \left[\Sigma_i + (\mu_i - \bar x_f)(\mu_i - \bar x_f)^\top\right].
\]
The four parameter sweeps are configured as follows.
\textbf{Separation:} two equal-weight isotropic-variance components
($\sigma^2 = 1$ in all 10 dimensions for both modes) with the two mode
means separated along dimension~0 only (the other 9 coordinates
identical between the two modes); $30$ linearly spaced separations
from $0.5$ to $8.0$, of which the $22$ points with $\mathrm{sep} \le 6.0$
(all-seed-converged) are displayed in Figure~\ref{fig:gmm_sweeps}.
\textbf{Variance:} two equal-weight components at fixed separation
$4.0$ along dimension~0, with isotropic component variance varying
across $30$ linearly spaced values from $0.1$ to $5.0$; the $26$ points
with $\mathrm{var} \ge 0.78$ (all-seed-converged) are displayed.
\textbf{Weight:} two components at separation $4.0$ along dimension~0
and unit variance, with $\pi_1$ varying across $20$ linearly spaced
values from $0.5$ to $0.95$ ($\pi_2 = 1 - \pi_1$); all $20$ points are
all-seed-converged. \textbf{Mode count:} $K$ equal-weight isotropic
components with variance $0.5$ placed on a line along dimension~0 at
positions $0, 4, 8, 12, 16$ (other 9 coordinates identical across
modes), $K$ varying from $1$ to $5$.

The \textbf{random sweep} (Figure~\ref{fig:random_sweep}) samples 50
bimodal GMMs in $d = 10$ with each instance drawing three independent
parameters: separation $\sim U(1.0, 8.0)$ (along dimension~0 only;
other 9 dimensions have zero mean separation), isotropic variance
$\sigma^2 \sim U(0.5, 3.0)$ (same in all 10 dimensions, identical for
both modes), and mode-1 weight $\pi_1 \sim U(0.5, 0.95)$
($\pi_2 = 1 - \pi_1$). Each point of the four sweeps in
Figure~\ref{fig:gmm_sweeps} is computed over $10$ independent seeds; we
report the mean and $\pm 1$ SD. Seed-to-seed coefficient of variation
of $\hat S$ is $2.4\%$ on average across all parameter settings, with
a maximum of $12\%$ at the highest displayed separation
($\mathrm{sep} = 6.0$).

\subsection{Mutual Information Baseline}
\label{app:mi_baseline}

For a GMM $X \sim \sum_{i=1}^{k} \pi_i \mathcal{N}(\mu_i, \Sigma_i)$ with
latent component indicator $Z$, the mutual information is
$I(X;Z) = H(Z) - H(Z \mid X)$, where
$H(Z) = -\sum_i \pi_i \log \pi_i$ and the conditional entropy is
$H(Z \mid X) = -\int \phi_{\mathrm{mix}}(x) \sum_i q_i(x) \log q_i(x)\,dx$
with posterior $q_i(x) = \pi_i \phi(x;\mu_i,\Sigma_i) / \phi_{\mathrm{mix}}(x)$.
Well-separated modes yield $H(Z \mid X) \approx 0$ and
$I(X;Z) \approx H(Z)$; overlapping modes yield $I(X;Z) \approx 0$. MI
captures static distinguishability of components from single observations;
SSA captures the dynamic difficulty of transitioning between them. We use
MI as a validation baseline: both should increase with mode separation, but
they can diverge when static overlap differs from barrier height.

\subsection{Distribution Snapshots}

\begin{figure}[H]
\centering
\includegraphics[width=0.8\textwidth]{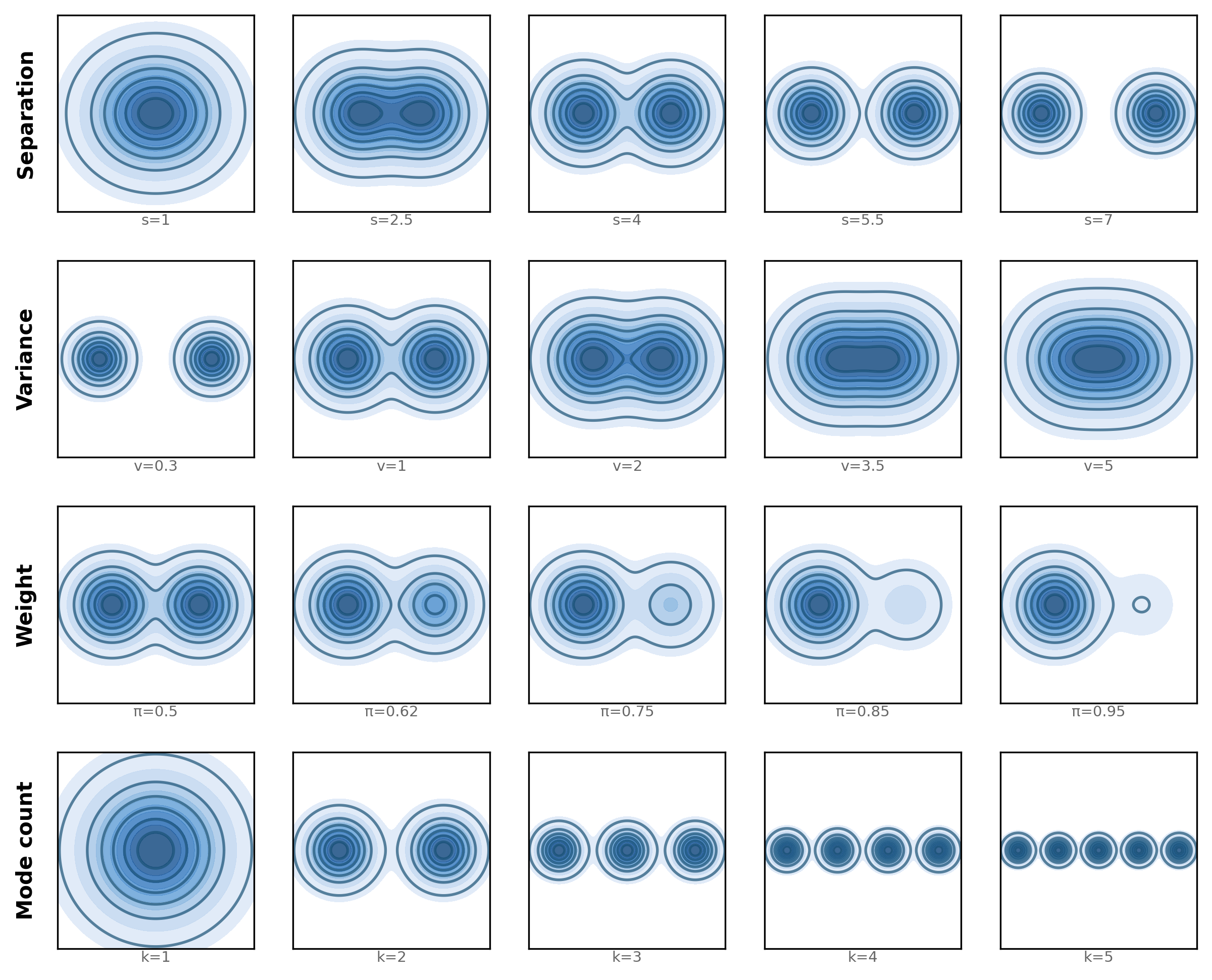}
\caption{  \textbf{2D illustrations of the sweep parameters used in
Figure~\ref{fig:gmm_sweeps}}, shown at five representative points along
each sweep. The actual experiments are run in $d = 10$ (see
Appendix~\ref{app:gmm_details}); the panels here are 2D analogues of the
same parameter configurations, rendered for visual intuition only.
Each row corresponds to one sweep (separation, variance, weight, mode
count); parameter values are labeled below each panel. Density is
rendered as filled contours with square-root scaling to preserve
visibility of low-weight modes.}
\label{fig:gmm_snapshots_appendix}
\end{figure}

\subsection{Random GMM Sweep}
\label{app:gmm_random}

\begin{figure}[H]
\centering
\includegraphics[width=0.35\textwidth]{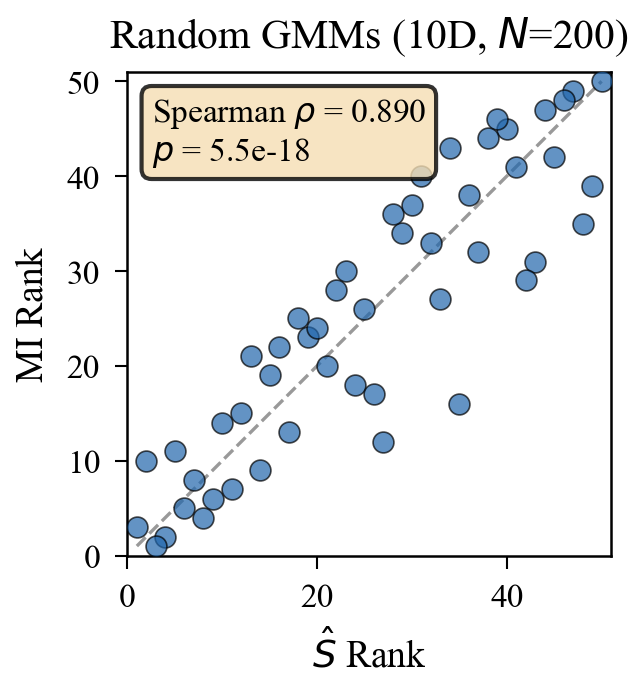}
\caption{  Rank comparison of SSA and MI across 50 random bimodal
GMMs in $d = 10$ ($N = 200$ trajectories per GMM). Spearman
$\rho = 0.890$, $p = 5.5 \times 10^{-18}$. Dashed line indicates
perfect rank agreement.}
\label{fig:random_sweep}
\end{figure}

The rank correlation scatter in Figure~\ref{fig:random_sweep} is
computed from 50 bimodal GMMs in $d = 10$ with separation along
dimension~0 sampled uniformly from $[1, 8]$, isotropic variance
(identical across all 10 dimensions and both modes) from $[0.5, 3]$,
and mode-1 weight from $[0.5, 0.95]$. Each GMM is evaluated with
$N = 200$ trajectories of the canonical diffusion at $\Delta t = 0.01$
and physical horizon $T_{\mathrm{phys}} = 100$\,s, with truncation by
the $1/N$ rule. Spearman rank correlation between SSA and MI is
$\rho = 0.890$ ($p = 5.5 \times 10^{-18}$), confirming agreement on
ordering across diverse configurations.

\subsection{Bootstrap analysis of the mode-count sweep}
\label{app:gmm_bootstrap}

Of the five mode-count points in Figure~\ref{fig:gmm_sweeps}, $K=4$
and $K=5$ did not meet the trace-CLT stopping criterion
(Equation~\ref{eq:tmax_stopping}) within $T_{\mathrm{phys}} = 200$\,s in
all $10$ seeds. We apply the trajectory bootstrap of
Appendix~\ref{app:sdxl_bootstrap}: within each seed, the $N = 200$
trajectories are resampled with replacement and $\hat S$ is recomputed
end-to-end ($B = 10{,}000$ replicates). Trajectory seeds are not
shared across $K$, so each $K$ is bootstrapped independently;
$\Pr[\hat S_K > \hat S_{K-1}]$ is computed from independent draws.
Bootstrap replicates are pooled across the $10$ seeds.

\begin{table}[H]
\centering
\caption{  Mode-count sweep: trajectory bootstrap
($B = 10{,}000$, $10$ seeds). $\dagger$ marks points that did not meet
the $1/N$ stopping criterion at $T_{\mathrm{phys}} = 200$\,s; by
Proposition~\ref{prop:ssa_monotonicity} the reported $\hat S$
lower-bounds the population value.}
\label{tab:gmm_bootstrap}
 
\begin{tabular}{@{}cccc@{}}
\toprule
$K$ & $\hat S$ & SE & $\Pr[\hat S_K > \hat S_{K-1}]$ \\
\midrule
$1$           & $0.989$  & $0.001$ & --      \\
$2$           & $6.088$  & $0.141$ & $1.000$ \\
$3$           & $14.489$ & $0.424$ & $1.000$ \\
$4^{\dagger}$ & $20.831$ & $0.587$ & $1.000$ \\
$5^{\dagger}$ & $24.734$ & $0.642$ & $1.000$ \\
\bottomrule
\end{tabular}
\end{table}

All four pairwise ordering probabilities equal $1.000$, so the
monotonic ranking $K \to \hat S$ is robust despite $K = 4, 5$ not
fully converging within the simulation horizon. The mode-count sweep is a fixed-spectral-profile comparison of the kind covered by Proposition~\ref{prop:ssa_monotonicity}: longer horizons widen the gaps without inverting them; the bootstrap further confirms stability over the lag window.

\section{Alanine Dipeptide}
\label{app:alanine_full}

This appendix details the alanine dipeptide experiment: KDE score-estimation choices, robustness analysis of the DA result, and TICA lag-implied-timescale validation.

\subsection{Alanine Dipeptide Score Estimation Details}
\label{app:alanine_details}

Scores are estimated via isotropic Gaussian KDE with bandwidth
$h = h_{\mathrm{Scott}}/2 = 0.0091$ using 2000 centers subsampled from
the dataset, computed directly in the 45-dimensional pairwise distance
feature space. Halving Scott's rule eliminates a structured bias at
large $\tau$ that we observed under Scott's bandwidth itself: KDE
oversmoothing across the deep $\phi$ and $\psi$ barriers introduces a
small systematic component in the score that accumulates over $\tau$
and produces structured negative outliers in $\hat C^{\mathrm{sym}}(\tau)$
at long lags. The DA1 and DA2 directions are unaffected (their spike
eigenvalues sit two orders of magnitude above the bias scale), but
halving the bandwidth removes the structured tail and extends the lag
range over which the count plateau is stable.

\subsection{Robustness of the Alanine Dipeptide DA Result}
\label{app:alanine_robustness}

\paragraph{Spectrum at $\tau^*$.}
Figure~\ref{fig:alanine_bulk_validation} shows the full eigenvalue
spectrum at the chosen lag $\tau^* = 10{,}000$\,ps. The leading two
eigenvalues $\lambda_1 = 1.21 \times 10^{-2}$ and
$\lambda_2 = 6.10 \times 10^{-4}$ sit far above the analytic upper
edge $\lambda_+(\tau^*) = 1.97 \times 10^{-4}$ predicted by
Theorem~\ref{thm:limit-spectrum} (gaps $61\times$ and $3.1\times$
respectively), while the remaining $43$ eigenvalues track the
analytic free-convolution density. The bulk variance
$\sigma^2_{\mathrm{bulk}}$ is fit after one-step peeling of the two
leading spikes; the loose orange edge derived from the unpeeled
$\sigma^2_f = \tr \hat C(\tau^*)/d$ shows the same generic
spike-bias phenomenon as in the SDXL elderly experiment
(Appendix~\ref{app:sdxl_elderly}, Figure~\ref{fig:elderly_spectrum})

\clearpage

\begin{figure}[H]
\centering
\includegraphics[width=0.85\textwidth]{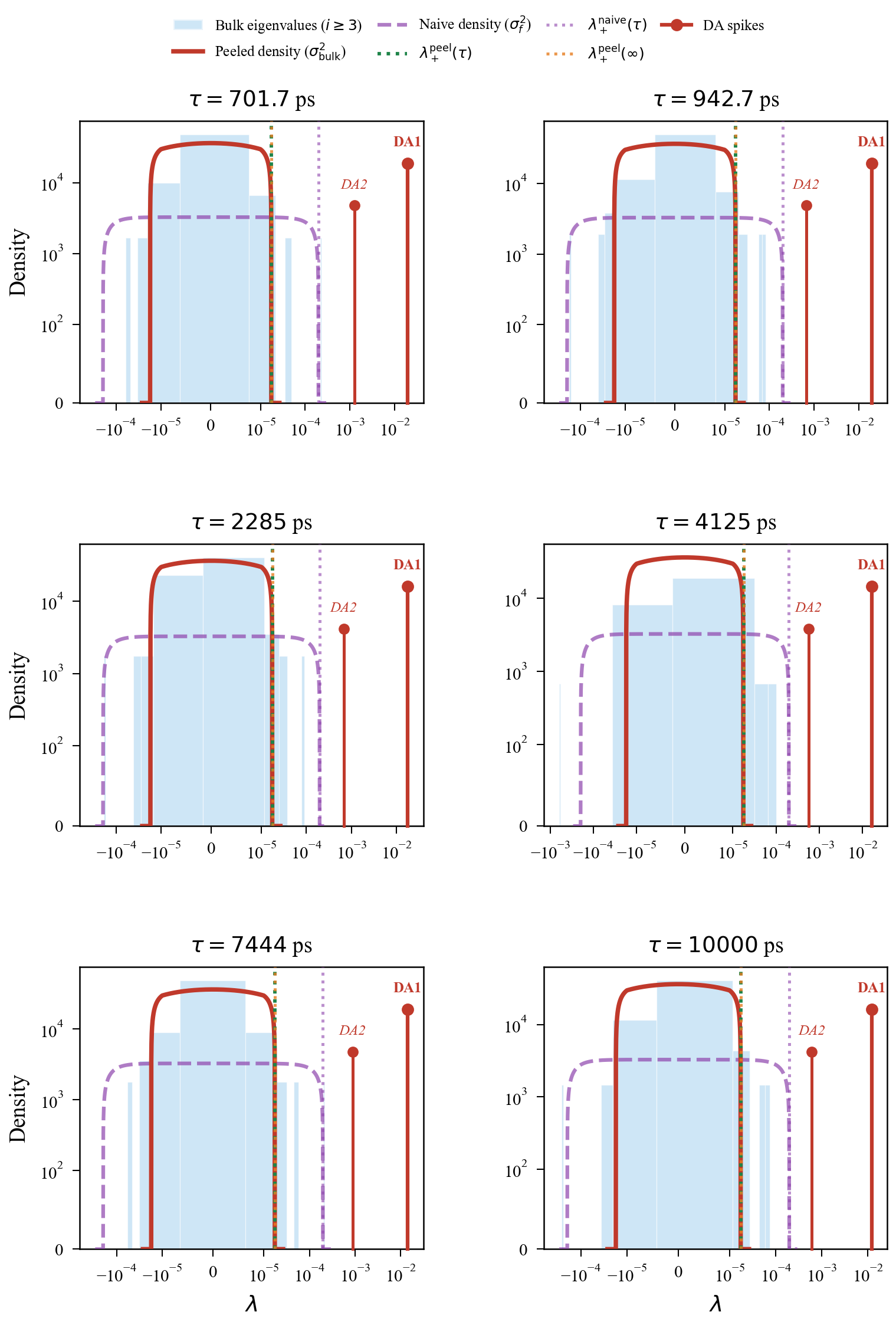}
\caption{  Eigenvalue spectrum of $\hat C^{\mathrm{sym}}(\tau^*)$
at $\tau^* = 10{,}000$\,ps for alanine dipeptide ($d = 45$, $N = 2{,}000$,
KDE bandwidth $h = h_{\mathrm{Scott}}/2 = 0.0091$). Histogram of all
$45$ eigenvalues on a symlog scale. Red curve: analytic free-convolution
density $\rho_{\tau^*}^\gamma$ from Theorem~\ref{thm:limit-spectrum}
with $\gamma = 45/N$ and $\sigma^2_{\mathrm{bulk}}$ fit on the bulk
after peeling the two leading spikes. Dashed vertical lines:
peeled-bulk edges $\lambda_\pm(\tau^*)$; orange dotted lines: full-trace
edges from the unpeeled $\sigma^2_f = \tr\hat C(0)/d$. Two eigenvalues
sit far above the bulk: $\lambda_1 = 1.21 \times 10^{-2}$ (DA1$\to\psi$)
and $\lambda_2 = 6.10 \times 10^{-4}$ (DA2$\to\phi$); both exceed the
analytic upper edge $\lambda_+(\tau^*) = 1.97 \times 10^{-4}$ by
$61\times$ and $3.1\times$ respectively. The remaining $43$ eigenvalues
fall on the analytic bulk. The analytic density is exact only under
the i.i.d.\ Gaussian null and is invoked here as an approximation via
Marchenko--Pastur universality.}
\label{fig:alanine_bulk_validation}
\end{figure}

\paragraph{Lag sensitivity.}
We evaluate DA across the long-lag plateau, sampling
$\tau \in \{700, 1{,}000, 2{,}000, 4{,}000, 7{,}000, 10{,}000\}$\,ps.
Table~\ref{tab:alanine_lag_sensitivity} reports the top two eigenvalues
$\lambda_1, \lambda_2$, the analytic noise-floor edge
$\lambda_+(\tau)$ from Theorem~\ref{thm:limit-spectrum}, the
eigenvalue gap $\lambda_1/\lambda_2$, the DA1 correlation with $\psi$,
the DA2 correlation with $\phi$, the angle between DA1 at each lag
and DA1 at $\tau^* = 10{,}000$\,ps, and whether $\lambda_2 > \lambda_+(\tau)$.
Across the full range, DA1 rotates by less than $2^\circ$ and
$|r(\psi)|_{\mathrm{DA1}}$ varies within $[0.670, 0.674]$; the
DA1$\leftrightarrow\psi$, DA2$\leftrightarrow\phi$ ordering holds at
every lag. Both eigenvalues exceed the analytic noise floor at every
lag in this range, with $\lambda_2/\lambda_+(\tau) \in [2.9, 6.4]$
across the plateau. The autocorrelation guard $\hat\rho(\tau) < 1/e$
is only satisfied at $\tau = 10{,}000$\,ps on this grid
(Table~\ref{tab:alanine_lag_sensitivity}); we therefore report DA
at $\tau^* = 10{,}000$\,ps as the unique grid point clearing the full
selection rule, with the remaining rows demonstrating that DA1 is
rotationally stable across the long-lag plateau independent of the
$\hat\rho$ guard.

\begin{table}[H]
\centering
\caption{  Lag sensitivity of DA on alanine dipeptide at h/2
bandwidth across the long-lag plateau. Columns: top-two eigenvalues
of $\hat C^{\mathrm{sym}}(\tau)$, analytic noise floor $\lambda_+(\tau)$
(Theorem~\ref{thm:limit-spectrum}),
DA1 correlation with $\psi$, DA2 correlation with $\phi$,
trace-normalized autocorrelation $\hat\rho(\tau)$, angular deviation
of DA1 from its direction at $\tau^* = 10{,}000$\,ps, and whether
$\lambda_2 > \lambda_+(\tau)$. The $\tau = 2{,}000$\,ps row is at
the nearest grid point ($\tau = 2{,}285.5$\,ps).}
\label{tab:alanine_lag_sensitivity}

\setlength{\tabcolsep}{4pt}
\begin{tabular}{@{}ccccccccc@{}}
\toprule
$\tau$ (ps) & $\lambda_1$ & $\lambda_2$ &
$\lambda_+(\tau)$ &
$|r(\psi)|_{\mathrm{DA1}}$ & $|r(\phi)|_{\mathrm{DA2}}$ &
$\hat\rho(\tau)$ & $\angle$(DA1, ref) & $\lambda_2 > $ floor? \\
\midrule
$701.7$    & 0.0194 & 0.00125  & 0.000197 & 0.670 & 0.854 & 0.505 & $2.1^\circ$ & Yes \\
$942.7$    & 0.0191 & 0.000655 & 0.000197 & 0.670 & 0.881 & 0.478 & $2.1^\circ$ & Yes \\
$2{,}286$  & 0.0181 & 0.000755 & 0.000197 & 0.671 & 0.878 & 0.445 & $1.6^\circ$ & Yes \\
$4{,}125$  & 0.0162 & 0.000564 & 0.000197 & 0.674 & 0.884 & 0.393 & $1.3^\circ$ & Yes \\
$7{,}444$  & 0.0142 & 0.000885 & 0.000197 & 0.670 & 0.858 & 0.369 & $1.4^\circ$ & Yes \\
$10{,}000$ & 0.0121 & 0.000610 & 0.000197 & 0.672 & 0.873 & 0.304 & $0.0^\circ$ & Yes \\
\bottomrule
\end{tabular}
\end{table}

\paragraph{Bootstrap standard errors.}
At $\tau^* = 10{,}000$\,ps, bootstrap resampling over the $750{,}000$ MD
frames ($B = 5{,}000$ replicates, DA directions held fixed) yields
standard errors at or below $0.001$ for all four Table~\ref{tab:alanine}
correlations: $|r(\psi)|_{\mathrm{DA1}} = 0.672 \pm 0.001$,
$|r(\phi)|_{\mathrm{DA1}} = 0.351 \pm 0.001$,
$|r(\psi)|_{\mathrm{DA2}} = 0.069 \pm 0.001$,
$|r(\phi)|_{\mathrm{DA2}} = 0.873 \pm 0.000$. The separation between
DA1's $\psi$ and $\phi$ correlations is $0.321$, hundreds of standard
errors wide, confirming that the DA1$\leftrightarrow\psi$ association
is statistically unambiguous.

\paragraph{The peeled vs.\ unpeeled gap is generic, not framework-specific.}
The two noise-floor lines shown in
Figure~\ref{fig:alanine_bulk_validation}
(peeled $\sigma^2_{\mathrm{bulk}}$ versus full-trace $\sigma^2_f$) differ
because the two leading spikes carry a non-negligible share of
$\tr\hat C(\tau)$, biasing the naive $\sigma^2_f = \tr\hat C(\tau)/d$
upward and producing a loose noise floor. This is a generic feature of
spike-bearing spectra under MP-style fits, not specific to the
free-convolution framework. The same gap appears at $\tau = 0$, where Theorem~\ref{thm:limit-spectrum} reduces to the classical Marchenko--Pastur law (Section~\ref{app:consequences}): the $\hat C(0)$ sample-covariance spectrum carries two leading PCA spikes that bias the naive $\sigma^2_f$ upward, and the peeled bulk fit recovers the correct MP density and edge. The free-convolution generalization at $\tau > 0$ inherits this property unchanged.

\subsection{Significance under the iso-Gaussian null}
\label{app:alanine_bonferroni}

The analytic edge $\lambda_+(\tau^*)$ of
Theorem~\ref{thm:limit-spectrum} provides a closed-form noise floor
under the iso-Gaussian null with an exact (linear) score. As
discussed in Remark~\ref{rem:floor-scope}, an operational refit of
the KDE pipeline on iso-Gaussian draws is not informative for
alanine, since the data-calibrated bandwidth is mismatched to the
ambient geometry of the null. We instead instantiate the operational
null (Algorithm~\ref{alg:operational_floor}) with the analytic
linear score $\nabla \log f(x) = -x/\sigma_f^2$ on iso-Gaussian
samples of matched covariance, running the canonical diffusion
through the full downstream eigendecomposition. This Monte Carlo null
empirically captures the $N^{2/3}$-scale Tracy--Widom edge
fluctuations around $\lambda_+(\tau^*)$ that the asymptotic theorem
does not.

For each candidate $m \in \{1, \ldots, 5\}$ we test whether the top
$m$ DA eigenvalues jointly exceed the null at Bonferroni level
$\alpha/m$ ($\alpha = 0.05$). The recovered metastable dimension is
the maximal $m$ for which DA$_1, \ldots,$ DA$_m$ all clear the
threshold $\alpha/m$. Table~\ref{tab:alanine_bonferroni} reports the
per-direction observed eigenvalue, null-max statistic, empirical
p-value over $B = 100$ replicates, and the Bonferroni threshold each
direction must meet at its own family size. The empirical null
maximum agrees with $\lambda_+(\tau^*)$ to within $9\%$, so the
analytic edge and the Monte Carlo null give identical pass/fail
verdicts here, and the surviving directions also exceed
$\lambda_+(\tau^*)$ directly.

\begin{table}[H]
\centering
\caption{  Iso-Gaussian MC null at $\tau^* = 10{,}000$\,ps for
alanine dipeptide ($B = 100$ replicates, analytic linear score).
$p_{\mathrm{emp}}$ is the empirical p-value for direction $i$;
``Pass at $\alpha/i$?'' tests whether DA$_i$ clears the Bonferroni
threshold required when included in the top-$i$ family.}
\label{tab:alanine_bonferroni}
 
\begin{tabular}{@{}cccccc@{}}
\toprule
$i$ & $\lambda_i^{\mathrm{obs}}$ & null max & $p_{\mathrm{emp}}$ & $\alpha/i$ & Pass? \\
\midrule
$1$ & $1.21 \times 10^{-2}$ & $2.13 \times 10^{-4}$ & $< 0.01$ & $0.0500$ & \checkmark \\
$2$ & $6.10 \times 10^{-4}$ & $1.94 \times 10^{-4}$ & $< 0.01$ & $0.0250$ & \checkmark \\
$3$ & $6.02 \times 10^{-5}$ & $1.79 \times 10^{-4}$ & $1.000$ & $0.0167$ & $\times$ \\
\bottomrule
\end{tabular}
\end{table}

DA1 and DA2 clear their respective Bonferroni thresholds; DA3 sits
below the null median and fails. The recovered metastable dimension
is $\hat m = 2$. Both surviving directions also reject unimodality
under Hartigan's dip test (DA1: $D = 0.046$, $p \approx 0$; DA2:
$D = 0.018$, $p \approx 0$), satisfying the two-stage criterion of
Section~\ref{sec:da}, consistent with the two known dihedral
barriers $\phi$ and $\psi$.

\subsection{TICA Lag Validation}
\label{app:tica_lag_validation}

Figure~\ref{fig:tica_its} shows implied timescale convergence and
direction sensitivity for TICA on the 45-dimensional pairwise distance
featurization. The top two implied timescales have not fully plateaued
at lag $= 10$ frames, indicating that the Markov assumption is not yet
satisfied for timescale estimation at this lag. However, the TICA
\emph{directions} at lag $= 10$ yield the strongest dihedral correlations
(TICA1 $|r(\psi)| = 0.75$; right panel), and the ordering
$\psi$-slowest, $\phi$-second is stable across lags $1$--$10$. We
report TICA at the PyEMMA default lag $= 10$ frames ($10$\,ps).

\begin{figure}[H]
\centering
\includegraphics[width=\textwidth]{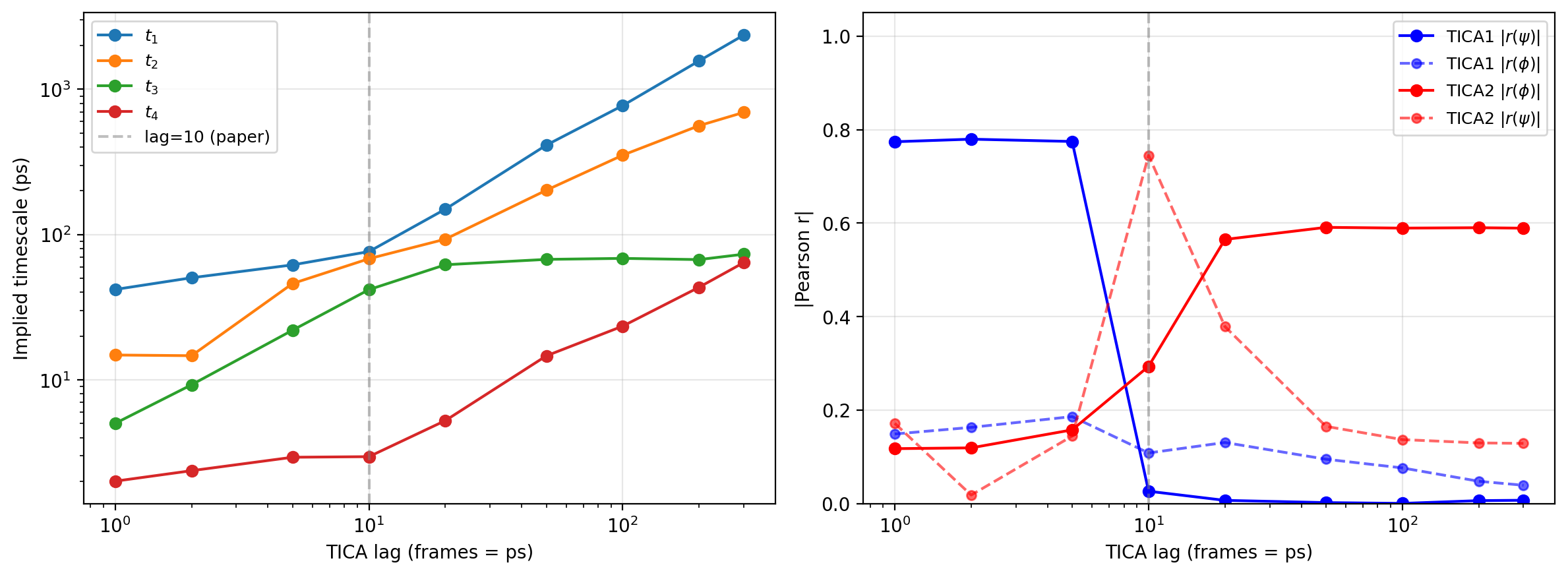}
\caption{  \textbf{Left:} Implied timescales of the top four
TICA components as a function of lag for alanine dipeptide (45D pairwise
distances). Dashed line marks lag $= 10$, used in
Table~\ref{tab:alanine}. \textbf{Right:} Pearson correlation of TICA1
and TICA2 with ground-truth dihedrals $\psi$ and $\phi$ as a function
of lag. TICA1 correlates most strongly with $\psi$ at lags $1$--$10$.}
\label{fig:tica_its}
\end{figure}

\section{SDXL Experiment Details}
\label{app:sdxl_details}

\subsection{Canonical Diffusion and CLIP-Space Autocovariance}
\label{app:sdxl_pipeline}

The canonical diffusion runs in SDXL's latent space ($4 \times 128 \times 128 = 65{,}536$ dimensions at $1024 \times 1024$ resolution). The score oracle uses the Tweedie identity at noise level $t = 1$: $\nabla \log f(x) \approx -\epsilon_\theta(x, 1) / \sigma_1$, queried once per Langevin step with prompt conditioning applied at every step. Generation and Langevin scoring use the same guidance scale $w$.

The latent space itself is not perceptually meaningful: distances and directions there do not correspond to visual attributes, and SSA or DA computed on raw latents would quantify metastability in a space whose geometry is a byproduct of training rather than a reflection of image structure. To recover a perceptually meaningful readout, each trajectory snapshot is decoded through the VAE to pixel space and embedded via CLIP ViT-L/14~\cite{radford2021learning} (\texttt{openai/clip-vit-large-patch14}, $768$ dimensions), a space known to align with human semantic judgments. The autocovariance $\hat{C}(\tau)$ used for both SSA (trace) and DA (eigendecomposition) is computed on these CLIP embeddings, not on the raw latents. Running the canonical diffusion in latent space (rather than in CLIP
space) is what makes the pipeline tractable: the score is available
directly in latent space via Tweedie's identity on the SDXL denoiser,
whereas computing a score in CLIP space would require a separate
density model. Decoding and CLIP-embedding only the saved snapshots
adds a per-snapshot VAE+CLIP forward pass; pulling embeddings into the
dynamics would multiply this cost by every Langevin step. This pipeline applies to both the guidance sweep (Section~\ref{sec:exp_sdxl}) and the DA experiment (Appendix~\ref{app:sdxl_elderly}).

\subsection{Guidance Sweep Details}
\label{app:sdxl_guidance_details}

The guidance sweep uses SDXL Base~1.0 (\texttt{stabilityai/stable-diffusion-xl-base-1.0}) with $50$ DDIM steps at $1024 \times 1024$ resolution. We test fourteen prompts (Table~\ref{tab:sdxl_guidance}) at four guidance scales $w \in \{1, 3, 7.5, 15\}$.

For entropy estimation, $2{,}000$ images per (prompt, $w$) pair are generated and embedded via CLIP ViT-L/14. Entropy is estimated using $k$NN with $k = 5$ after PCA reduction to $d = 50$ dimensions; results are consistent across $k \in \{3, 5, 10\}$ and $d \in \{2, 5, 10, 15, 20, 30, 50\}$.

For SSA, the canonical diffusion runs $N = 200$ trajectories with $T = 2{,}000$ steps, $\Delta t = 0.005$ ($T_{\mathrm{phys}} = 10$\,s), and $111$ snapshots on a combined logarithmic and fixed-interval schedule. The reported SSA values use the fixed-20 variant (100 lags at every 20th snapshot). Total compute for the SDXL guidance sweep was approximately 188 GPU-hours on a single L40S; see the compute checklist for a full breakdown.

\subsection{Convergence Diagnostic for SDXL SSA Estimates}
\label{app:sdxl_convergence}

The trace-CLT stopping rule of \eqref{eq:tmax_stopping}
(Appendix~\ref{app:rho_clt}) declares the lag horizon insufficient
when $\hat\rho(T)^2$ at the largest evaluated lag still exceeds
$1/N$. With $N = 200$, the threshold is $1/N = 0.005$. Across all
$56$ (prompt, $w$) cells in Table~\ref{tab:sdxl_full} at
$T = 2{,}000$, only flowers and street at $w = 15$ meet the
criterion; the other $54$ cells are truncated short of stopping.
This is consistent with the theory: by
Corollary~\ref{thm:ssa_spectral}, $\rho(\tau)^2$ decays at rates
set by the eigenvalue sums $\mu_k + \mu_\ell$, so distributions
with smaller slow eigenvalues (deeper barriers) require
correspondingly longer lag horizons to clear the noise floor.
The compute-limited fallback of Appendix~\ref{app:compute_limited}
applies: the truncated $\hat S$ underestimates absolute values. For
fixed-spectral-profile comparisons this preserves rankings by
Proposition~\ref{prop:ssa_monotonicity}; for the SDXL guidance sweep
we treat the intermediate-$w$ peaks as empirically supported by the
trajectory bootstrap and the weight-comparability diagnostics of
Appendix~\ref{app:sdxl_weight_comparability}.

\subsection{Trajectory bootstrap}
\label{app:sdxl_bootstrap}

For each (prompt, $w$) cell, we resample the $N = 200$ trajectories
with replacement and recompute the trajectory-averaged $\hat S$,
repeating $B = 2{,}000$ times. The reported SE is the standard
deviation of the $B$ bootstrap replicates. Table~\ref{tab:sdxl_guidance_full} reproduces the body Table~\ref{tab:sdxl_guidance} with the per-cell SE listed beneath each $\hat S$ value.

\begin{table}[H]
\centering
\caption{SDXL guidance sweep with per-cell bootstrap SEs ($B = 2{,}000$); same data as body Table~\ref{tab:sdxl_guidance}. Bold: per-prompt peak.}
\label{tab:sdxl_guidance_full}
\begin{tabular}{@{}lccccccccccc@{}}
\toprule
$w$ & person & elderly & cat & food & car & bat & crane & mouse & seal & jaguar & palm \\
\midrule
$1$   & 1.06 & 1.09 & 0.83 & 0.92 & 0.87 & 1.04 & 1.08 & 1.02 & 0.88 & 0.91 & 1.28 \\
      & (.04) & (.05) & (.03) & (.03) & (.03) & (.04) & (.04) & (.04) & (.02) & (.04) & (.04) \\
\midrule
$3$   & 0.99 & 0.94 & 1.10 & 0.99 & 1.11 & 1.41 & 1.56 & \textbf{1.48} & \textbf{1.43} & \textbf{1.20} & 1.40 \\
      & (.02) & (.03) & (.02) & (.02) & (.03) & (.04) & (.05) & (.04) & (.04) & (.02) & (.03) \\
\midrule
$7.5$ & \textbf{1.44} & \textbf{1.35} & \textbf{1.18} & \textbf{1.21} & \textbf{1.24} & \textbf{1.52} & \textbf{1.88} & 1.38 & 1.31 & 1.08 & \textbf{1.44} \\
      & (.02) & (.03) & (.03) & (.02) & (.02) & (.03) & (.07) & (.03) & (.02) & (.02) & (.03) \\
\midrule
$15$  & 1.36 & 0.92 & 0.94 & 0.75 & 1.00 & 0.90 & 1.59 & 0.87 & 0.71 & 0.83 & 1.00 \\
      & (.02) & (.02) & (.02) & (.01) & (.02) & (.02) & (.06) & (.02) & (.01) & (.02) & (.02) \\
\bottomrule
\end{tabular}
\end{table}

\subsection{Sub-threshold prompts and full entropy table}
\label{app:sdxl_subgaussian}

We swept fourteen prompts at four guidance scales each. Three
prompts (mountains, flowers, street) were excluded from the body
analysis (Section~\ref{sec:exp_sdxl}, Table~\ref{tab:sdxl_guidance})
because their peak SSA across the four scales did not exceed $1.1$,
a conservative threshold against the upward finite-sample bias of
$\hat S$ relative to the unimodal-Gaussian baseline of $1$
(Appendix~\ref{app:ou_autocovariance}); the body analysis is
restricted to the eleven prompts whose peak SSA does. Two
interpretations of a sub-threshold peak are compatible with our
framework: (i)~genuine near-unimodality on CLIP under SDXL at every
scale tested, or (ii)~under-resolution of slow modes, with the SSA
estimator falling below the population value due to compute-limited
truncation (Proposition~\ref{prop:ssa_monotonicity}). We do not
attempt to discriminate between these for the excluded prompts.

Table~\ref{tab:sdxl_full} reports SSA and $k$NN entropy for all
fourteen prompts. Entropy declines monotonically in $w$ for every
prompt, including the three sub-threshold ones, supporting the
fragmentation-vs-dispersion framing in the body: dispersion (entropy)
and fragmentation (SSA) move in unrelated directions across the
sweep, with entropy collapsing as $w$ grows while SSA peaks at
intermediate $w$ on the prompts where peak SSA clears the baseline.

\begin{table}[H]
\centering
\caption{  Full SSA and $k$NN entropy values for all fourteen
prompts at four guidance scales (SDXL Base~1.0, $N = 200$ trajectories
for SSA, $2{,}000$ images for entropy). Cells show $\hat S$ with
bootstrap SE in parentheses ($B = 2{,}000$). The eleven body prompts
(Table~\ref{tab:sdxl_guidance}) appear above the rule; the three
excluded prompts (peak $\hat S < 1.1$) appear below. Bold marks the
per-prompt SSA peak.}
\label{tab:sdxl_full}
 
\setlength{\tabcolsep}{3pt}
\begin{tabular}{@{}lcccc|cccc@{}}
\toprule
& \multicolumn{4}{c|}{SSA $\hat S$ (SE)} & \multicolumn{4}{c}{$k$NN entropy (nats)} \\
Prompt & $w{=}1$ & $w{=}3$ & $w{=}7.5$ & $w{=}15$ & $w{=}1$ & $w{=}3$ & $w{=}7.5$ & $w{=}15$ \\
\midrule
person    & 1.06\,(.04) & 0.99\,(.02) & \textbf{1.44}\,(.02) & 1.36\,(.02) & $-69$ & $-65$  & $-74$  & $-82$  \\
elderly   & 1.09\,(.05) & 0.94\,(.03) & \textbf{1.35}\,(.03) & 0.92\,(.02) & $-71$ & $-83$  & $-105$ & $-114$ \\
cat       & 0.83\,(.03) & 1.10\,(.02) & \textbf{1.18}\,(.03) & 0.94\,(.02) & $-79$ & $-96$  & $-106$ & $-110$ \\
food      & 0.92\,(.03) & 0.99\,(.02) & \textbf{1.21}\,(.02) & 0.75\,(.01) & $-78$ & $-82$  & $-87$  & $-88$  \\
car       & 0.87\,(.03) & 1.11\,(.03) & \textbf{1.24}\,(.02) & 1.00\,(.02) & $-76$ & $-81$  & $-86$  & $-88$  \\
bat       & 1.04\,(.04) & 1.41\,(.04) & \textbf{1.52}\,(.03) & 0.90\,(.02) & $-74$ & $-95$  & $-107$ & $-112$ \\
crane     & 1.08\,(.04) & 1.56\,(.05) & \textbf{1.88}\,(.07) & 1.59\,(.06) & $-74$ & $-84$  & $-95$  & $-101$ \\
mouse     & 1.02\,(.04) & \textbf{1.48}\,(.04) & 1.38\,(.03) & 0.87\,(.02) & $-77$ & $-106$ & $-118$ & $-121$ \\
seal      & 0.88\,(.02) & \textbf{1.43}\,(.04) & 1.31\,(.02) & 0.71\,(.01) & $-78$ & $-99$  & $-106$ & $-107$ \\
jaguar    & 0.91\,(.04) & \textbf{1.20}\,(.02) & 1.08\,(.02) & 0.83\,(.02) & $-88$ & $-113$ & $-126$ & $-134$ \\
palm      & 1.28\,(.04) & 1.40\,(.03) & \textbf{1.44}\,(.03) & 1.00\,(.02) & $-84$ & $-98$  & $-106$ & $-109$ \\
\midrule
mountains & 1.00\,(.05) & 1.03\,(.03) & \textbf{1.06}\,(.02) & 0.94\,(.02) & $-81$ & $-86$  & $-95$  & $-99$  \\
flowers   & 0.75\,(.03) & 0.78\,(.01) & \textbf{1.02}\,(.02) & 0.72\,(.01) & $-85$ & $-92$  & $-97$  & $-99$  \\
street    & \textbf{1.09}\,(.03) & 1.07\,(.03) & 1.02\,(.01) & 0.74\,(.02) & $-80$ & $-101$ & $-111$ & $-113$ \\
\bottomrule
\end{tabular}
\end{table}

Full prompts (in the order above): \emph{a superrealistic professional
photograph of a person}; \emph{a portrait of an elderly person};
\emph{a photograph of a cat sitting on a windowsill}; \emph{a
photograph of a plate of food at a restaurant}; \emph{a photograph of
a car parked on a street}; \emph{a photo of a bat}; \emph{a photo of
a crane}; \emph{a photo of a mouse}; \emph{a photo of a seal};
\emph{a photo of a jaguar}; \emph{a photo of a palm}; \emph{a
landscape photograph of mountains at sunset}; \emph{a still life
painting of flowers in a vase}; \emph{a street photograph of a busy
city intersection}.

\subsection{Weight comparability across guidance scales}
\label{app:sdxl_weight_comparability}

The compute-limited comparison in Appendix~\ref{app:compute_limited} relies on approximate stability of the slow-mode weights $w_k$ across the comparison. Tables~\ref{tab:weight_comparability_w2} and~\ref{tab:weight_comparability_summary} report the leading slow-mode weight $w_2 = \lambda_1(\hat C^{\mathrm{sym}}(T))/\tr\hat C(T)$ at each (prompt, $w$) cell and the convergence proxy $\hat\rho^2(T)\cdot N$. The truncation-relevant subset (the 6 prompts mountains, car, bat, crane, mouse, palm where $\hat\rho^2(T)\cdot N \geq 0.08$ at all four scales, meaning the autocorrelation remains well above the $1/N$ noise floor and the cell relies on the truncation argument) shows $w_2$ ratios of at most $2.1\times$ across guidance scales (median $1.8\times$).
\clearpage

\begin{table}[t]
\centering
\caption{Leading slow-mode weight $w_2 = \lambda_1(\hat C^{\mathrm{sym}}(T))/\tr\hat C(T)$ at each (prompt, $w$) cell. $R_{w_2}$ is the max/min ratio of $w_2$ across the four guidance scales for each prompt.}
\label{tab:weight_comparability_w2}
\setlength{\tabcolsep}{4pt}
\begin{tabular}{l|cccc|c}
\toprule
Prompt & $w{=}1$ & $w{=}3$ & $w{=}7.5$ & $w{=}15$ & $R_{w_2}$ \\
\midrule
person & 0.82 & 1.10 & 0.47 & 0.58 & 2.3 \\
elderly & 0.72 & 2.39 & 0.54 & 0.84 & 4.4 \\
cat & 2.16 & 0.67 & 0.48 & 0.70 & 4.5 \\
food & 2.21 & 1.30 & 0.82 & 3.36 & 4.1 \\
mountains & 1.38 & 0.82 & 0.67 & 0.81 & 2.1 \\
car & 1.05 & 0.96 & 0.86 & 1.34 & 1.6 \\
flowers & 2.87 & 3.13 & 0.94 & 3.81 & 4.0 \\
street & 0.83 & 1.41 & 1.93 & 4.87 & 5.9 \\
\midrule
bat & 1.12 & 1.09 & 1.01 & 1.90 & 1.9 \\
crane & 1.16 & 1.12 & 0.86 & 0.91 & 1.4 \\
mouse & 1.47 & 1.57 & 1.00 & 1.81 & 1.8 \\
seal & 1.67 & 1.09 & 0.59 & 1.85 & 3.1 \\
jaguar & 1.61 & 1.21 & 1.60 & 2.93 & 2.4 \\
palm & 1.05 & 1.20 & 0.98 & 1.83 & 1.9 \\
\bottomrule
\end{tabular}
\end{table}

\begin{table}[H]
\centering
\caption{Convergence proxy $\hat\rho^2(T)\cdot N$ at $T{=}2000$ across guidance scales (values $\gg 1$ indicate autocorrelation well above the $1/N$ noise floor; the truncation argument applies when this proxy stays $\geq 0.08$ across all four scales). Right block: summary statistics of $R_{w_2}$ and $R_{w_{2+3}}$ across prompt subsets (median / max).}
\label{tab:weight_comparability_summary}
\setlength{\tabcolsep}{4pt}
\begin{tabular}{l|cccc|c}
\toprule
Prompt & $w{=}1$ & $w{=}3$ & $w{=}7.5$ & $w{=}15$ & $(\hat\rho^2 N)_{\min}$ \\
\midrule
person & 0.14 & 0.03 & 1.52 & 1.99 & 0.03 \\
elderly & 0.30 & 0.04 & 1.77 & 0.50 & 0.04 \\
cat & 0.02 & 0.36 & 1.52 & 0.58 & 0.02 \\
food & 0.04 & 0.12 & 0.28 & 0.01 & 0.01 \\
mountains & 0.27 & 0.15 & 0.23 & 0.14 & 0.14 \\
car & 0.12 & 0.25 & 0.34 & 0.08 & 0.08 \\
flowers & 0.02 & 0.01 & 0.10 & 0.03 & 0.01 \\
street & 0.24 & 0.21 & 0.16 & 0.01 & 0.01 \\
\midrule
bat & 0.10 & 0.93 & 2.27 & 0.11 & 0.10 \\
crane & 0.15 & 1.19 & 4.04 & 2.50 & 0.15 \\
mouse & 0.13 & 0.56 & 0.73 & 0.15 & 0.13 \\
seal & 0.03 & 1.28 & 1.33 & 0.05 & 0.03 \\
jaguar & 0.04 & 0.83 & 1.06 & 0.18 & 0.04 \\
palm & 0.44 & 0.36 & 0.50 & 0.10 & 0.10 \\
\midrule
\multicolumn{6}{l}{\textit{Summary: $R_{w_2}$ / $R_{w_{2+3}}$ (median / max)}} \\
All 14 prompts & \multicolumn{4}{r|}{} & 2.4--5.9 / 2.6--5.7 \\
Truncation-relevant (6) & \multicolumn{4}{r|}{} & 1.8--2.1 / 1.8--2.1 \\
\bottomrule
\end{tabular}
\end{table}

\subsection{DA on ``a portrait of an elderly person''}
\label{app:sdxl_elderly}

For ``a portrait of an elderly person'' at $w = 7.5$,
eigendecomposing the CLIP-space autocovariance at
$\tau^* = 2{,}000$ ($\Delta t = 0.0025$, $N = 200$ trajectories,
$d = 768$ CLIP dimensions, $\gamma = d/N = 3.84$) yields a leading
direction substantially distinct from PC1
($|\cos(\mathrm{DA1}, \mathrm{PC1})| = 0.514$, an angle of
$59^\circ$): DA1 correlates strongly with gender (Spearman
$|\rho| = 0.90$, 95\% CI $[0.78, 0.94]$ from a bootstrap over the
$N = 200$ trajectories with $B = 2{,}000$ replicates; $|\cos|$ to a
gender axis $= 0.56$), while PC1 mixes gender with realism and
style. Figure~\ref{fig:sdxl_combined} (right, in the body) shows
the top-5 images sorted along each direction; DA1 separates cleanly
by gender while PC1 mixes the two.
Figure~\ref{fig:elderly_da_pc1_projection} shows the joint
projection of all $2{,}000$ snapshots onto the (DA1, PC1) plane,
making the non-collinearity directly visible.

\begin{figure}[H]
\centering
\includegraphics[width=0.7\textwidth]{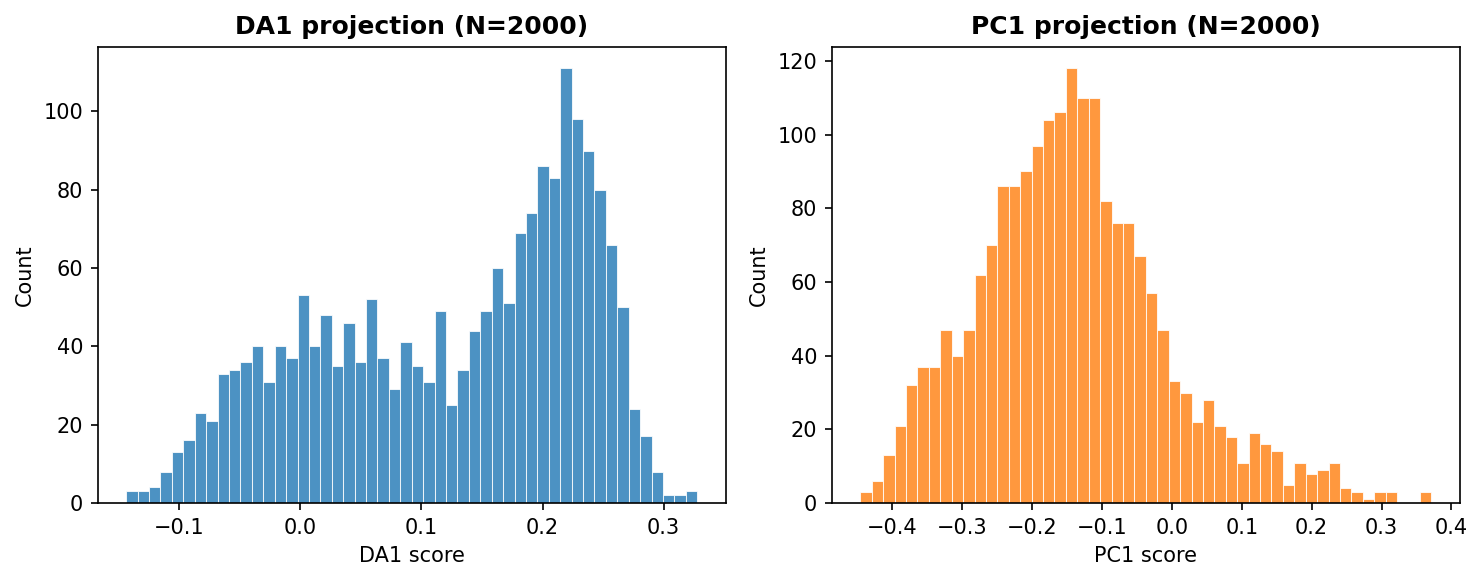}
\caption{  Joint projection of all $N=2{,}000$ trajectory snapshots onto the (DA1, PC1) plane for the ``elderly'' prompt at $w = 7.5$. The two axes are non-collinear ($|\cos(\mathrm{DA1}, \mathrm{PC1})| = 0.514$, an angle of $59^\circ$; Spearman $|\rho|$ with gender $= 0.90$): DA1 separates the distribution along the gender axis while PC1 mixes gender with realism and style.}
\label{fig:elderly_da_pc1_projection}
\end{figure}

The lag $\tau^* = 2{,}000$ is selected as the largest lag at which
the leading eigenvalue $\lambda_1 = 9.96 \times 10^{-3}$ exceeds the
analytic noise floor of Theorem~\ref{thm:limit-spectrum},
following the multi-direction selection rule of
Appendix~\ref{app:lag_selection}. Significance is verified through
two complementary checks shown in
Figure~\ref{fig:elderly_spectrum}. Panel~(a) overlays the analytic
free-convolution density at the naive variance estimate
$\sigma_f^2 = \tr \hat C(\tau^*)/d = 1.12 \times 10^{-4}$, biased
upward by the leading spike, which produces a loose bulk fit and a
loose edge $\lambda_+(\tau^*) \approx 5 \times 10^{-4}$. Panel~(b)
shows the same overlay after one-step peeling of the leading
eigenvalue, yielding $\sigma^2_{\mathrm{bulk}} = 2.29 \times 10^{-5}$
and a tight match between the analytic density and the empirical
bulk; the peeled edge $\lambda_+(\tau^*)$ is roughly two orders of
magnitude below $\lambda_1$. The empirical bulk is centered near
zero with a $\delta_0$ atom of mass $0.479$, the expected mass for
$\gamma = 3.84$ (Theorem~\ref{thm:limit-spectrum}). Panel~(c) shows
an independent Monte-Carlo null matched to CLIP-space covariance
(isotropic Gaussian samples, same $N$, $d$, $\tau^*$, same VAE+CLIP
pipeline): the null distribution's $95$th percentile sits at
$4.11 \times 10^{-3}$, a factor of $2.4\times$ below the observed
$\lambda_1 = 9.96 \times 10^{-3}$. The two checks agree on
significance through different routes, with the Monte-Carlo null
also covering the universality concern that
Theorem~\ref{thm:limit-spectrum} is proved under an i.i.d.\ Gaussian
null while the data passes through a VAE-decoder and CLIP encoder
(Appendix~\ref{app:scope}).
\clearpage

\begin{figure}[H]
\centering
\includegraphics[width=\textwidth]{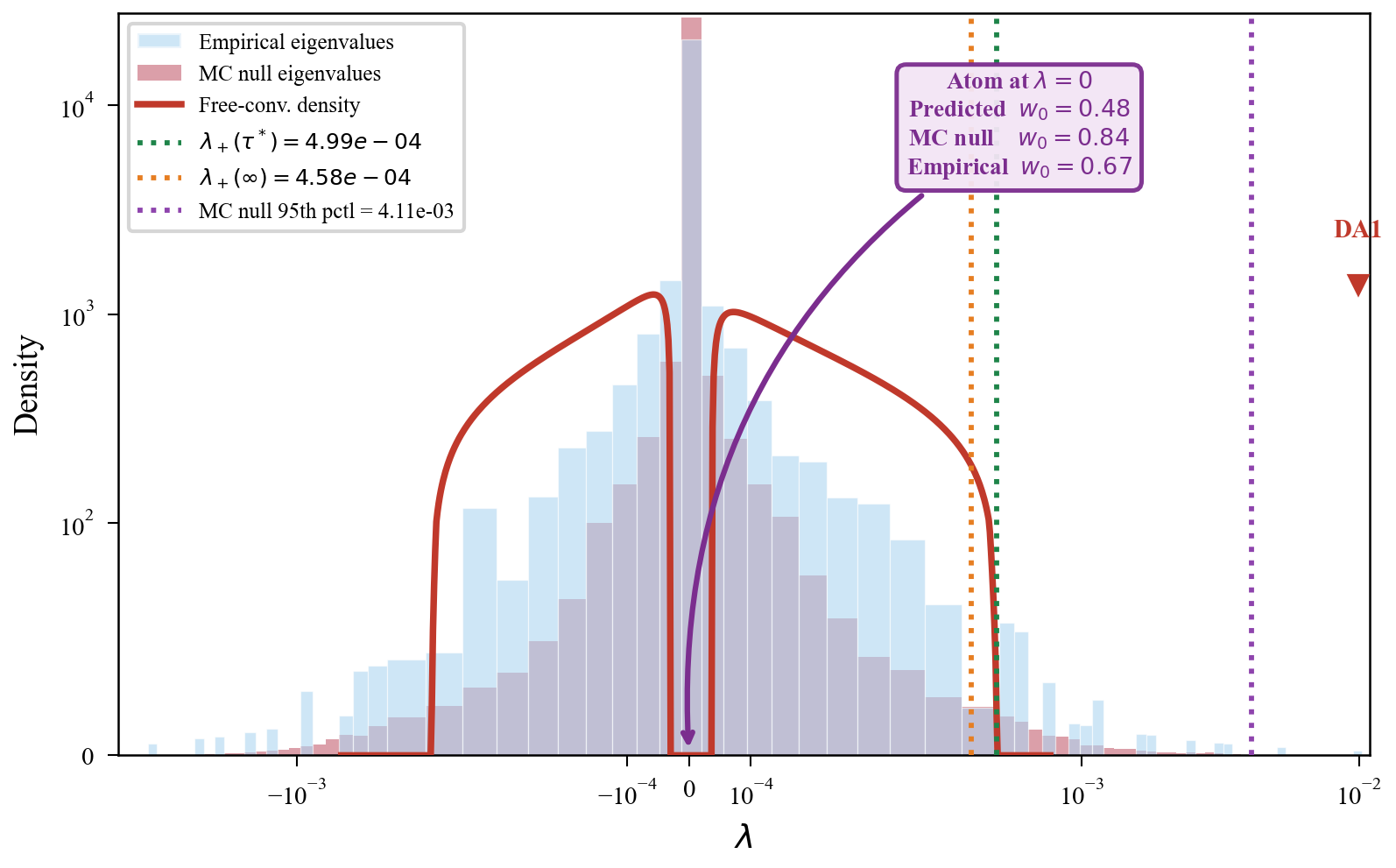}
\caption{  Eigenvalue spectrum of $\hat C^{\mathrm{sym}}(\tau^*)$
for ``a portrait of an elderly person'' at $w = 7.5$, $\tau^* = 2{,}000$
($d = 768$, $N = 200$, $\gamma = 3.84$, $\Delta t = 0.0025$).
\textbf{(a)}~Empirical histogram (light blue) overlaid with the
analytic free-convolution density (orange) at the naive variance
$\sigma_f^2 = 1.12 \times 10^{-4}$; the leading spike at
$\lambda_1 = 9.96 \times 10^{-3}$ (red vertical) sits well above
the loose edge $\lambda_+(\sigma_f^2)$ (orange dashed).
\textbf{(b)}~Same histogram with the analytic density refit at
$\sigma^2_{\mathrm{bulk}} = 2.29 \times 10^{-5}$ after one-step
peeling of the leading eigenvalue; the tight bulk match shows that
peeling recovers the correct noise-floor scale, and $\lambda_1$
sits roughly two orders of magnitude above the peeled edge (red
dashed). The $\delta_0$ atom (mass $0.479$) is the expected
zero-eigenvalue mass at $\gamma = 3.84$.
\textbf{(c)}~Monte-Carlo null distribution of $\lambda_1$
(empirical histogram) generated by simulating the canonical
diffusion on isotropic Gaussian latents with covariance matched to
CLIP space and pushing through the same VAE+CLIP pipeline; the
$95$th percentile (green dashed) at $4.11 \times 10^{-3}$ sits
$2.4\times$ below the observed $\lambda_1$ (red).}
\label{fig:elderly_spectrum}
\end{figure}

\paragraph{Why a finer $\Delta t$ is needed for DA than for SSA.}
The SSA guidance sweep of Section~\ref{sec:exp_sdxl} uses
$\Delta t = 0.005$, chosen so that most (prompt, $w$) cells reach
convergence at $T = 2000$ within compute budget. The DA result
above instead uses $\Delta t = 0.0025$, twice as fine. At
$\Delta t = 0.005$ the integration steps are large enough to
smooth over the intra-distribution barriers that fragment the
elderly prompt, and DA1 collapses toward PC1. SSA is robust to
this because it integrates the trace of the autocovariance, which
under-resolves slow modes at large $\Delta t$ but, for the
fixed-spectral-profile comparisons of Proposition~\ref{prop:ssa_monotonicity}, does not invert relative rankings; DA, by contrast,
identifies a specific direction and is correspondingly more
sensitive to whether the slow mode is resolved at all. We do not
run the full guidance sweep at $\Delta t = 0.0025$ because the
compute cost is prohibitive.

\paragraph{Two-stage significance: noise floor and Silverman bimodality.}
We instantiate the operational null
(Algorithm~\ref{alg:operational_floor}) using the score-oracle
construction of Section~\ref{sec:score_high_dim}: starting from
$x_0 \sim \mathcal{N}(0, \sigma^2 I)$ in latent space, we run the
canonical Langevin using SDXL's own UNet score, but evaluated at
$t_{\mathrm{null}} = 999$ (rather than $t = 1$ for the actual
data). At $t_{\mathrm{null}} = 999$ the training marginal $f_t$ is
essentially $\mathcal{N}(0, \sigma_t^2 I)$ by construction, so the
denoiser learns to approximate the linear iso-Gaussian score
$-x/\sigma_t^2$. The model has no prompt-conditioned metastable
structure at this noise level, so the dynamics stay iso-Gaussian
throughout, while the rest of the pipeline (VAE decode, CLIP
embed, $\hat C(\tau^*)$ eigendecomposition) is identical to the
real run. Across $B_{\mathrm{boot}} = 10{,}000$ replicates with
$N = 200$ trajectories per replicate, every one of the top 10
eigenvalues sits $7.5\text{--}13.5\times$ above the corresponding
null maximum, with empirical p-value at the resolution ceiling
$1 \times 10^{-4}$:

\begin{table}[H]
\centering
\caption{  Iso-Gaussian null via large-$t$ score evaluation
for the elderly prompt at $w = 7.5$, $\tau^* = 2{,}000$ ($d = 768$,
$N = 200$, $B_{\mathrm{boot}} = 10{,}000$). $p_{\mathrm{emp}}$ is
the empirical p-value; Bonferroni threshold is
$\alpha/10 = 5 \times 10^{-3}$.}
\label{tab:elderly_bonferroni}
 
\begin{tabular}{@{}ccccc@{}}
\toprule
$i$ & $\lambda_i^{\mathrm{obs}}$ & null max $|\lambda|$ & $\lambda_i^{\mathrm{obs}}$ / null max & Pass $\alpha/10$? \\
\midrule
$1$  & $9.96 \times 10^{-3}$ & $1.23 \times 10^{-3}$ & $8.08\times$  & \checkmark \\
$2$  & $5.12 \times 10^{-3}$ & $6.83 \times 10^{-4}$ & $7.50\times$  & \checkmark \\
$3$  & $3.47 \times 10^{-3}$ & $4.36 \times 10^{-4}$ & $7.95\times$  & \checkmark \\
$4$  & $3.44 \times 10^{-3}$ & $3.06 \times 10^{-4}$ & $11.23\times$ & \checkmark \\
$5$  & $3.11 \times 10^{-3}$ & $2.45 \times 10^{-4}$ & $12.72\times$ & \checkmark \\
$6$  & $2.45 \times 10^{-3}$ & $2.13 \times 10^{-4}$ & $11.49\times$ & \checkmark \\
$7$  & $2.20 \times 10^{-3}$ & $1.70 \times 10^{-4}$ & $12.96\times$ & \checkmark \\
$8$  & $1.94 \times 10^{-3}$ & $1.48 \times 10^{-4}$ & $13.10\times$ & \checkmark \\
$9$  & $1.78 \times 10^{-3}$ & $1.36 \times 10^{-4}$ & $13.08\times$ & \checkmark \\
$10$ & $1.61 \times 10^{-3}$ & $1.20 \times 10^{-4}$ & $13.48\times$ & \checkmark \\
\bottomrule
\end{tabular}
\end{table}

The noise-floor stage thus passes 10 of the top 10 directions, more
than the bimodality stage will retain. Applying Silverman's
bandwidth test~\cite{silverman1981} to the 1D projections of the
data onto each direction at Bonferroni $\alpha/3 = 0.0167$ gives:
DA1 $p < 0.002$ (reject unimodality), DA2 $p = 0.77$ (do not
reject), PC1 not bimodal. Hartigan's dip test fails to reject on
DA1 ($D$ small, $p = 0.234$), a known limitation on asymmetric
bimodality (Appendix~\ref{app:validation}). Combining the two
stages, only DA1 satisfies both, giving $m = 1$: the gender axis.

\subsection{DA on ``a photo of a crane''}
\label{app:sdxl_crane}

The ``crane'' prompt at $w = 7.5$ has the highest SSA in the sweep
($\hat S = 1.88$, Table~\ref{tab:sdxl_guidance}). The cause is
lexical: SDXL produces both senses of the word, the bird and the
construction equipment, in roughly comparable proportions, so the
output distribution fragments into two well-separated clusters of
comparable variance. Eigendecomposing the CLIP-space autocovariance
at $\tau^* = 2{,}000$ ($\Delta t = 0.0025$, $N = 200$) yields a
leading direction $\mathrm{DA1}$ that resolves this bird-vs-equipment
split: sorting trajectory snapshots along $\mathrm{DA1}$ places all
construction-equipment images at one extreme and all bird images at
the other (Figure~\ref{fig:crane_da_pc1}, top). $\mathrm{PC1}$
recovers the same split (Figure~\ref{fig:crane_da_pc1}, bottom).
Both pass Hartigan's dip test on their 1D projections at
$p < 10^{-3}$. This is the contrast case to the elderly result
(Appendix~\ref{app:sdxl_elderly}): when the dominant axis of
variation is also the dominant fragmentation axis, $\mathrm{DA1}$
and $\mathrm{PC1}$ agree; when fragmentation is orthogonal to the
dominant variance direction, as on the elderly prompt, they diverge.

\clearpage

\begin{figure}[H]
\centering
\includegraphics[width=\textwidth]{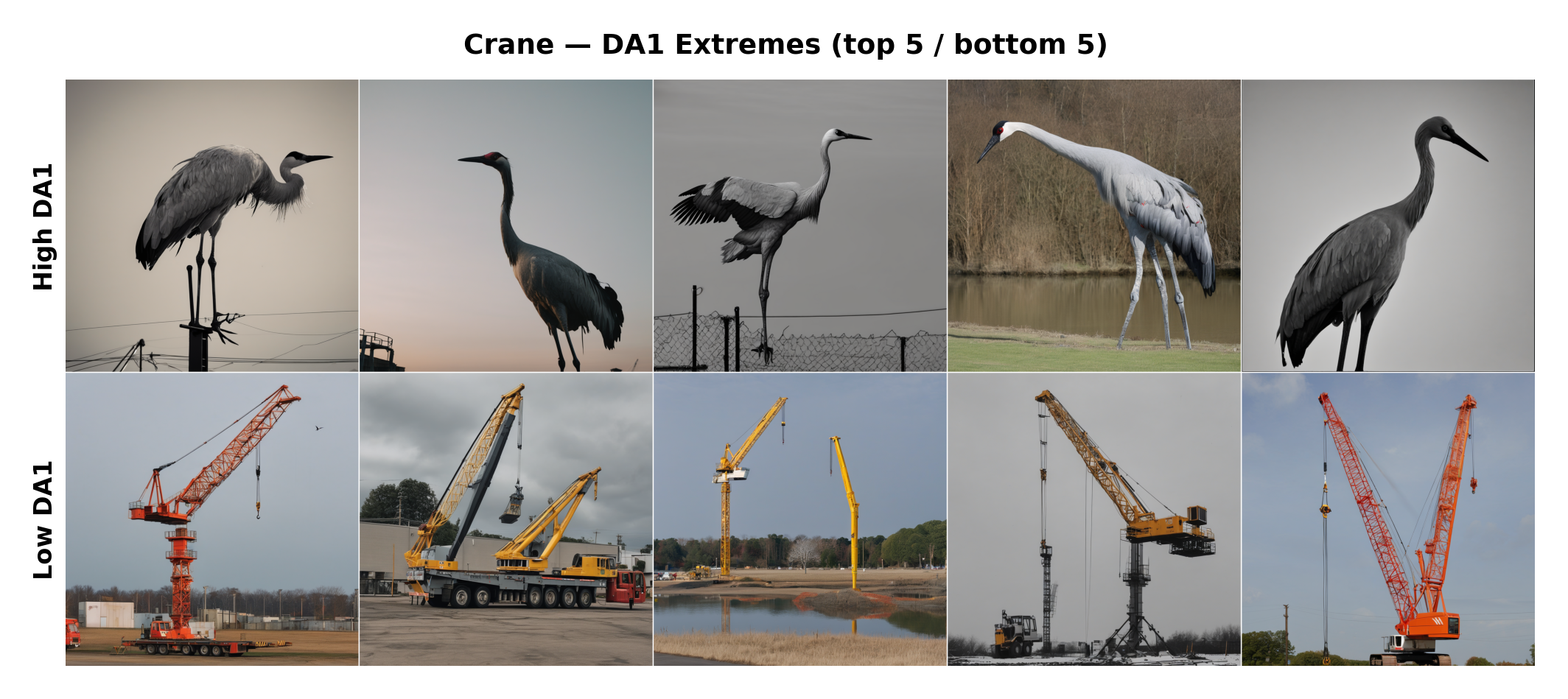}\\[0.5em]
\includegraphics[width=\textwidth]{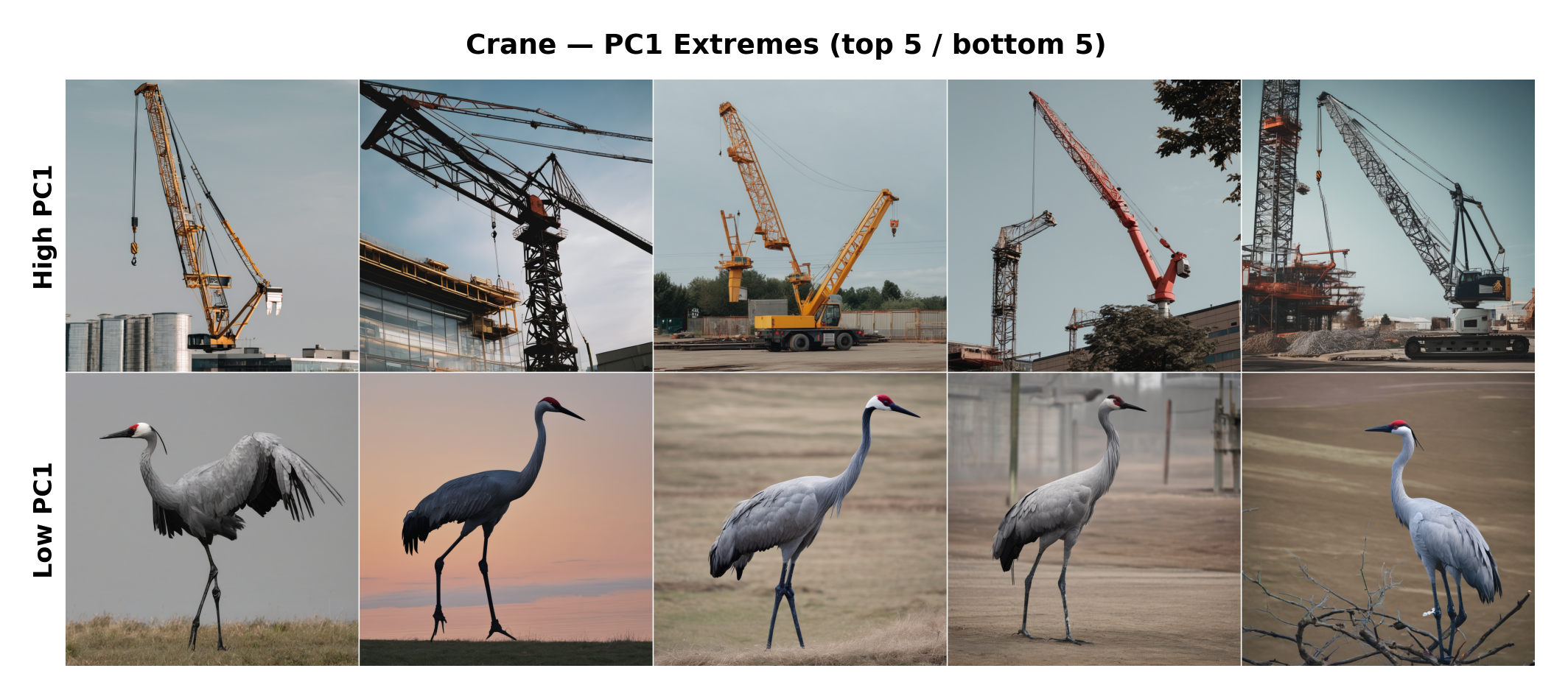}
\caption{  Top-5 images at each end of $\mathrm{DA1}$ (top panel,
top row = high DA1, bottom row = low DA1) and $\mathrm{PC1}$ (bottom
panel, top row = high PC1 = construction cranes, bottom row = low
PC1 = birds) for ``a photo of a crane'' at $w = 7.5$, $\tau^* = 2{,}000$.
Both axes separate the two senses of the prompt: birds at one extreme,
construction equipment at the other. Both pass Hartigan's dip test
($p < 10^{-3}$).}
\label{fig:crane_da_pc1}
\end{figure}

\section{Reproducibility and Hyperparameters}
\label{app:hyperparameters}

SSA and DA share three computational parameters: the time step $\Delta t$, the lag horizon $L$ (equivalently the physical time $T = L \cdot \Delta t$), and the number of trajectories $N$. The discussion below focuses on SSA, where these parameters most directly control the estimator; the DA-specific lag-selection machinery is in Appendix~\ref{app:lag_selection}, and the joint compute-limited regime in Appendix~\ref{app:compute_limited}. Table~\ref{tab:hp_summary} summarizes the role, failure mode, and safeguard for each parameter.

\begin{table}[H]
\centering
\caption{  Summary of SSA hyperparameter sensitivity and safeguards.}
\label{tab:hp_summary}
 
\setlength{\tabcolsep}{4pt}
\begin{tabular}{@{}p{0.05\textwidth}p{0.20\textwidth}p{0.28\textwidth}p{0.37\textwidth}@{}}
\toprule
Param. & Controls & Failure mode & Safeguard \\
\midrule
$\Delta t$ & Discretization fidelity & Biased $\hat\rho(\ell)$, possible ranking distortion &
Richardson ($\Delta t$ vs.\ $\Delta t/2$); verified in Appendix~\ref{app:dt_robustness} \\
$T$ & Dynamic range of score & Reduced discriminatory power &
Proposition~\ref{prop:ssa_monotonicity}: fixed-profile ordering is $T$-stable; otherwise check lag-window stability \\
$N$ & Estimator variance & Ambiguous rankings at small $N$ &
Bootstrap over trajectories (Appendix~\ref{app:sdxl_bootstrap}) \\
\bottomrule
\end{tabular}
\end{table}

The time step $\Delta t$ controls the fidelity of the Euler--Maruyama discretization to the continuous canonical diffusion. If $\Delta t$ is too large, discretization error can corrupt the autocorrelation estimates differently for different distributions, potentially affecting rankings. We recommend a Richardson-type validation: run a short pilot at $\Delta t$ and $\Delta t/2$ and verify that $\hat{\rho}(\ell)$ agrees at matched physical times. We performed this validation for the GMM experiments (Appendix~\ref{app:dt_robustness}). Once this check passes, $\Delta t$ can be fixed for all subsequent comparisons.

 \textbf{T-stability of fixed-profile rankings (population).} For comparisons of the fixed-spectral-profile kind covered by Proposition~\ref{prop:ssa_monotonicity} (one density's spectral profile dominates another's term-by-term), $S_A(T) > S_B(T)$ for every $T > 0$, and increasing $T$ can only widen the gap. For arbitrary pairs of distributions whose spectral weights and eigenvalues differ in different ways, $T$ is a resolution parameter and rankings should be checked for stability over a lag window. At the estimator level, finite-sample variance in $\hat{S}$ is controlled by bootstrap (e.g., Appendix~\ref{app:sdxl_bootstrap}). At small $T$, the spectral factor $(1 - e^{-2\mu_k T})/(2\mu_k) \approx T$ for all $k$, so all eigenvalue contributions collapse and the score loses discriminatory power. As $T$ grows, slow modes separate from fast ones and the ranking becomes easier to resolve empirically. However, at very large lags the estimator $\hat{\rho}(\ell)$ is computed from fewer overlapping windows, increasing its variance. The practical recommendation is to increase $L$ while monitoring the pairwise rankings of the distributions under comparison: once rankings have stabilized over a window of lags, further increases in $T$ improve dynamic range but do not change the outcome.

The trajectory count $N$ controls the statistical precision of $\hat{S}$. Each trajectory is an independent realization of the canonical diffusion, so the variance of the estimator scales as $O(1/N)$. When two distributions have similar true SSA values, the empirical rankings may fluctuate at small $N$ even though the population ordering is fixed. This is a statistical power issue, not a bias issue: the expectation of $\hat{S}$ converges to $S(T)$ for any $N$, but resolving small differences requires large $N$. Bootstrap resampling over trajectories provides confidence intervals on rank differences when needed.

In summary, $\Delta t$ should be validated once and fixed; $T$ should be increased until rankings stabilize over a lag window; and $N$ should be increased if rankings remain ambiguous at stable $T$. For fixed-spectral-profile comparisons (Proposition~\ref{prop:ssa_monotonicity}) the procedure cannot produce false inversions; for arbitrary distributions, stability over a lag window is the operational criterion.

\subsection{Compute-limited and slowly mixing targets}
\label{app:compute_limited}

Both readouts use noise-floor stopping rules: SSA truncates the lag integral at the largest $\tau$ for which $\hat\rho(\tau)^2$ exceeds the trace-CLT floor (Equation~\ref{eq:tmax_stopping}, derived in Appendix~\ref{app:rho_clt}); DA reads $\tau^*$ off the largest $\tau$ at which the top eigenvalues of $\hat C(\tau)$ exceed the free-convolution edge $\lambda_+(\tau)$ from Theorem~\ref{thm:limit-spectrum} (Appendix~\ref{app:lag_selection}). On slowly mixing targets or under tight compute budgets, the principled stopping lag may not be reachable. The fallback differs between the two readouts.

\paragraph{SSA: fixed-profile rankings are preserved; otherwise verify stability.}
When the principled $T_{\max}$ is unreachable, truncate at the largest feasible horizon, common across the distributions being compared. By Proposition~\ref{prop:ssa_monotonicity}, in fixed-spectral-profile comparisons $S(T_{\max})$ is monotone in every $\mu_k$ at every $T_{\max}$: truncation underestimates absolute values but preserves rankings, and faster-mixing competitors only contribute near-zero past their own stopping lag, so the gap between scores can only widen as $T_{\max}$ grows. For arbitrary pairs of distributions whose spectral weights and eigenvalues differ in different ways, this invariance is no longer automatic and rankings should be checked for stability over a lag window. To certify the ordering at finite samples, bootstrap over trajectories or lags and report $\Pr[\hat S_A > \hat S_B]$, as in the SDXL sweep (Appendix~\ref{app:sdxl_bootstrap}). For the SDXL guidance sweep we treat the truncation argument as empirically supported by bootstrap and the weight-comparability diagnostic of Appendix~\ref{app:sdxl_weight_comparability}.

\paragraph{Weight comparability across SDXL guidance scales.}
Proposition~\ref{prop:ssa_monotonicity}'s ranking guarantee holds with slow-mode weights $w_k$ fixed. For SDXL comparisons across guidance scales at fixed prompt, the slow-mode directions are determined by the prompt's semantic structure, which is preserved across $w$; only the barrier height and hence $\mu_k$ vary. We verify this empirically by computing the leading slow-mode weight $w_2 = \lambda_1(\hat C(\tau))/\tr\hat C(\tau)$ at each (prompt, $w$) cell; full per-prompt values are reported in Appendix~\ref{app:sdxl_weight_comparability}. Restricted to the prompts that did not converge at all four scales (the cells where the truncation argument is needed), $w_2$ varies by at most $2.1\times$ across guidance scales — negligible against the spectral formula's $1/(\mu_k+\mu_\ell)$ factor, which varies exponentially in barrier height. The parallel ellipses of Figure~\ref{fig:structure} make this concrete: the isotropic blob has nearly twice the leading-eigenvalue weight share of the bimodal mixture, yet SSA $\approx 1$ vs. $1.8$, since the barrier-driven $\mu_2$ dominates.

\paragraph{DA: criterion failure means no $\tau^*(m)$.}
DA's fallback is built into the selection rule itself
(Appendix~\ref{app:lag_selection}, Section~\ref{sec:da}): if no
$\tau$ jointly satisfies (i) top-$m$ eigenvalue exceedance of the
iso-Gaussian null threshold, (ii) 1D bimodality (dip or Silverman),
and (iii) $\rho(\tau) < 1/e$, then $\tau^*(m)$ does not exist and
$m$ is not recovered. The $\rho(\tau) < 1/e$ guard rules out short
lags where $\hat C(\tau) \approx \hat C(0)$ and DA collapses
toward PCA. When the criterion fails, raising $N$ lowers the noise
floor as $\sigma^2\sqrt{d/N}$ and may extend the feasible window;
otherwise the recovered $\hat m$ is honest at the available budget.

\paragraph{Common thread.}
In both cases, faster-mixing or smaller-barrier competitors are the easy case: their signal saturates earlier in $\tau$ and any reasonable common budget resolves them. A shared feasible horizon thus cannot manufacture false orderings, only fail to resolve close ones — the genuinely hard case is two slow targets whose spectra differ only at lags beyond reach.

\subsection{$\Delta t$ Convergence: Empirical Verification}
\label{app:dt_robustness}

We verify the Richardson recommendation of
Appendix~\ref{app:hyperparameters} by re-running all four 10D GMM
sweeps from Figure~\ref{fig:gmm_sweeps} at three step sizes
$\Delta t \in \{0.005, 0.01, 0.02\}$ (a $4\times$ range), holding
physical time $T_{\mathrm{phys}} = 100$\,s and all other parameters
fixed. Table~\ref{tab:dt_robustness} reports the Spearman rank
correlation between $\hat S$ profiles at each pair of step sizes.
All correlations exceed $0.97$, the mode-count sweep is rank-preserved
exactly at every pair, and no ranking inversions occur in any sweep.
The same validation procedure can be applied to any new target.

\begin{table}[H]
\centering
\caption{  Spearman rank correlation of $\hat S$ profiles across
step sizes for all four 10D GMM sweeps. All correlations $\ge 0.97$;
mode-count is rank-preserved exactly across the full $4\times$ range
of $\Delta t$.}
\label{tab:dt_robustness}
 
\begin{tabular}{@{}lccc@{}}
\toprule
Sweep & $\rho(0.01, 0.005)$ & $\rho(0.01, 0.02)$ & $\rho(0.005, 0.02)$ \\
\midrule
Separation & $1.000$ & $1.000$ & $0.999$ \\
Variance   & $0.991$ & $0.996$ & $0.996$ \\
Weight     & $0.982$ & $0.990$ & $0.971$ \\
Mode count & $1.000$ & $1.000$ & $1.000$ \\
\bottomrule
\end{tabular}
\end{table}
\clearpage

\section{Compute Resources Breakdown}
\label{app:compute}

Table~\ref{tab:compute_breakdown} reports the compute used for each
experiment in this paper. All jobs are single-GPU (one NVIDIA L40S
48GB or L4 24GB per job) on an internal academic cluster, with no
multi-GPU or multi-node training. No model training was performed;
all GPU compute consists of forward-pass inference through pretrained
models and downstream Langevin simulation and analysis.

\begin{table}[h]
\centering
\small
\caption{Compute per reported experiment. Total: $\sim$1{,}100 GPU-hours.}
\label{tab:compute_breakdown}
\begin{tabular}{lllrrr}
\toprule
Experiment & Artifact & Hardware & Jobs & h/job & GPU-h \\
\midrule
SDXL guidance Langevin     & Fig~3, Table~1     & L40S & 28 & 19  & 532 \\
SDXL guidance generation   & Fig~3, Table~1     & L40S & 28 & 7   & 196 \\
SDXL traj-avg recompute    & Table~1            & L40S & 24 & 8   & 192 \\
SDXL MC null               & Table~4            & L40S & 3  & 19  & 57  \\
SDXL bootstrap CIs         & Table~1            & L40S & 59 & 0.5 & 30  \\
SDXL elderly DA            & Fig~4, A8          & L40S & 1  & 24  & 24  \\
SDXL $C(\tau)$ full        & Fig~A7             & L40S & 1  & 20  & 20  \\
SDXL decode grids          & Image grids        & L40S & 28 & 0.5 & 14  \\
dt robustness              & Fig~A3             & L4   & 1  & 12  & 12  \\
GMM sweeps                 & Fig~1, A1          & L4   & 1  & 8   & 8   \\
Alanine KDE-DA             & Fig~5              & L40S & 1  & 6   & 6   \\
Random sweep               & Fig~A2             & L4   & 1  & 4   & 4   \\
Alanine $C(\tau)$          & Fig~5, A4--A6, T2  & L4   & 1  & 3   & 3   \\
Planted spikes             & Fig~A15            & L4   & 1  & 2   & 2   \\
Alanine MC null            & Table~4            & L4   & 10 & 0.1 & 1   \\
Decode crane               & Crane fig          & L40S & 1  & 0.5 & 0.5 \\
Weights                    & App~H              & L4   & 1  & 0.5 & 0.5 \\
Baselines                  & Table~3            & CPU  & 1  & 0.5 & 0.5 \\
On-the-fly figures         & Fig~2a-b, A0, A9--A14 & CPU & --- & --- & $<1$ \\
\midrule
\textbf{Total reported} & & & & & \textbf{$\sim$1{,}103} \\
\bottomrule
\end{tabular}
\end{table}

\subsection{Licenses}

\begin{table}[H]
\centering
\caption{Licenses for all assets used in this paper.}
\label{tab:licenses}
\begin{threeparttable}
\begin{tabular}{lll}
\toprule
\textbf{Asset} & \textbf{License} & \textbf{Version} \\
\midrule
SDXL-Base\tnote{1}             & CreativeML Open RAIL++-M & 1.0 \\
CLIP ViT-L/14\tnote{2}         & MIT                      & --  \\
HuggingFace Diffusers\tnote{4} & Apache 2.0               & --  \\
PyTorch\tnote{3}               & BSD-3                    & --  \\
WandB\tnote{4}                 & MIT                      & --  \\
MDShare\tnote{5}               & CC BY 4.0                & --  \\
PyEMMA\tnote{6}                & LGPL-3.0                 & 2.x \\
\bottomrule
\end{tabular}
\begin{tablenotes}
\footnotesize
\item[1] \url{https://huggingface.co/stabilityai/stable-diffusion-xl-base-1.0}
\item[2] \url{https://github.com/openai/CLIP} (MIT inferred from repo; no license specified on model page)
\item[3] \url{https://github.com/huggingface/diffusers}
\item[4] \url{https://github.com/pytorch/pytorch}
\item[5] \url{https://github.com/wandb/wandb}
\item[6] \url{https://markovmodel.github.io/mdshare}
\item[7] \url{https://github.com/markovmodel/PyEMMA}
\end{tablenotes}
\end{threeparttable}
\end{table}

\section{Code Availability}
\label{app:code}

Code to reproduce all experiments, along with configuration files and a README documenting environment setup and per-experiment commands, is available at \url{\repomain}.



\end{document}